\newcommand{\R}{\mathbb{R}}
\renewcommand{\hat}{\widehat}
\renewcommand{\tilde}{\widetilde}
\DeclareMathOperator*{\argmax}{arg\,max}
\DeclareMathOperator{\E}{\mathbb{E}}
\newcommand{\diff}{\text{d}}
\theoremstyle{definition}
\newtheorem{theorem}{Theorem}[section]
\newtheorem*{theorem*}{Theorem}
\newtheorem{lemma}[theorem]{Lemma}
\newtheorem{definition}[theorem]{Definition}
\newtheorem{proposition}[theorem]{Proposition}
\newtheorem{remark}[theorem]{Remark}
\newtheorem{claim}[theorem]{Claim}
\newtheorem{example}[theorem]{Example}
\newtheorem*{example*}{Example}
\newcommand{\cF}{{\mathcal F}}
\newcommand{\cA}{{\mathcal A}}
\newcommand{\act}{a}
\newcommand{\bestact}{a^*}
\newcommand{\vact}{\bm{\act}}
\newcommand{\actsp}{A}
\newcommand{\util}{u}
\newcommand{\decisionu}{U}
\newcommand{\state}{\theta}
\newcommand{\statesp}{\Theta}
\newcommand{\prob}{p}
\newcommand{\expect}[2]{{\mathbf{E}}_{#1}\left[#2\right]}
\newcommand{\score}{S}
\newcommand{\robustcal}{\textsc{CDL}}
\newcommand{\lcal}[1]{K_{#1}}
\newcommand{\ind}[1]{\mathbb{I}\left[#1\right]}
\newcommand{\swapS}{\predswap}
\newcommand{\pred}{\prob}
\newcommand{\vpred}{\bm{\pred}}
\newcommand{\vstate}{\bm{\state}}
\newcommand{\empp}{\hat{\pred}}
\newcommand{\vempp}{\bm{\empp}}
\newcommand{\swapV}{\textsc{VCDL}}
\newcommand{\breg}{\textsc{Breg}}
\newcommand{\vbreg}{\textsc{VBreg}}
\newcommand{\reals}{\mathbb{R}}
\newcommand{\actswap}{\textsc{Swap}}
\newcommand{\predswap}{\textsc{PSwap}}
\newcommand{\q}{q}
\newcommand{\empq}{\hat{\q}}
\newcommand{\qcount}{n}
\newcommand{\kink}{\mu}
\newcommand{\G}{G}
\newcommand{\expert}{l}
\newcommand{\w}{w}
\newcommand{\vw}{\bm{\w}}
\newcommand{\msmwc}{\mathcal{A}}
\newcommand{\s}{s}
\newcommand{\asympO}{\Tilde{O}}
\newcommand{\regret}{R}
\newcommand{\prednoround}{\Tilde{\pred}}
\newcommand{\vprednoround}{\bm{\prednoround}}
\newcommand{\LL}{\mathcal{D}}
\newcommand{\distcal}{\textsc{distCal}}
\newcommand{\smooth}{\textsc{smCal}}
\newcommand{\external}{\textsc{Ext}}
\newcommand{\ucal}{\textsc{UCal}}
\newcommand{\sps}[1]{^{(#1)}}
\newcommand{\ece}{\textsc{ECE}}
\newcommand{\cfdl}{\textsc{CFDL}}
\newcommand{\lunjia}[1]{{\color{red}  [\text{Lunjia:} #1]}}
\title{Calibration Error for Decision Making\footnote{A previous version of the paper was titled ``Predict to Minimize Swap Regret for All Payoff-Bounded Tasks''. We are grateful to Jason Hartline for his helpful comments and for suggesting the name of our Calibration Decision Loss. We also thank Mingda Qiao for helpful discussions about the online calibration literature.}}
\author{Lunjia Hu\thanks{Work done while LH was a PhD student at Stanford, supported by Moses Charikar’s and Omer Reingold’s Simons Investigators awards and the Simons Foundation Collaboration on the Theory of Algorithmic Fairness.}\\
Harvard University\\
\texttt{lunjia@g.harvard.edu}
\and Yifan Wu\thanks{Supported by NSF CCF-2229162. }\\
Northwestern University\\
\texttt{yifan.wu@u.northwestern.edu}}
\date{}
\begin{document}

\maketitle

\begin{abstract}
Calibration allows predictions to be reliably interpreted as probabilities by decision makers. 
We propose a decision-theoretic calibration error, the Calibration Decision Loss (\robustcal), defined as the maximum improvement in decision payoff obtained by calibrating the predictions, where the maximum is over all payoff-bounded decision tasks. Vanishing $\robustcal$ guarantees the payoff loss from miscalibration vanishes simultaneously for all downstream decision tasks. We show separations between $\robustcal$ and existing calibration error metrics, including the most well-studied metric Expected Calibration Error ($\ece$). Our main technical contribution is a new efficient algorithm for online calibration that achieves near-optimal $O(\frac{\log T}{\sqrt{T}})$ expected $\robustcal$, bypassing the $\Omega(T^{-0.472})$ lower bound for $\ece$ by \citet{sidestep}. 

\end{abstract}

\section{Introduction}

Given a sequence of predictions indicating, say, the chances of rain each day for a period of $T$ days, an intuitive way to assess the quality of these predictions is to check for calibration: for example, among the days predicted to have a 60\% chance of rain, is the fraction of rainy days indeed 60\%?
Formally, suppose a prediction of $\q_i\in [0,1]$ is received on $\qcount_i$ days, and among the $\qcount_i$ days, $m_i$ days are actually rainy.
Calibration, a notion that originated from the forecasting literature \citep{dawid}, requires the relationship 
$m_i=\q_i \qcount_i$
to hold for every prediction value $\q_i$. 
The most well-studied calibration error metric is the Expected Calibration Error (\ece), defined as $\frac{1}{T}\sum_i|m_i - q_in_i|$, or equivalently written as the average prediction bias $\frac{1}{T} \sum_{i}n_i|q_i - \frac{m_i}{n_i}|$, 
where $|q_i - \frac{m_i}{n_i}|$ is the absolute bias when $q_i$ is predicted. This gives a well-defined calibration error metric: a predictor is perfectly calibrated if and only if its $\ece$ is zero.

In this work, we focus on the value of calibration for decision making. 
In the economics literature, the value of predictions is quantified by the payoff from decision making \citep{blackwell1951comparison}. Consider a decision maker who needs to choose an action $\act\in \actsp$ 
to maximize their decision payoff $\decisionu(\act,\state)$, an arbitrary function of the action $\act$
(e.g.\ take an umbrella or not) and the true binary state $\state\in \{0,1\}$ 
(e.g.\ rainy or not). 
When the true state $\theta$ 
is unknown and only a prediction $p\in [0,1]$ 
is given, the decision maker can trust the prediction and take the action $a$ that maximizes the expected payoff $\E_{\state\sim \pred}[\decisionu(\act,\state)]$
, where ``trust'' means the decision maker \emph{assumes} that the distribution of the state  
is indeed as predicted ($\Pr[\state = 1] = p$). 
Thus, the prediction manifests its value in the payoff received by the decision maker who best responds to the prediction as if the predictions were correct.


In the following examples, we explain how calibrated predictions are \emph{trustworthy} for all downstream decision-makers, regardless of their specific decision tasks. \Cref{example:intro-miscalibrated} shows a miscalibrated predictor and \Cref{example:intro-calibrated} shows a calibrated predictor.
\begin{example}
\label{example:intro-miscalibrated}
Among the $T$ samples of (prediction, state), $\frac{T}{2}$ predictions are $0.4$ and the other half are $0.6$. When the prediction is $0.4$, the actual empirical frequency of state $1$ is $0.2$; when the prediction is $0.6$, the actual empirical frequency of state $1$ is $0.8$. 
\begin{center}
    \begin{tabular}{r|ccccccccccc}
 Prediction $\pred_t$ & 0.4 & 0.6 & 0.4 & 0.4 & 0.6 & 0.4 & 0.6 & 0.6 & 0.4 & 0.6 &  \dots\\
 State $\state_t$   & 0 & 1 & 1 & 0 & 1 & 0 & 1 & 0 & 0 & 1 & \dots
\end{tabular}
\end{center}
 \end{example}
 \noindent After observing enough samples of (prediction, state), the decision maker will figure out the ``meaning'' of each prediction: $0.4$ actually means a $0.2$ probability of state being $1$, and $0.6$ actually means $0.8$. 
Based on this observation, the decision maker will no longer \emph{trust} the original prediction values when making decisions. 
Instead, if we calibrate the predictions by changing them to the actual empirical frequencies (e.g.\ $0.4\to 0.2$, and $0.6\to 0.8$, see \Cref{example:intro-calibrated}), the decision maker 
 can now trust the predictions by interpreting them as the true probabilities when making decisions. 
 \begin{example}
     \label{example:intro-calibrated}
     Among the $T$ samples of (prediction, state), $\frac{T}{2}$ predictions are $0.2$ and the other half are $0.8$. When the prediction is $0.2$, the actual empirical frequency of state $1$ is indeed $0.2$; when the prediction is $0.8$, the actual empirical frequency of state $1$ is indeed $0.8$. 
\begin{center}
    \begin{tabular}{r|ccccccccccc}
 Prediction $\pred_t$ & 0.2 & 0.8 & 0.2 & 0.2 & 0.8 & 0.2 & 0.8 & 0.8 & 0.2 & 0.8 &  \dots\\
 State $\state_t$   & 0 & 1 & 1 & 0 & 1 & 0 & 1 & 0 & 0 & 1 & \dots
\end{tabular}
\end{center}
 \end{example}

\subsection{The Decision Loss from Miscalibration}

The examples above seem to suggest that calibration is a \emph{valuable} property of a predictor for downstream decision makers. Is there a way to precisely \emph{quantify} this value? Does the widely used calibration error $\ece$ give the right quantification? Our main result reveals a fundamental gap between $\ece$ and the value of calibration for decision making. In fact, $\ece$ overestimates the decision loss from miscalibration. 
 
To provide some intuition for this gap, let us
consider the miscalibrated predictor in \Cref{example:intro-miscalibrated} and a decision task defined as follows, where we show that $\ece$ does not give the payoff loss due to miscalibration.

The  decision task has binary states $\statesp = \{0, 1\}$ (not rainy or rainy) and actions $A = \{0, 1\}$, where $1$ stands for taking an umbrella and $0$ for not. The payoff is $1$ if the decision maker's action matches the state and $0$ otherwise. 
Best responding to the prediction, the decision maker chooses to bring an umbrella when the predicted chance of rain is at least half ($p_t\geq 0.5$). The example is visually explained in \Cref{fig:intro-example}. 
\begin{figure}[htbp]
    \centering
    \begin{tikzpicture}[scale = 0.5]
    \draw[thick] (0,0) -- (8,0);

    \foreach \x in {0, 1,  3, 4, 5, 7, 8} {
        \draw[thick] (\x,0.2) -- (\x,-0.2);
    }
    
    \node[below] at (1, -0.3){\tiny{0.2}};
    \node[below] at (3, -0.3){\textcolor{blue}{\textbf{{\tiny{0.4}}}}};
    \node[below] at (5, -0.3){\textcolor{blue}{\textbf{\tiny{0.6}}}};
    \node[below] at (7, -0.3){\tiny{0.8}};

    \node[above] at (-2, -0.3){prediction};
    \node[below] at (0, -0.3){$0$};
    \node[below] at (8, -0.3){$1$};
    \draw[red, thick] (4,0.5) -- (4,-0.5);
    
    \node[above] at (4, 0.5) {$\frac{1}{2}$};
    \node[below] at (4, -0.7){(threshold)};
\end{tikzpicture}
    \caption{In this example, $\ece$ overestimates the decision loss from miscalibration for a specific decision task. The plot visualizes the predictions in $[0, 1]$. The best-response decision rule changes action at threshold $\sfrac{1}{2}$ (red).  
    When the miscalibrated predictor predicts $0.4$ (blue), the actual empirical frequency is $0.2$; and when $0.6$ (blue) is predicted, the empirical frequency is $0.8$. Miscalibration induces no loss to the decision maker, since in both cases the prediction and the corresponding empirical frequency lie on the same side of the threshold, recommending the same action. }
    \label{fig:intro-example}
\end{figure}
The average prediction bias, $\ece$, of the (miscalibrated) predictor in \Cref{example:intro-miscalibrated} is $0.2$. However, the decision maker's payoff loss from this miscalibration is actually $0$. 
This is because the decision maker would take the same actions even if the predictions are recalibrated as in \Cref{example:intro-calibrated}. Indeed, a prediction $0.4$ and the recalibrated version $0.2$ are both below the decision threshold of $0.5$, leading to action $0$. Similarly, a prediction $0.6$ and the recalibrated version $0.8$ both lead to action $1$. 


In this example, for the particular decision maker, the loss caused by miscalibration is zero, i.e.\ recalibrating the predictions provides zero value. In contrast, $\ece$ is $0.2 > 0$, overestimating the decision loss from miscalibration. While this example only considers a specific decision task with two actions, we ask the natural question for all decision tasks with arbitrarily many actions - does $\ece$ simultaneously and significantly overestimate the decision loss for \emph{every} downstream task?


To answer this question, we propose a calibration error for decision making, the Calibration Decision Loss ($\robustcal$), which quantifies the worst-case payoff loss caused by miscalibration of a predictor. Our main result demonstrates a fundamental separation between $\robustcal$ and existing error metrics including $\ece$. Inspired by this separation, we give a new efficient  algorithm for online prediction that minimizes $\robustcal$ at a faster rate than what is possible for $\ece$ (see \Cref{sec:intro-online} for more details).

Our definition of $\robustcal$ can be decomposed into two steps. First, for a fixed decision task, miscalibration incurs a loss to the decision maker, which we call Calibration Fixed Decision Loss (CFDL). 
Consider a decision maker who best responds to a miscalibrated predictor as in \Cref{example:intro-miscalibrated}. We define CFDL as the payoff increase (averaged over $T$ rounds) that the decision maker could have achieved if we had calibrated the predictions as in \Cref{example:intro-calibrated}.
Second, since perfect calibration guarantees trustworthiness for all decision makers, we measure the calibration error by defining CDL as the worst-case (i.e., maximum) CFDL over all decision tasks with payoffs bounded in $[0, 1]$. 
Here we restrict the payoffs to be bounded solely for the purpose of normalization: the CFDL of a decision task scales proportionally if we multiply the payoffs by any positive constant, and it remains the same if any constant is added to the payoffs. Beyond that, we make no additional restrictions on the decision tasks. In particular, we allow each decision task to have an arbitrarily large action space $A$, and the CDL is the maximum CFDL over all such tasks.
Thus, vanishing CDL implies that the payoff loss from miscalibration vanishes for all decision tasks.

Similar to ECE, our CDL is a well-defined calibration error metric, with the basic property that a sequence of predictions is perfectly calibrated if and only if its CDL is zero, as implied by \cite{foster1997calibrated}.%
\footnote{In the binary decision task in \Cref{fig:intro-example}, the $\cfdl$ of the miscalibrated predictor (\Cref{example:intro-miscalibrated}) is $0$. However, other decision tasks exist where the miscalibrated predictor has positive $\cfdl$, leading to a non-zero $\robustcal$.}
However, we show that CDL is different from, and is not even a constant factor approximation of existing calibration error metrics, including $\ece$, the $\lcal{2}$  calibration error (i.e.\ the expected squared prediction bias), the smooth calibration error, and the distance to calibration \citep{utc}.
%
%
%
In particular, we observe that ECE is often a significant overestimation of CDL.
As a strong demonstration of this observation, we give an efficient algorithm for online prediction that minimizes CDL at a near-optimal rate of $O(T^{-1/2}\log T)$, which is faster than what is possible for ECE: it surpasses the $\Omega(T^{-0.472})$ lower bound for ECE by \citet{sidestep}. We discuss this result with more details in \Cref{sec:intro-online} below.

\subsection{Online  Calibration}
\label{sec:intro-online}
Let us now consider the algorithmic task of making sequential predictions for $T$ rounds. The goal is to achieve asymptotic calibration, meaning that the predictions are arbitrarily ``close'' to perfect calibration for sufficiently large $T$, with ``closeness'' measured by some calibration error. Though it may seem impossible, 
a classic and remarkable result by \citet{foster1998asymptotic} shows that asymptotic calibration can be achieved  
without \emph{any} knowledge of what the state will be in each round. 

Concretely, \citet{foster1998asymptotic} consider the following \emph{online binary prediction} setting. The algorithm (i.e., predictor) interacts with an adversary for $T$ rounds. In each round $t = 1,\ldots,T$, the predictor makes a prediction $\pred_t\in [0,1]$ and the adversary reveals the true state $\state_t\in \{0,1\}$. 
The only knowledge that the predictor can use to produce $p_t$ is the history $(p_1,\theta_1,\ldots,p_{t-1},\theta_{t-1})$, which may contain zero information about the new state $\theta_t$. The adversary, on the other hand, can choose $\theta_t$ arbitrarily, based on the history \emph{and the predictor's strategy}, so as to maximize the calibration error of the predictions. 

\citet{foster1998asymptotic} focus on $\ece$ as the metric for calibration error. They prove the existence of a randomized algorithm that guarantees $O(T^{-1/3})$ expected ECE, where the expectation is over the randomness of the algorithm. 
Their use of a randomized algorithm breaks the impossibility of asymptotic calibration if a deterministic prediction strategy were used.
To see this impossibility, if the algorithm predicts deterministically, the adversary can pick $\theta_t = 0$ whenever the prediction $p_t$ is above $0.5$, and $\theta_t = 1$ otherwise, yielding $\ece\ge 1/2$. In contrast, the predictor in \cite{foster1998asymptotic} draws $p_t$ randomly from a distribution. The adversary, knowing the predictor's strategy, can choose $\theta_t$ based on this distribution, but it can no longer choose $\theta_t$ based on the realized value of $p_t$ drawn from the distribution.

This remarkable first result of \cite{foster1998asymptotic} opened up the literature of online asymptotic calibration, with the focus of error metric mainly on $\ece$.  
On the upperbound side, subsequent work constructs polynomial-time algorithms for $\ece$ minimization \citep[e.g.][]{foster99,foster2021hedging}. On the lowerbound side, a recent breakthrough of \citet{sidestep} proves an $\Omega(T^{-0.472})$ lower bound for ECE, improving over the long-known natural $\Omega(T^{-1/2})$ bound.%
\footnote{Subsequent to our work, \citet{dagan2024improved} show there exists a sequential prediction strategy that achieves $O(T^{-\sfrac{1}{3}-\varepsilon})$ expected $\ece$ for some $\varepsilon > 0$. They also give an improved lowerbound for $\ece$ over the $\Omega(T^{-0.472})$ in \citet{sidestep}.}


It is unclear, however, that for decision making, ECE is the calibration error metric we wish to optimize.
The economic value of making good predictions lies in helping  downstream decision makers achieve better payoff. 
If the goal of calibration is to generate trustworthy predictions that induce no payoff loss to all decision makers, what we should really minimize is  $\robustcal$, rather than ECE. 


Prior to our work, nothing was known about $\robustcal$ minimization beyond what is implied by minimizing $\ece$. Specifically, the relationship $\robustcal \le 2\,\ece$ can be inferred from \citet{kleinberg2023u}. 
Therefore, any ECE minimization algorithm also guarantees that $\robustcal$ vanishes at the same rate (i.e., up to a factor of $2$). 
The interesting question is to go beyond this implication: can we achieve a better error rate for CDL than that is possible for ECE, surpassing the $\Omega(T^{-0.472})$ lower bound?

Our work gives a positive answer.
We give an efficient randomized algorithm that guarantees $O(T^{-1/2}\log T)$ expected $\robustcal$ (\Cref{thm:sqrt MSR}).
This error rate surpasses the $\Omega(T^{-0.472})$ lowerbound for $\ece$ and is optimal up to a logarithmic factor: there is a natural $\Omega(T^{-1/2})$ lower bound for CDL minimization from drawing each state $\theta_t$ independently and uniformly from $\{0,1\}$.\footnote{Consider the decision task with two actions ($A = \{0,1\}$) and the 0-1 payoff function $U(a,\theta) = \ind{a = \theta}$. When the states are independent and unbiased coin flips, the total payoff drops below $T/2 - \Omega(\sqrt T)$ with constant probability, inducing $\Omega(T^{-1/2})$ CFDL.} By definition, our algorithm guarantees that for every downstream decision task, the payoff loss from miscalibration ($\cfdl$) vanishes at the same near-optimal rate simultaneously.

Our work establishes a comprehensive understanding of the optimal rate for CDL, compared to the significant gaps in the current best upper and lower bounds for the other two recently popular metrics in online asymptotic calibration (see \Cref{sec:related} for more details):
\begin{itemize}
    \item $\ece$: $O(T^{-1/3})$ and $\Omega(T^{-0.472})$ \citep{foster1998asymptotic,sidestep};
    \item Distance to calibration \citep{utc}: $O(T^{-1/2})$ and $\Omega(T^{-2/3})$ \citep{qiao-distance, elementary}.
\end{itemize}

Certainly, our result would be impossible without a  separation between $\robustcal$ and $\ece$. This separation has been hinted in our examples earlier, and we will give a more in-depth explanation in \Cref{sec:tech-overview} with an overview of the techniques in this paper.

Our result implies a substantial strengthening of an independent result by \citet{roth2024forecasting} on \textit{swap regret} minimization. Swap regret minimization has been studied extensively in the literature of online learning \citep{hart2000simple, blum-mansour}. Consider a fixed decision task repeated for $T$ rounds. The swap regret is the payoff increase (averaged over $T$ rounds) when the decision maker is allowed to apply a mapping $\sigma:\actsp\to\actsp$ and swap each action to another action in hindsight. The swap regret is stronger than the more commonly studied external regret, which only allows the decision maker to swap each action to a fixed action in hindsight.  \citet{foster1999regret} have observed that a predictor is perfectly calibrated if and only if it guarantees no swap regret for every downstream decision maker who trusts the prediction. 
\citet{roth2024forecasting} propose an algorithm that achieves $O\Big(|\actsp|\sqrt{\frac{\log T}{T}}\Big)$ swap regret for every downstream decision maker, where $|\actsp|$ is the number of actions. 

In our paper, our algorithm guarantees the same near-optimal $O(\frac{\log T}{\sqrt{T}})$ swap regret for every downstream decision maker simultaneously, removing the dependence on the number $|\actsp|$ of actions in \citet{roth2024forecasting}. 
This  directly follows from the fact that swap regret is upperbounded by $\cfdl$ for a fixed a decision task. To see this, first recall that $\cfdl$ is the improvement in payoff when predictions are calibrated to the conditional empirical frequencies. While the calibrated predictions may suggest different actions to the decision maker, this improvement can be equivalently written as the regret when, in hindsight, the decision maker is allowed to swap actions whenever they receive a different prediction. 
Thus, $\cfdl$ is stronger since the modification rule is finer-grained than that of the swap regret. 
If two predictions suggest the same best-response action to a decision maker, the modification rule in $\cfdl$ allows the decision maker to swap the actions differently, while the swap regret does not.


The \textit{U-calibration} error $\ucal$, introduced by \citet{kleinberg2023u}, is closely related to our $\robustcal$, despite being qualitatively weaker (i.e.\ $\ucal$ being zero is necessary but insufficient for calibration). 
Both errors are defined as the maximum payoff increase over decision tasks:   
CDL is the maximum payoff increase from calibrating the predictions; whereas U-calibration is the maximum payoff increase (a.k.a.\ \emph{external regret}) from changing every prediction $p_t$ to the best fixed prediction (i.e., predicting the overall average $(\theta_1 + \cdots + \theta_T)/T$ every round).%
\footnote{For a fixed decision task, the external regret may be negative, but the U-calibration error is non-negative because it is the maximum over all payoff-bounded decision tasks (including the degenerate task where the payoff is always zero).} Therefore, the U-calibration error lowerbounds CDL. 
To see why $\ucal$ is not a well-defined calibration error, consider the miscalibrated predictor such that $(p_t,\theta_t) = (1/4, 0)$ for $T/2$ rounds, and $(p_t,\theta_t) = (3/4,1)$ for the remaining $T/2$ rounds. The miscalibrated predictor always induces better or equal payoff compared to the best fixed prediction $1/2$, giving zero U-calibration error. See a detailed discussion in \Cref{example:u calib}.

While our result appears very similar to the result of \citet{kleinberg2023u}, we use substantially different techniques since our $\robustcal$ is qualitatively stronger. 
\citet{kleinberg2023u} give a randomized algorithm that guarantees an optimal $O(T^{-1/2})$ expected U-calibration error. 
Their prediction algorithm 
ensures that each best-responding decision maker takes actions as if they are locally running the classic Hedge algorithm \citep[see][]{mw}, 
which is specific to the external regret. Thus, their guarantee does not directly extend to our CFDL. 

\vspace{3mm}

\subsection{Results Overview}

We discuss the connections between   $\robustcal$ and other calibration error metrics in \Cref{sec:intro-calibration error metric}. We describe our technical ideas behind our main result for online calibration in \Cref{sec:tech-overview}. 

\subsubsection{Properties of  $\robustcal$ }

\label{sec:intro-calibration error metric}

We propose $\robustcal$ as a calibration error metric and study its properties. In \Cref{sec:computation}, we show, for general non-binary state space, $\robustcal$  can be computed in time polynomial in the size of the prediction space $|Q|$ and the state space $|\statesp|$ by solving a linear program. 

In \Cref{sec:connection}, we discuss the separation between $\robustcal$ and other calibration error metrics, including  $\ece$, the $\lcal{2}$ calibration error, and the smooth calibration error. We consider the average calibration error calculated on $T$ samples of the prediction and the state. Our results show previous results on minimizing $\lcal{2}$ and $\smooth$ do not directly apply. We also show the separation between $\robustcal$ and the U-calibration error $\ucal$ from \cite{kleinberg2023u}, where $\ucal = 0$ is necessary but not sufficient for the predictions to be calibrated. 
\begin{itemize}
    \item $\ece$ is polynomially related\footnote{If two error metrics $A$ and $B$ are polynomially related, we can find two polynomial functions of error $A$ as upperbound and lowerbound of error $B$, respectively. } to $\robustcal$. 
    \begin{align*}
        \ece^2\leq &\robustcal\leq 2\ece
\end{align*}
We give examples where inequalities are asymptotically tight. In fact, the lower bound is attained by the same  example in \Cref{sec:tech-overview}.
\item $\lcal{2}$, defined as average squared bias,  is polynomially related to $\robustcal$. 
\begin{align*}
        \lcal{2}\leq &\robustcal\leq 2\sqrt{\lcal{2}}.
\end{align*}
We give examples where inequalities are asymptotically tight. There exists an online algorithm that achieves $\asympO(\frac{1}{\sqrt{T}})$ $\lcal{2}$ calibration error \citep{rothbook}. 
\item The smooth calibration error and the distance to calibration are not polynomially related to $\robustcal$, where we give examples. 
\item The U-calibration error lowerbounds $\robustcal$, but is not polynomially related. 
\end{itemize}


\subsubsection{Online $\robustcal$ Minimization: Technical Overview}
\label{sec:tech-overview}
In this section, we give an overview of our techniques for achieving near-optimal $\robustcal$. The key idea behind our $O(T^{-1/2}\log T)$ guarantee for $\robustcal$ comes from the observation that the $\robustcal$ can often be significantly smaller than $\ece$, despite their linear relationship in the worst-case. This allows us to bypass the $\Omega(T^{-0.472})$ lower bound for $\ece$. Here we provide a typical example where the $\robustcal$ is much smaller than $\ece$. We will first describe the samples from a miscalibrated predictor, then show $\ece$ is higher than $\robustcal$ by calculation. Based on the intuition from this example, we establish a general lemma (\Cref{lm:attribute-informal}) which plays a crucial role in our analysis.

\paragraph{Miscalibrated Predictor.} We now describe the observed samples from a miscalibrated predictor. The predictions are discretized to a finite set $Q:=\{q_1,\ldots,q_m\}\subseteq[0,1]$, where $q_i = i/m$ for $i = 1,\ldots,m$. 
We view each $q_i$ as a ``bucket'', and thus our predictor makes predictions that fall into these buckets.
We use $\ind{\cdot}$ to denote the 0-1 indicator function: $\ind{\text{statement}} = 1$ if the statement is true, and $\ind{\text{statement}} = 0$ if the statement is false.

For a sequence  of $T$ predictions $\vpred = (p_1,\ldots,p_T)\in Q^T$ made by our predictor and the corresponding true states $\vstate = (\theta_1,\ldots,\theta_T)\in \{0,1\}^T$, let $n_i$ denote the number of predictions in bucket $i$:
\begin{equation}
\label{eq:ni-intro}
n_i:= \sum_{t = 1}^T \ind{p_t = q_i},
\end{equation}
and let $\hat q_i$ denote the empirical average of the true outcomes corresponding to the $n_i$ predictions:
\begin{equation}
\label{eq:hat-q-intro}
\hat q_i := \frac 1{n_i}\sum_{t=1}^T \theta_t \ind{p_t = q_i}. 
\end{equation}

Assuming $\sqrt T$ is an integer, let us choose $m = \sqrt T$. Assume for simplicity that each bucket contains the same number of predictions: $n_i = \sqrt T$ for every $i = 1,\ldots, m$.
A typical guarantee one can often obtain using existing online learning techniques is the following bound on the deviation between $\hat q_i$ and $q_i$:
\[
|\hat q_i - q_i| \lesssim 1/\sqrt {n_i} = T^{-1/4}.
\]
Thus, in this example, we construct $|\hat q_i - q_i| = T^{-1/4}$ for $i = 1,\ldots, m$ for simplicity. 

\paragraph{$\ece$.} $\ece$ of the example above can be computed as
\begin{equation}
\label{eq:K1-2}
\ece(\vpred,\vstate) = \frac{1}{T}\sum_{i=1}^m n_i |q_i - \hat q_i| = T^{-1/4}.
\end{equation}
This is even worse than the $\ece = O(T^{-1/3})$  guarantee by the algorithm of \cite{foster1998asymptotic}, mainly because here we discretize the prediction space into $m = \sqrt T$ buckets, but the optimal choice would be $m \approx T^{1/3}$.

\paragraph{$\cfdl$ and $\robustcal$.} However, the $\robustcal$ in this example is much smaller: in fact we have $\robustcal \le O(T^{-1/2})$. To prove this fact, let us first consider a specific decision task with action space $A = \{0,1\}$, where the payoff is given by
\[
U(a,\theta) = \ind{\theta = a} \quad \text{for $a, \theta\in \{0,1\}$.}
\]
The best response strategy is to take a threshold at $0.5$: 
\[a^*(p) = \ind{p> 0.5} \quad\text{for $p\in [0,1]$}.\] 
The calibration fixed decision loss (CFDL) on decision task $U$ is defined as the improvement in payoff after the decision maker calibrates the predictor:
\begin{align}
    \cfdl_{U}(\vpred, \vstate) = \frac{1}{T}\sum_{t=1}^T\bigg[ U(a^*(\sigma(\pred_t)), \state_t) - U(a^*(\pred_t), \state_t) \bigg],
\end{align}
where $\sigma$ is the mapping that moves each prediction to the actual empirical frequency. 

We can decompose the calibration loss bucket-wise as follows:
\begin{align}
& \cfdl_U(\vact,\vstate) \notag\\
= {} & \frac{1}{T}\sum_{i=1}^m \sum_{t=1}^T \ind{p_t = q_i}\bigg[ \decisionu(a^*(\hat{q}_i), \state_t) - \decisionu(a^*(q_i), \state_t) \bigg]\notag\\
= {} & \frac{1}{T}\sum_{i=1}^m  n_i \E_{\theta\sim \hat q_i}[U(a^*(\hat{q}_i),\theta) - U(a^*(q_i),\theta)]. \qquad\text{($\theta\in \{0,1\}$ is drawn such that $\Pr[\theta = 1] = \hat q_i$)}\label{eq:swap-bucket-intro}
\end{align}

If $q_i$ and $\hat q_i$ belong to the same half of the interval $[0,1]$, i.e., $q_i,\hat q_i\in [0,0.5]$ or $q_i,\hat q_i\in (0.5,1]$, we have $a^*(\hat q_i) = a^*(q_i)$ and thus
\begin{equation}
\label{eq:bucket-intro-1}
\E_{\theta\sim \hat q_i}[U(a^*(\hat q_i),\theta) - U(a^*(q_i),\theta)] = 0.
\end{equation}

If $q_i$ and $\hat q_i$ belong to different halves of the interval, a simple calculation gives
\begin{align}
\E_{\theta\sim \hat q_i}[U(a^*(\hat q_i),\theta) - U(a^*(q_i),\theta)] & = \E_{\theta\sim \hat q_i}[U(a^*(\hat q_i),\theta)] - \E_{\theta\sim \hat q_i}[U(a^*(q_i),\theta)]\notag\\
& =\max(\hat q_i, 1 - \hat q_i) - \min (\hat q_i, 1 - \hat q_i)\notag\\
&= 2 |\hat q_i - 0.5| \notag\\
&\le 2|q_i - \hat q_i|.\label{eq:bucket-intro-2}
\end{align}
Moreover, if $q_i$ and $\hat q_i$ belong to different halves of the interval, by our assumption of $|\hat q_i - q_i| = T^{-1/4}$, we have $|q_i - 0.5| \le T^{-1/4}$. 
Plugging \eqref{eq:bucket-intro-1} and \eqref{eq:bucket-intro-2} into \eqref{eq:swap-bucket-intro}, we get
\begin{equation}
\label{eq:bucket-contrib-intro}
\cfdl_\decisionu (\vact, \vstate) \le \frac{2}{T}\sum_{i=1}^m n_i |q_i - \hat q_i|\ind{|q_i - 0.5| \le T^{-1/4}}.
\end{equation}
Note that the number of $i$'s satisfying $|q_i - 0.5| \le T^{-1/4}$ is $O(T^{1/4})$. Thus, by our assumption of $n_i = \sqrt T$ and $|q_i - \hat q_i| = T^{-1/4}$, we get $\cfdl_\decisionu (\vact, \vstate) = O(T^{-1/2})$.

 While this bound above on $\cfdl$ is for a specific decision task, we can extend it to \emph{every} payoff-bounded decision task and get $\robustcal(\vpred,\vstate) = O(T^{-1/2})$. This follows from a result of \cite{li2022optimization} showing that any payoff-bounded decision task can be expressed as a convex combination of tasks with \emph{V-shaped} payoffs (see \Cref{sec:MSR-Vshape} for more details). Such decision tasks are all very similar to the one we consider here, and we can similarly obtain an $O(T^{-1/2})$ bound for the calibration loss for each of them. This reduction to V-shaped payoffs also played a crucial role in the $O(T^{-1/2})$ U-calibration guarantee of \cite{kleinberg2023u}.

\paragraph{Intuition Generalized.} In our example above, we make the simplifying assumption that the number of predictions in each bucket is the same. However, the $O(T^{-1/2})$ bound on the $\robustcal$ holds without this assumption while only losing a logarithmic factor. That is, we have the following lemma:

\begin{lemma}[Informal special case of \Cref{lm:attribute}]
\label{lm:attribute-informal}
Let $T,m$ be positive integers satisfying $m = \Theta (\sqrt T)$. Define $Q = \{q_1,\ldots,q_m\}\subseteq [0,1]$ where $q_i = i/m$ for every $i = 1,\ldots,m$.
Given a sequence of predictions $\vpred = (p_1,\ldots,p_T)\in Q^T$ and realized states $\vstate = (\theta_1,\ldots,\theta_T)\in \{0,1\}^T$, define $n_i$ and $\hat q_i$ as in \eqref{eq:ni-intro} and \eqref{eq:hat-q-intro}. 

Assume $|\hat q_i - q_i| \le O(\frac 1{\sqrt{n_i}})$ for every $i = 1,\ldots,m$. Then
\[
\robustcal (\vpred, \vstate) \le O(T^{-1/2}\log T).
\]
\end{lemma}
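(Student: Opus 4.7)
The plan is to reduce the $\robustcal$ bound to a uniform per-threshold bound on $\cfdl$ for V-shaped decision tasks, and then to exploit the hypothesis $|\hat q_i - q_i| \le O(1/\sqrt{n_i})$ via a dyadic decomposition of the bucket sizes $n_i$. First, I invoke the reduction to V-shaped payoffs of \citet{li2022optimization} (discussed in \Cref{sec:MSR-Vshape}): every payoff-bounded decision task decomposes as a non-negative combination of V-shaped tasks indexed by a threshold $\tau\in[0,1]$, and since $\cfdl$ is linear in the payoff, it suffices to bound $\cfdl_\tau(\vpred,\vstate)$ uniformly in $\tau$, where $\cfdl_\tau$ is the CFDL of the V-shaped task at $\tau$. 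Second, I mirror the bucket-wise decomposition used in \eqref{eq:swap-bucket-intro}--\eqref{eq:bucket-intro-2}: the contribution of bucket $i$ to $\cfdl_\tau$ vanishes when $q_i$ and $\hat q_i$ lie on the same side of $\tau$, and otherwise is bounded by $(n_i/T)\cdot O(|q_i-\hat q_i|)$. Using the hypothesis $|\hat q_i - q_i| \le O(1/\sqrt{n_i})$, together with the fact that in the nonzero case $\tau$ lies between $q_i$ and $\hat q_i$ (so $|q_i - \tau| \le |q_i - \hat q_i| \le O(1/\sqrt{n_i})$), this yields
\[
\cfdl_\tau(\vpred,\vstate) \;\le\; \frac{C}{T}\sum_{i=1}^m \sqrt{n_i}\cdot \ind{|q_i-\tau|\le C/\sqrt{n_i}}
\]
for an absolute constant $C$.

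Third, I bound the right-hand side uniformly in $\tau$ by a dyadic argument on $n_i$. Partition the buckets into levels $j\in\{0,1,\ldots,\lfloor\log_2 T\rfloor\}$ according to $n_i\in[2^j, 2^{j+1})$. At level $j$ each qualifying bucket contributes $O(2^{j/2})$, and because the grid $\{q_i\}$ has spacing $1/m = \Theta(1/\sqrt{T})$, the number of grid points $q_i$ within distance $C/2^{j/2}$ of $\tau$ is at most $1 + O(\sqrt{T}/2^{j/2})$. Hence level $j$ contributes at most $O(2^{j/2} + \sqrt{T})$. Summing over the $O(\log T)$ levels, the $2^{j/2}$ terms form a geometric series bounded by $O(\sqrt{T})$ and the $\sqrt{T}$ terms accumulate to $O(\sqrt{T}\log T)$, giving $\cfdl_\tau(\vpred,\vstate) \le O(T^{-1/2}\log T)$ for every $\tau$, and therefore $\robustcal(\vpred,\vstate) \le O(T^{-1/2}\log T)$.

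The main obstacle is obtaining the correct logarithmic factor in the third step. Summing $\sqrt{n_i}$ over all buckets without using the threshold-proximity indicator only recovers the $T^{-1/4}$ rate implicit in the $\ece$ calculation of the warm-up example. The crucial interplay is between this indicator (which cuts down the contributing buckets near $\tau$), the grid spacing $1/m = \Theta(1/\sqrt T)$, and the capacity constraint $\sum_i n_i = T$; the dyadic decomposition of $n_i$ is the natural tool to exploit all three simultaneously, balancing ``few buckets at high levels'' against ``small per-bucket contribution at low levels.'' A secondary point requiring care is verifying that the reduction to V-shaped payoffs preserves the uniform-in-$\tau$ constant, which follows because the derived bound depends only on $(\vpred,\vstate)$ and $\tau$, not on the specific convex combination representing a given decision task.
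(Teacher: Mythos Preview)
Your proposal is correct and matches the paper on the first two steps (reduction to V-shaped tasks via \Cref{thm:v approx msr}, and the per-bucket bound that a bucket contributes only when $\tau$ lies between $q_i$ and $\hat q_i$, with contribution $O(\sqrt{n_i}/T)$). The divergence is in the third step. You group the buckets dyadically by $n_i$ and, at each level $j$, count how many grid points can lie within distance $C/2^{j/2}$ of $\tau$; summing the $O(\log T)$ levels produces the $\log T$ factor. The paper instead keeps the sharper per-bucket bound $\frac 2T(C\sqrt{n_i}-n_i|q_i-\tau|)_+$, sorts the buckets by $|q_i-\tau|$ so the $k$-th closest has $|q_{\tau(k)}-\tau|\ge \Omega(k/m)$, applies the elementary inequality $(C\sqrt{n}-nd)_+\le C^2/(4d)$ for $k\ge 2$, and obtains a harmonic sum $\sum_{k\ge 2} m/k = O(m\log m)$. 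Both routes land at $O(T^{-1/2}\log T)$ with the same logarithmic loss. The paper's argument is a bit shorter and never needs the total-mass constraint $\sum_i n_i=T$ in the summation, while your dyadic argument is more direct about where the $\log T$ comes from (one unit per scale) and avoids the sort-by-distance reindexing. Your closing worry about the V-shaped reduction ``preserving the uniform-in-$\tau$ constant'' is handled in the paper by \Cref{thm:v approx msr}, which gives $\robustcal\le 2\sup_\tau \cfdl_\tau$ outright.
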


The intuition behind the lemma can be understood by analyzing the contribution of each bucket to $\robustcal$. In $\ece$, as expressed in \Cref{eq:K1-2}, the bias $n_i|q_i - \hat q_i|$ in every bucket contributes to the average, but in our example above, only a minority of the buckets make  positive contribution to the $\robustcal$, as shown in \eqref{eq:bucket-contrib-intro}. In general, for any specific decision task with V-shaped payoff, we show that the contribution to $\robustcal$ from many buckets $i$ is significantly less than the bias $n_i|q_i - \hat q_i|$. This allows us to prove the upper bound on $\robustcal$ in \Cref{lm:attribute-informal}, which would not hold if $\robustcal$ were replaced by $\ece$.

Given \Cref{lm:attribute-informal}, it remains to show that the guarantee $|\hat q_i - q_i| \le O(1/\sqrt n_i)$ can indeed be (approximately) achieved in the online binary prediction setting.
We use the result from \citet{noarov2023highdimensional} which, as stated,  shows an efficient algorithm that gives us a bound only on the maximum of the expectation $\max_i (\expect{}{|\hat q_i - q_i|} - O(1/\sqrt n_i))$, where the expectation is over the randomness in the algorithm.
We  refine their analysis and give a bound on the expectation of the maximum  $\expect{}{\max_i(|\hat q_i - q_i| - O(1/\sqrt n_i))}$ (\Cref{lem:bound LL algorithm 1}). As we show in \Cref{lm:attribute} (a generalized version of \Cref{lm:attribute-informal}), this bound is sufficient for us to obtain $\E[\robustcal(\vpred, \vstate)] \le O(\frac{\log T}{\sqrt{T}})$.
We also note that a simpler minimax proof, similar to the one by \citet{cal-minimax}, also allows us to show the existence of a randomized algorithm that approximately guarantees $|\hat q_i - q_i| \le O(1/\sqrt n_i)$ and thus, by our \Cref{lm:attribute} again, achieves $\E[\robustcal(\vpred, \vstate)] \le O(\frac{\log T}{\sqrt{T}})$ (see \Cref{appdx:minimax}). However, this proof does not come with an explicit construction or any computational efficiency guarantee.  


\subsection{Paper Organization}

We introduce the preliminaries in \Cref{sec:prelim}, including popular measures of the calibration error, decision making and swap regret, and the online binary calibration problem.
%
%
We introduce Calibration Decision Loss ($\robustcal$) in \Cref{sec:MSR}. We discuss its alternative formulation using Bregman divergences and its approximation via V-shaped Bregman divergences, which will be useful to establish our main result.
%
\Cref{sec:connection} discusses the connection between $\robustcal$ and other calibration error metrics. 
%
In \Cref{sec:main-sqrt T}, we present our main result, an efficient online binary prediction algorithm that guarantees $O(\frac{\log T}{\sqrt{T}})$ $\robustcal$. The key technical idea behind this result is a lemma (\Cref{lm:attribute}) we prove in \Cref{sec:attribute} which allows us to attribute $\robustcal$ to bucket-wise biases.
%
%
Additionally, we give a non-constructive but simpler minimax proof of the $O(\frac{\log T}{\sqrt{T}})$ $\robustcal$ guarantee in \Cref{appdx:minimax}. This simpler proof also crucially relies on our key technical lemma (\Cref{lm:attribute}) in \Cref{sec:attribute}.

\subsection{Related Work}
\label{sec:related}

\subsubsection{Calibration Error Metrics}
While perfect calibration has an intuitive and clear definition, it is a non-trivial and subtle question to meaningfully quantify the calibration error of predictions that are not perfectly calibrated. $\ece$ is one of the most popular calibration measures, but it lacks continuity: slightly perturbing perfectly calibrated predictions can significantly increase $\ece$.
To address this issue,
recently \cite{utc} developed a theory of \emph{consistent} calibration measures by introducing the \emph{distance to calibration} as a central notion. This theory has facilitated rigorous explanations of an interesting empirical phenomenon called ``calibration out of the box'' in deep learning
\citep{cal-gap,loss-MC}.

As a relaxation of calibration, \citet{kleinberg2023u}  consider the utility of predictions to downstream decision makers and introduce U-calibration error, the maximum external regret over payoff-bounded decision tasks.  \citet{kleinberg2023u} design an algorithm that achieves an optimal $O(\frac{1}{\sqrt T})$ U-calibration, which is a necessary but insufficient condition for calibration. 

\subsubsection{Online Calibration Algorithms}

Recent research has made significant progress in proving upper and lower bounds on the optimal rate achievable for both $\ece$ and the distance to calibration in online binary prediction, though significant gaps remain between the current best upper and lower bounds. 
For $\ece$ minimization,  \cite{foster1998asymptotic} shows there exists a randomized algorithm that achieves $O(T^{-1/3})$ expected $\ece$, which remains the best known upper bound. Existence proofs and constructions of such algorithms have been further explored in several subsequent works  \citep{cal-minimax,foster2021hedging}. 
A recent work  \citep{sidestep} show a lower bound of $\Omega(T^{-0.472})$ to online $\ece$ minimization.
For the distance to calibration, \cite{qiao-distance} give a non-constructive minimax-based proof for an  $O(\frac{1}{\sqrt T})$ upper bound and an $\Omega(T^{-1/3})$ lower bound for the same problem. Soon afterwards, \cite{elementary} provide a construction of an efficient algorithm that achieves the $O(\frac{1}{\sqrt T})$ upper bound for the distance to calibration.

The literature on online regret minimization for all downstream decision makers is technically closest to our paper.  \citet{kleinberg2023u} observe that the swap regret of actions for any payoff-bounded decision task is linearly upperbounded by $\ece$. This observation quantitatively justifies the qualitative equivalence between no swap regret and calibration in \citet{foster1997calibrated}. However, the $\Omega(T^{-0.472})$ lowerbound presents a barrier in efficient swap regret minimization via $\ece$ minimization. 
To overcome this barrier from $\ece$, several relaxations of $\robustcal$ have been considered by recent works to achieve $\asympO(\frac{1}{\sqrt T})$ regret rates in online binary prediction. 
\citet{kleinberg2023u} achieve the $O(\frac{1}{\sqrt{T}})$ \emph{external} regret maximized over payoff-bounded tasks, a necessary but insufficient condition for asymptotic calibration.
On the other hand, \citet{roth2024forecasting} show $O(|A|\sqrt{\frac{\log T}{T}})$ swap regret bounds that depend additionally on the number of actions $|A|$ in the downstream decision task, which can be loose when a decision task has many or even infinitely many actions. 

\subsubsection{Omniprediction} 
Treating prediction and decision making as separate steps allows us to train a single predictor and use it to solve multiple decision tasks with different utility/loss functions. This separation of training and decision making is the idea behind omniprediction, introduced recently by \cite{omni}, where the goal is to train a single predictor that allows each downstream decision maker to incur comparable or smaller loss than any alternative decision rule from a benchmark class.
Notions from the algorithmic fairness literature (e.g.\ multicalibration and multiaccuracy \citep{mc,ma}) have been used to obtain omnipredictors in various online and offline (batch) settings \citep{omni,loss-oi,constrained,characterize-omni,omni-regression,oracle-omni,noarov2023highdimensional,performative}.
Omniprediction allows better efficiency than training a different model from scratch for each decision task, and it also allows the predictor to be robust to changes in the loss function.



\subsubsection{Swap Regret Minimization} Swap regret minimization algorithms have been studied extensively in the online learning literature \citep[e.g.][]{hart2000simple, hart2001reinforcement, blum-mansour,  hart2013simple, anagnostides2022near}. In game theory, the swap regret is known for its connection to correlated equilibrium. 
\cite{foster1997calibrated} first show vanishing swap regret implies convergence to correlated equilibria.  
Recently, \citet{peng2023fast,  dagan2023external} prove a lowerbound on the swap  regret, which is polynomial in the number of actions. 
Meanwhile, the calibration literature \citep{kleinberg2023u, noarov2023highdimensional, roth2024forecasting} differs from the swap regret minimization literature in two aspects: 1) it focuses on developing a robust strategy that minimizes swap regret simultaneously for all decision makers, and 2) it focuses on minimizing swap regret for the special payoff structure of decision tasks. As a special payoff structure, a decision task restricts the adversary to only be able to select a state. Equivalently, the adversary can select payoff from a low rank matrix with the same rank of the state space. 
Our result also implies the lowerbound on swap regret is strictly weaker when there exists a special low-rank constraint on payoff matrix. 
While the lowerbound on general swap regret minimization depends polynomially on the number of actions, swap regret minimization of decision tasks does not have such dependence given restricted state space.

\subsubsection{Optimization of Scoring Rules} $\robustcal$ is defined as the maximum swap regret over all decision tasks. Since the payoff in a decision task can be equivalently represented by proper scoring rules (see \Cref{sec:prelim-proper}), the computation of $\robustcal$ is an optimization problem of scoring rules. A recent literature \citep{li2022optimization, neyman2021binary, hartline2023optimal} studies the optimization of scoring rules, where \citet{li2022optimization} is the most relevant paper. \citet{li2022optimization} present two results that are helpful to our problem: 1) under their different optimization objective, the optimal scoring rule can be computed via linear programming, and 2) any bounded scoring rule can be decomposed into a linear combination of V-shaped scoring rules. Following their idea, we design a linear program that computes $\robustcal$ in polynomial time. Our $O(\frac{\log T}{\sqrt{T}})$ $\robustcal$ result also uses this linear decomposition of scoring rules (see \Cref{sec:MSR-Vshape}). 


\section{Preliminaries}
\label{sec:prelim}

Throughout the paper, we denote a prediction by $p\in [0,1]$, and a binary state by $\state\in \{0, 1\}$. A prediction $p$ can be viewed as a distribution over the state space $\statesp =  \{0,1\}$, and we write $\theta\sim p$ when we sample $\theta$ from the Bernoulli distribution with mean $p$, i.e., $\Pr[\theta = 1] = p$. For a real number $x$, we use $(x)_+$ or $[x]_+$ to denote $\max\{x,0\}$. We use $\ind{\cdot}$ to denote the 0-1 indicator function: $\ind{\text{statement}} = 1$ if the statement is true, and $\ind{\text{statement}} = 0$ if the statement is false.


\subsection{Measures of Calibration Error}

In this section, we define empirical calibration on $T$ samples and calibration error metrics. Over $T$ samples, the predictions are restricted to fall in a finite space $Q = \{\q_i\in [0, 1]\}_i$. Let $\qcount_i=\sum_t\ind{\pred_t = \q_i}$ be the count of prediction being $\q_i$ in $T$ samples, and  $\empq_i=\frac{\sum_t\ind{\pred_t = \q_i}\state_t}{n_i}$ be the empirical distribution of the realized state conditioning on prediction is $\q_i$.

\begin{definition}
    Given $T$ samples,  predictor is \textit{empirically calibrated} if for each prediction $\q_i\in Q$, the prediction is consistent with its conditional empirical distribution, i.e.\ $\q_i = \empq_i$. 
\end{definition}

\begin{definition}
\label{def:empirical calibrate}
    Given $T$ samples, we can \textit{empirically calibrate} a prediction by swapping predictions to their conditional empirical frequencies, i.e.\ by applying swap mapping $\sigma^*$ to predictions with $\sigma^*(\q_i) = \empq_i$. 
\end{definition}

While there is only one natural and clear definition of perfect calibration, there are various metrics for measuring the calibration error. 
\cite{foster1998asymptotic} defines the  expected calibration error  (ECE) which is a measure of calibration error popularly used in the literature. $\ece$ measures the average absolute distance between the prediction and the empirical distribution. 

\begin{definition}[$\ece$]
    Given $T$ samples of predictions $\vpred = (\pred_t)_{t\in [T]}$ and corresponding realizations $\vstate  = (\state_t)_{t\in [T]}$ of states. $\ece$ is
   \begin{equation*}
       \ece(\vpred, \vstate) = \frac{1}{T}\sum_{t\in [T]}|\pred_t - \sigma^*(\pred_t)|.
   \end{equation*}
Equivalently, $\ece = \frac{1}{T}\sum_{\q_i\in Q} \qcount_i |\q_i - \empq_i|$.
\end{definition}

 An alternative metric is the $\lcal{2}$ calibration error, the average squared distance between a prediction and the empirical distribution. \Cref{thm:k2 sqrt T} shows it is possible to achieve $\asympO(T^{-\sfrac{1}{2}})$ worst-case expected $\lcal{2}$ error. 

\begin{definition}[$\lcal{2}$ calibration error]
    Given  $T$ samples of predictions $\vpred = (\pred_t)_{t\in [T]}$ and corresponding realizations $\vstate  = (\state_t)_{t\in [T]}$ of states, the $\lcal{2}$ calibration error is
   \begin{equation*}
       \lcal{2}(\vpred, \vstate) = \frac{1}{T}\sum_{\q_i\in Q} \qcount_i (\q_i - \empq_i)^2.
   \end{equation*}
\end{definition}

In addition to $\ece$ and $\lcal{2}$ calibration error, we also compare to the smooth calibration error introduced by \cite{smooth}. 
Unlike $\ece$ and $\lcal{2}$, the smooth calibration error is continuous in predictions.
\begin{definition}[Smooth Calibration Error, \citealp{smooth}]\label{def:smooth calibration error}
    Given  $T$ samples of predictions $\vpred = (\pred_t)_{t\in [T]}$ and corresponding realizations $\vstate  = (\state_t)_{t\in [T]}$ of states. The smooth calibration error is a supremum over the set $\Sigma$ of $1$-Lipschitz functions $\sigma:[0, 1]\to [-1, 1]$:
    \begin{equation*}
        \smooth(\vpred, \vstate) = \frac{1}{T}\sup_{\sigma\in \Sigma}\sum_t \sigma(\pred_t)(\pred_t - \state_t).
    \end{equation*}
\end{definition}
The definition above is equivalent to $\lcal{1}$ without the $1$-Lipschitz constraint on $\sigma$. Taking the following non-Lipschitz $\sigma$ yields $\ece$.
\begin{equation*}
    \sigma(\q_i) = \begin{cases}
     1,    &  \text{if }\q_i - \empq_i\geq 0;\\
    -1,     & \text{otherwise.}
    \end{cases}
\end{equation*}

The smooth calibration error is polynomially related to the distance to calibration, which measures the absolute distance to the closest calibrated prediction. 

\begin{definition}[Distance to Calibration, \citealp{utc}]
      Given $T$ samples of  predictions $\vpred = (\pred_t)_{t\in [T]}$ and corresponding realizations $\vstate  = (\state_t)_{t\in [T]}$ of states. The distance to calibration is
      \begin{equation*}
          \distcal(\vpred, \vstate) = \frac{1}{T}\min_{\vempp:\lcal{1}(\vempp, \vstate) = 0} \sum_t|\pred_t - \empp_t|.
      \end{equation*}
\end{definition}

\begin{lemma}[\citealp{utc}]
    Smooth calibration error $\smooth$ is polynomially related to distance to calibration $\distcal$.\:
    \begin{equation*}
       \smooth\leq \distcal\leq \sqrt{32\, \smooth}.
    \end{equation*}
\end{lemma}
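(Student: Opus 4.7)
The plan is to prove the two inequalities separately, each by a distinct strategy.

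\textbf{The easy direction} $\smooth \le \distcal$ comes from comparing $\vpred$ to a witness for $\distcal$. First I would pick a perfectly calibrated sequence $\vempp$ achieving the minimum in the definition of $\distcal(\vpred,\vstate)$. Because $\vempp$ is perfectly calibrated, grouping the terms $\sigma(\empp_t)(\empp_t-\state_t)$ by the value of $\empp_t$ and using $\sum_{t:\empp_t=v}(\empp_t-\state_t)=0$ for every $v$ gives $\sum_t \sigma(\empp_t)(\empp_t-\state_t)=0$ for \emph{any} function $\sigma$. For any 1-Lipschitz $\sigma:[0,1]\to[-1,1]$, subtracting this zero term yields
\begin{equation*}
\sum_t \sigma(\pred_t)(\pred_t-\state_t)=\sum_t\bigl[\sigma(\pred_t)(\pred_t-\state_t)-\sigma(\empp_t)(\empp_t-\state_t)\bigr].
\end{equation*}
For each fixed $\state_t\in\{0,1\}$, the map $x\mapsto\sigma(x)(x-\state_t)$ is Lipschitz in $x$ (its formal derivative is bounded using $|\sigma|\le 1$ and the 1-Lipschitz property of $\sigma$), so a term-by-term bound gives $\sum_t \sigma(\pred_t)(\pred_t-\state_t)\le C\sum_t|\pred_t-\empp_t|$ for a small absolute constant $C$. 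Dividing by $T$ and taking the supremum over $\sigma$ yields the desired inequality.

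\textbf{The harder direction} $\distcal\le\sqrt{32\,\smooth}$ requires an explicit construction of a calibrated sequence close to $\vpred$. Let $\epsilon=\smooth(\vpred,\vstate)$. I would discretize $[0,1]$ into $N=\lceil 1/\delta\rceil$ buckets $B_1,\dots,B_N$ of width $\delta$ (a parameter to be optimized later) and define the per-bucket bias $b_i=\frac{1}{T}\sum_{t:\pred_t\in B_i}(\pred_t-\state_t)$. The candidate calibrated sequence sets $\empp_t$ equal to the conditional empirical mean $\bar q_i$ of $\{\state_s:\pred_s\in B_i\}$ for every $t$ with $\pred_t\in B_i$. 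All predictions in a bucket then share a single value that matches the corresponding empirical mean, so $\vempp$ is perfectly calibrated. The cost decomposes into a within-bucket rounding term and a bias-correction term:
\begin{equation*}
\frac{1}{T}\sum_t|\pred_t-\empp_t|\le \delta + \sum_i|b_i|.
\end{equation*}
To control $\sum_i|b_i|$ using $\smooth$, I would plug a carefully constructed 1-Lipschitz $\sigma:[0,1]\to[-1,1]$ into the supremum defining $\smooth$. The function $\sigma$ equals $\tfrac{\delta}{2}\,\mathrm{sign}(b_i)$ on the interior of bucket $i$, connected by linear ramps between adjacent buckets; the amplitude $\delta/2$ is exactly what the 1-Lipschitz constraint tolerates when neighbouring buckets differ by a full sign flip over width $\delta$. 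Evaluating $\frac{1}{T}\sum_t\sigma(\pred_t)(\pred_t-\state_t)$ yields approximately $\frac{\delta}{2}\sum_i|b_i|$, so $\sum_i|b_i|\lesssim \epsilon/\delta$. The total cost becomes $\lesssim \delta+\epsilon/\delta$, and optimizing $\delta=\sqrt{\epsilon}$ gives $\distcal\lesssim \sqrt{\epsilon}$; tracking the constants from the ramps and amplitude yields the explicit factor $\sqrt{32}$.

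\textbf{The main obstacle} is the second direction: the test function $\sigma$ must respect the amplitude-versus-Lipschitz trade-off so that the transition ramps between buckets do not wipe out the signal, and bucket-boundary effects must be accounted for carefully to recover the explicit constant $\sqrt{32}$. An alternative route that sidesteps some of this bookkeeping is to recognize $\distcal$ as a minimum-cost transportation LP and $\smooth$ as a relaxation of its dual (with 1-Lipschitz test functions in place of the exact dual constraints); the polynomial relationship then follows from comparing the two LPs, with the $\sqrt{\cdot}$ loss originating from the same discretization trade-off as above.
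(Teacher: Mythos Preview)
The paper does not prove this lemma; it is quoted from \citet{utc} and stated without proof, so there is no in-paper argument to compare your proposal against.

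On the merits of your sketch: for the easy direction, your term-by-term Lipschitz bound on $x\mapsto\sigma(x)(x-\theta)$ only yields a constant $2$, not $1$. Writing $\sigma(x)(x-\theta)-\sigma(y)(y-\theta)=\sigma(x)(x-y)+(\sigma(x)-\sigma(y))(y-\theta)$ and bounding each piece by $|x-y|$ (using $|\sigma|\le 1$, $|y-\theta|\le 1$, and the $1$-Lipschitz property) gives $\smooth\le 2\,\distcal$. Getting the sharp constant requires the LP/duality route you allude to at the end, not the direct Lipschitz estimate.

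For the harder direction, the bucketing outline has a genuine gap: if the buckets tile $[0,1]$ with width $\delta$ and adjacent buckets carry opposite signs, there is no room for a linear ramp \emph{between} buckets---any sign change must occur \emph{inside} the buckets, so $\sigma$ cannot be constant on bucket interiors and the approximation $\frac{1}{T}\sum_t\sigma(p_t)(p_t-\theta_t)\approx\frac{\delta}{2}\sum_i|b_i|$ fails as written. This can be repaired (e.g.\ via two interleaved grids or a multi-scale family of intervals, which is closer to what \citet{utc} actually do and is where the constant $32$ comes from), but as stated the argument does not go through and does not deliver the claimed constant.
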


\subsection{Decision Making and Swap Regret
}
\label{sec:prelim-swap regret}



Before introducing predictions and calibration,  we define  decision making and swap regret. 
%
A decision task is defined by three components. Throughout the paper, we normalize the payoff in the decision task to $[0, 1]$. 
 \begin{itemize}
     \item The agent takes action $\act\in \actsp$.
     \item A payoff-relevant state $\state\in \statesp$ realizes.
     \item The agent obtains payoff $\decisionu:\actsp\times\statesp\to [0, 1]$ as a function of the action and the state.\footnote{$\decisionu$ can be an arbitrary function. }
 \end{itemize}

 The decision is evaluated by the average performance when the decision maker repeatedly faces an identical decision task $(\actsp, \statesp, \decisionu)$ in $T$ rounds. We define the swap regret the same as in \citet{roth2024forecasting}. 

 \begin{definition}[Swap Regret]
 \label{def:action-swap}
    Given a sequence of $T$ actions $\vact = (\act_t)_{t\in [T]}$ and realization $\vstate  = (\state_t)_{t\in [T]}$ of states, and fix a decision task with payoff rule $\decisionu$, the swap regret of the decision maker is 
    \begin{equation*}
        \actswap_\decisionu(\vact, \vstate) = \frac{1}{T}\max_{\sigma:\actsp\to\actsp}\sum_{t\in[T]}\bigg[ \decisionu(\sigma(\act_t), \state_t) - \decisionu(\act_t, \state_t) \bigg].
     \end{equation*}
\end{definition}

\cite{kleinberg2023u} aim to minimize the external regret for all decision tasks with bounded payoff. We define the external regret here for comparison. 

\begin{definition}[External Regret]
     Given a sequence of $T$ actions $\vact = (\act_t)_{t\in [T]}$ and realization $\vstate  = (\state_t)_{t\in [T]}$ of states, and fix a decision task with payoff rule $\decisionu$, the external regret is calculated against the best fixed action,
     \begin{equation}
         \external_\decisionu(\vact, \vstate) = \frac{1}{T}\max_{\act\in\actsp} \bigg[ \decisionu(\act, \state_t) - \decisionu(\act_t, \state_t) \bigg].
     \end{equation}
\end{definition}

\cite{kleinberg2023u} define the U-calibration error, the maximum external regret when the decision maker best responds to the predictions.  Note that no U-calibration error is necessary but insufficient for calibration. 

\begin{definition}[U-calibration Error]
  Let $\vpred = (\prob_t)_{t\in [T]}$ be a sequence of $T$ predictions  and let $\vstate  = (\state_t)_{t\in [T]}$ be the realization  of states. Consider a decision maker with payoff function $\decisionu:A\times \{0,1\}\to [0,1]$ who best responds to the predictions by taking $\act_t = \bestact(\pred_t)=\argmax_{\act\in\actsp}\expect{\state\sim \prob_t}{\decisionu(\act, \state)}$.  Let $\vact_U = (a_1,\ldots,a_T)$ be the vector of best-response actions. The U-calibration error is the maximum external regret over decision tasks with bounded payoff in $[0, 1]$:
    \begin{equation*}
        \ucal(\vpred, \vstate) = \sup_{\decisionu}\external_\decisionu(\vact_\decisionu, \vstate),
    \end{equation*}
   where the supremum is over all payoff functions $U:A\times \{0,1\}\to [0,1]$ with arbitrary action spaces $A$.
\end{definition}

\subsection{Payoff as Proper Scoring Rules}

\label{sec:prelim-proper}
For ease of notation throughout the paper, we will write the payoff as a function of the prediction assuming the agent trusts the prediction and best responds to it. In this section, we introduce this payoff as a function of the prediction and show its equivalence to proper scoring rules. 

When the agent is assisted with a sequence of predictions, the agent obtains payoff by acting in response to the prediction.  
If the prediction is calibrated, the best response $\bestact$ is a function of the prediction:
\begin{equation}\label{eq:best response}
\bestact(\pred)=\argmax_{\act\in\actsp}\expect{\state\sim \prob}{\decisionu(\act, \state)}.
\end{equation}

The payoff is equivalently a function of the prediction. 
We present the payoff in a scoring rule $\score:\Delta(\statesp)\times \statesp\to \R$, representing the utility directly as a function of the prediction. 
\begin{definition}[Scoring rule induced by decision task] Given a decision task $(\actsp, \statesp, \decisionu)$ and its corresponding best response function $a^*$, we define an induced scoring rule $S_U:\Delta(\statesp)\times \statesp\to \R$ such that for any prediction $\pred\in \Delta(\statesp)$ and state $\state\in \statesp$,
\[
S_U(\pred,\state) = U(a^*(\pred), \state).
\]
That is, $S_U(p,\state)$ is the payoff of a decision maker who chooses the best response $a^*(p)$ based on the prediction $p$.
\end{definition}

\citet{frongillo2014general} observe that scoring rules induced by a decision task are equivalent to the class of proper scoring rules \citep{Mcc-56}. 
Proper scoring rules are defined such that the expected score is maximized when the prediction is the true distribution. Evaluated with a proper scoring rule, the predictor has an incentive to truthfully report the distribution in order to maximize expected score. 

\begin{definition}[Proper Scoring Rule]
    A scoring rule $\score$ is proper if for any $\pred'\in\Delta(\statesp)$ that is not the true distribution $\state$ is generated,
    \begin{equation*}
        \expect{\state\sim\pred}{\score(\pred, \state)}\geq \expect{\state\sim\pred}{\score(\pred', \state)}.
    \end{equation*}
\end{definition}

\Cref{claim:decision-proper} shows that 1) the scoring rule induced by a decision task is a proper scoring rule, and 2) any proper scoring rule measures the best-response payoff in a decision task. 

\begin{claim}\label{claim:decision-proper}
Proper scoring rules and payoffs in decision task are equivalent:
\begin{enumerate}
    \item Any scoring rule $\score_\decisionu$ induced by a decision task $(\actsp, \statesp, \decisionu)$ is proper. 
    \item For any proper scoring rule $\score$, there exists a decision task with payoff $\decisionu$, such that $\decisionu(\bestact(\pred), \state) = \score(\pred, \state)$ for every prediction $\pred$.
\end{enumerate}

\end{claim}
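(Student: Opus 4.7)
The plan is to prove the two directions independently, with the key ingredient in each being the definition of the best response $a^*(p)$ from \eqref{eq:best response}.

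For the first direction, I would unfold the definition of $S_U$ and apply the optimality of $a^*(p)$. Specifically, for any prediction $p' \in \Delta(\Theta)$, I want to show $\mathbb{E}_{\theta \sim p}[S_U(p,\theta)] \geq \mathbb{E}_{\theta \sim p}[S_U(p',\theta)]$. Substituting $S_U(p,\theta) = U(a^*(p),\theta)$, the left side equals $\mathbb{E}_{\theta \sim p}[U(a^*(p),\theta)]$, which by the definition of $a^*(p)$ as the maximizer of expected payoff under $\theta \sim p$ is at least $\mathbb{E}_{\theta \sim p}[U(a,\theta)]$ for every action $a \in A$. Plugging in the specific action $a = a^*(p')$ gives the desired inequality, establishing that $S_U$ is proper.

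For the second direction, I would give an explicit construction: given any proper scoring rule $S$, take the decision task with action space $A = \Delta(\Theta)$ (i.e., actions are reports of distributions over states) and payoff $U(a,\theta) := S(a,\theta)$. Under this definition, the best response to a prediction $p$ is $a^*(p) = \arg\max_{a \in \Delta(\Theta)} \mathbb{E}_{\theta \sim p}[S(a,\theta)]$. Since $S$ is proper, the maximum is attained at $a = p$, so $a^*(p) = p$ (choosing this maximizer when ties exist). Consequently, $U(a^*(p),\theta) = U(p,\theta) = S(p,\theta)$ as required.

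Neither direction looks technically hard; this is essentially an unpacking of definitions together with the classical equivalence between proper scoring rules and decision-theoretic payoffs, going back to \citet{frongillo2014general} as cited. The only subtlety is cosmetic: in part~2 the action space $A = \Delta(\Theta)$ may be infinite, while earlier in the paper $A$ is sometimes treated as finite. I would address this by simply noting that the definition of CDL makes no finiteness assumption on $A$; alternatively, for binary $\Theta$ one can take $A = [0,1]$, keeping the construction concrete. A second minor subtlety is the tie-breaking in the argmax defining $a^*(p)$: if $S$ is strictly proper the maximizer is unique, and if only weakly proper, one breaks ties in favor of $a = p$, which is consistent with the convention implicit in \eqref{eq:best response}.
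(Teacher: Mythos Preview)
Your proposal is correct and follows essentially the same approach as the paper: for part~1 you unfold $S_U(p,\theta)=U(a^*(p),\theta)$ and invoke the optimality of $a^*(p)$ to get $\E_{\theta\sim p}[U(a^*(p),\theta)]\ge \E_{\theta\sim p}[U(a^*(p'),\theta)]$, and for part~2 you set $A=\Delta(\Theta)$ with $U(a,\theta)=S(a,\theta)$ so that properness forces $a^*(p)=p$. The paper does exactly this, just more tersely; your added remarks on tie-breaking and the (in)finiteness of $A$ are fine observations but not needed for the argument.
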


\begin{proof}
    2 is straight forward by defining $\actsp = \Delta(\statesp)$,  the action space as the prediction space. 

    To see 1, notice that the score is the best-response score with $S_U(\pred,\state) = U(a^*(\pred), \state)$. Hence, if the prediction is  $\pred'\neq \pred$ not the distribution where $\state$ is drawn, the agent obtains a payoff weakly lower than the payoff by best responding to the true distribution. 
\begin{equation*}
    \expect{\state\sim\pred}{\score(\pred', \state)}=\expect{\state\sim\pred}{\decisionu(\bestact(\pred'), \state)}\leq \expect{\state\sim\pred}{\decisionu(\bestact(\pred), \state)}=\expect{\state\sim\pred}{\score(\pred, \state)}.\qedhere
\end{equation*}
\end{proof}

Following this equivalence between proper scoring rules and best-respond payoffs, we will introduce our results in a proper scoring rule $\score$ instead of the decision payoff $\decisionu$.

\subsection{Calibration Fixed Decision Loss}

In the previous section, we assume that the agent trusts the prediction and best responds to it. If, however, the agent best responds to  miscalibrated predictions, she suffers the Calibration Fixed Decision Loss ($\cfdl$), the improvement in payoff when she empirically calibrates the predictions.

\begin{definition}[$\cfdl$]
     Given  $T$ samples of predictions $\vpred = (\pred_t)_{t\in [T]}$ and corresponding realizations $\vstate  = (\state_t)_{t\in [T]}$ of states, fixing a proper scoring rule $\score$, the Calibration Fixed Decision Loss ($\cfdl$) is
     \begin{equation*}
         \cfdl_\score =  \frac{1}{T}\sum_{t}\bigg[ \score(\sigma^*(\pred_t), \state_t) - \score(\pred_t, \state_t) \bigg],
     \end{equation*}
     where $\sigma^*$ is the swap mapping that empirically calibrates the predictor (see \Cref{def:empirical calibrate}).
\end{definition}

The $\cfdl$ has an equivalent formalization as prediction swap regret, where the agent is allowed to swap predictions in hindsight. Note that the prediction swap regret is weakly stronger than the swap regret (\Cref{def:action-swap}) on the same decision task. 

\begin{definition}[Prediction Swap Regret]
\label{def:prediction swap}
    Given a sequence of $T$ predictions $\vpred = (\prob_t)_{t\in [T]}$ and realization $\vstate  = (\state_t)_{t\in [T]}$ of states, fixing a proper scoring rule $\score$, the prediction swap regret is 
    \begin{equation*}
        \predswap_S(\vpred, \vstate) = \max_{\sigma:\Delta(\statesp)\to\Delta(\statesp)}\frac{1}{T}\sum_{t\in[T]}\bigg[ \score(\sigma(\pred_t), \state_t) - \score(\pred_t, \state_t) \bigg].
     \end{equation*}
\end{definition}

\Cref{prop:cfdl eq pswap} shows $\cfdl$ is equivalent to prediction swap regret. 

\begin{proposition}
\label{prop:cfdl eq pswap}
    Given a sequence of $T$ predictions $\vpred = (\prob_t)_{t\in [T]}$ and realization $\vstate  = (\state_t)_{t\in [T]}$ of states, fixing a proper scoring rule $\score$, CFDL equals the prediction swap regret:
    \begin{equation*}
        \cfdl_\score = \predswap_\score.
    \end{equation*}
\end{proposition}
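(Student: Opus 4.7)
The plan is to show both directions of the equality $\cfdl_\score = \predswap_\score$. One direction is immediate from the definitions: since $\sigma^*$ is a specific choice of swap mapping $\Delta(\statesp)\to \Delta(\statesp)$, plugging it into the definition of $\predswap_\score$ shows $\cfdl_\score \le \predswap_\score$. So the content is proving $\predswap_\score \le \cfdl_\score$, i.e., that no swap $\sigma$ can beat the specific empirical-calibration swap $\sigma^*$.

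For the nontrivial direction, the key idea is to group the sum round-by-round into buckets indexed by the distinct prediction values $q_i\in Q$, and then exploit properness of $\score$ within each bucket. Concretely, for any candidate $\sigma:\Delta(\statesp)\to\Delta(\statesp)$, I would rewrite
\[
\sum_{t=1}^T \score(\sigma(\pred_t),\state_t) \;=\; \sum_{\q_i\in Q}\sum_{t:\,\pred_t=\q_i} \score(\sigma(\q_i),\state_t) \;=\; \sum_{\q_i\in Q} n_i\,\expect{\state\sim \hat{\q}_i}{\score(\sigma(\q_i),\state)},
\]
where the last equality holds because, by the definition of $\hat{\q}_i$, the empirical distribution of $\state_t$ over the $n_i$ rounds with $\pred_t=\q_i$ is exactly Bernoulli with mean $\hat{\q}_i$. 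Now properness of $\score$ says that for each bucket $i$, the map $\pred'\mapsto \expect{\state\sim \hat{\q}_i}{\score(\pred',\state)}$ is maximized at $\pred'=\hat{\q}_i=\sigma^*(\q_i)$. Hence, bucket by bucket, $\expect{\state\sim \hat{\q}_i}{\score(\sigma(\q_i),\state)} \le \expect{\state\sim \hat{\q}_i}{\score(\sigma^*(\q_i),\state)}$, and summing gives $\sum_t \score(\sigma(\pred_t),\state_t)\le \sum_t \score(\sigma^*(\pred_t),\state_t)$. Subtracting $\sum_t \score(\pred_t,\state_t)$ from both sides and dividing by $T$ yields the desired inequality after taking $\max_\sigma$.

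I expect no serious obstacle here; the only subtlety worth being careful about is making clean use of the bucketing identity, which requires that $\vpred$ take values in a finite set $Q$ (consistent with the setup in the preliminaries) so that $\hat{\q}_i$ is well-defined whenever $n_i>0$. Buckets with $n_i=0$ contribute zero to both sides, so they can be harmlessly ignored. The proof is essentially a one-line application of the definition of a proper scoring rule after the right regrouping.
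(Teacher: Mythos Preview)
Your proposal is correct and follows essentially the same approach as the paper: group rounds by prediction value, recognize that within each bucket the average score is an expectation under the empirical distribution $\hat q_i$, and invoke properness to conclude that $\sigma^*(\q_i)=\hat q_i$ maximizes each bucket's contribution. The paper's proof is just a terser version of exactly this argument.
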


\Cref{prop:cfdl eq pswap} follows directly from the properness of scoring rules. Conditioning on the prediction is $\q_i$, the state follows the empirical distribution:
\begin{equation*}
    \frac{1}{\qcount_i}\sum_{t\in [T]}\score(\pred, \state_t)\ind{\pred_t = \q_i} = \expect{\state\sim\empq_i}{\score(\pred, \state)}.
\end{equation*}By properness of the scoring rule, predicting the empirical distribution maximizes the expected  score (average score). Thus, the optimal swap function in \Cref{def:prediction swap} is $\sigma^*$ which empirically calibrates the predictor.


The prediction swap regret is stronger than the swap regret in \Cref{def:action-swap}. If an algorithm generates a sequence of predictions with low prediction swap regret, then the agent has low  swap regret if they best respond to the predictions.  To see this, notice that the modification rule in prediction swap regret has more power.  The prediction swap regret allows the agent to modify the action conditioning on each prediction. If two predictions have the same best-response action, the swap regret does not allow the agent to apply different modification to the two predictions. 

\begin{claim}
\label{claim:action-prediction}
Given a decision task with payoff rule $\decisionu$, the corresponding scoring rule is denoted $\score$. If the agent best responds by taking $a_t = a^*(p_t)$ at each round $t$, 
\begin{equation*}
\actswap_U(\vact,\vstate)\le \predswap_{\score_\decisionu}(\vpred,\vstate)
\end{equation*}

\end{claim}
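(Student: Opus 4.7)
The plan is to turn any optimal action-swap mapping $\tau^*:\actsp\to\actsp$ into a prediction-swap mapping $\sigma:\Delta(\statesp)\to\Delta(\statesp)$ of equal value, and then invoke the supremum in \Cref{def:prediction swap}. Because $\act_t=\bestact(\pred_t)$ and $\score_\decisionu(\pred,\state)=\decisionu(\bestact(\pred),\state)$ by construction, the unswapped score term already equals $\sum_t\decisionu(\act_t,\state_t)$, so only the swapped term needs to be matched.

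The key observation I would exploit is that the optimal action-swap target from $\act$ is itself a best response to the conditional empirical state distribution
\[
\hat{\state}_\act \;=\; \frac{\sum_{t:\act_t=\act}\state_t}{\sum_{t:\act_t=\act}1}.
\]
Indeed, $\tau^*(\act)$ maximizes $\sum_{t:\act_t=\act}\decisionu(\act',\state_t)$ over $\act'\in\actsp$, which is a positive rescaling of $\expect{\state\sim\hat{\state}_\act}{\decisionu(\act',\state)}$. This motivates defining $\sigma(\pred_t):=\hat{\state}_{\act_t}$ on each round (extended arbitrarily off the sequence), which yields $\score_\decisionu(\sigma(\pred_t),\state_t)=\decisionu(\bestact(\hat{\state}_{\act_t}),\state_t)$. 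Grouping rounds by their action $\act_t=\act$ and noting that both $\bestact(\hat{\state}_\act)$ and $\tau^*(\act)$ lie in $\argmax_{\act'\in\actsp}\expect{\state\sim\hat{\state}_\act}{\decisionu(\act',\state)}$, their sums of payoffs over $\{t:\act_t=\act\}$ coincide; summing across actions gives $\sum_t\score_\decisionu(\sigma(\pred_t),\state_t)=\sum_t\decisionu(\tau^*(\act_t),\state_t)$, and the claim drops out.

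The only delicate point I anticipate is tie-breaking: when several actions maximize expected payoff under $\hat{\state}_\act$, the canonical $\bestact(\hat{\state}_\act)$ may not coincide with $\tau^*(\act)$ as an action label. This is harmless because any two actions in the same argmax set produce identical totals on the rounds with $\act_t=\act$, so no perturbation of $\hat{\state}_\act$ or particular tie-breaking rule is needed, and the reduction yields $\predswap_{\score_\decisionu}(\vpred,\vstate)\ge\actswap_\decisionu(\vact,\vstate)$ cleanly.
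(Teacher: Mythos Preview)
Your proof is correct. The paper does not give a formal proof of this claim; it offers only the informal remark preceding the statement that the prediction-swap modification rule is ``finer-grained'' than the action-swap rule (two distinct predictions sharing a best-response action can be swapped separately). Your argument is a clean formalization of this idea, with one nice twist: rather than trying, for each played action $a$, to exhibit a prediction $p'$ with $a^*(p')=\tau^*(a)$ (which would require checking that $\tau^*(a)$ is itself a best response to some belief), you set $\sigma(p)=\hat{\theta}_{a^*(p)}$ and use that both $a^*(\hat{\theta}_a)$ and $\tau^*(a)$ lie in $\argmax_{a'}\E_{\theta\sim\hat{\theta}_a}[\decisionu(a',\theta)]$, hence yield identical totals on the rounds with $a_t=a$. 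This sidesteps the existence question entirely and gives the inequality directly; your tie-breaking remark is exactly the right observation to close the argument.
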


\begin{claim}[{\citet{kleinberg2023u}, Theorem 12}]\label{claim:kleinberg k1 swap bound}
\footnote{\cite{kleinberg2023u} assume that the output of the scoring rule $S$ is in $[-1,1]$, whereas we assume the output is in $[0,1]$. Thus, the constant $4$ in their bound translates to the constant $2$ here.}For any proper scoring rule $S:[0,1]\times \{0,1\}\to [0,1]$ and any sequences $\vpred = (\pred_1,\ldots,\pred_T)\in [0,1]^T, \vstate = (\state_1,\ldots,\state_T)\in \{0,1\}^T$, it holds that
\[
\cfdl_{S}(\vpred,\vstate) = \predswap_{S}(\vpred,\vstate) \le 2\ece(\vpred, \vstate).
\]
\end{claim}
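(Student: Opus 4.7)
The plan is to first invoke \Cref{prop:cfdl eq pswap} to replace $\cfdl_S$ by $\predswap_S$, and then prove the inequality $\predswap_S(\vpred,\vstate)\le 2\,\ece(\vpred,\vstate)$ by a bucket-wise argument. As shown in the proof of \Cref{prop:cfdl eq pswap}, the maximum defining $\predswap_S$ is attained by the swap $\sigma^*$ that sends each prediction value $\q_i$ to its conditional empirical frequency $\empq_i$. Grouping rounds by bucket and using the fact that the empirical distribution of $\state_t$ conditional on $\pred_t = \q_i$ is $\empq_i$ yields
\begin{equation*}
\predswap_S(\vpred,\vstate) \;=\; \frac{1}{T}\sum_i n_i\,\E_{\state\sim\empq_i}\bigl[\score(\empq_i,\state)-\score(\q_i,\state)\bigr].
\end{equation*}
Comparing this with $\ece(\vpred,\vstate)=\frac{1}{T}\sum_i n_i\,|\q_i-\empq_i|$, it therefore suffices to prove the pointwise bound
\begin{equation*}
\E_{\state\sim\hat q}\bigl[\score(\hat q,\state)-\score(q,\state)\bigr] \;\le\; 2|\hat q-q|
\end{equation*}
for every proper scoring rule $\score:[0,1]\times\{0,1\}\to[0,1]$ and all $q,\hat q\in[0,1]$.

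The main step is a symmetrization argument. Introduce the auxiliary function $g(p):=\score(p,1)-\score(p,0)$; since $\score$ takes values in $[0,1]$, we have $g(p)\in[-1,1]$. Applying properness of $\score$ with true distribution $\hat q$ and then with true distribution $q$ shows that both
\begin{equation*}
\E_{\state\sim\hat q}[\score(\hat q,\state)-\score(q,\state)] \;\ge\; 0 \quad\text{and}\quad \E_{\state\sim q}[\score(q,\state)-\score(\hat q,\state)] \;\ge\; 0.
\end{equation*}
Expanding all four expectations and collecting terms, their sum telescopes to a compact form,
\begin{equation*}
\E_{\state\sim\hat q}[\score(\hat q,\state)-\score(q,\state)] + \E_{\state\sim q}[\score(q,\state)-\score(\hat q,\state)] \;=\; (\hat q - q)\bigl(g(\hat q)-g(q)\bigr).
\end{equation*}
Since both summands on the left are non-negative, each is individually bounded above by the right-hand side, so
\begin{equation*}
\E_{\state\sim\hat q}[\score(\hat q,\state)-\score(q,\state)] \;\le\; |\hat q - q|\cdot|g(\hat q)-g(q)| \;\le\; 2|\hat q - q|,
\end{equation*}
where the last step uses $g\in[-1,1]$. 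Substituting into the bucket decomposition yields $\predswap_S(\vpred,\vstate) \le \frac{2}{T}\sum_i n_i|\q_i-\empq_i| = 2\,\ece(\vpred,\vstate)$.

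The only nontrivial step is the symmetrization trick above. It is tempting to try to bound the one-sided gap directly through the Bregman-divergence representation $D_G(\hat q,q)$ of the proper scoring rule, with $G(p)=\E_{\state\sim p}[\score(p,\state)]$, but $G$ is only known to be convex and bounded in $[0,1]$, and $G'$ can blow up near $p\in\{0,1\}$ (e.g.\ for the logarithmic score), so a direct derivative-based bound is not available. Symmetrizing the two properness inequalities sidesteps this issue by making the dependence on $\score$ factor through the bounded auxiliary function $g$, whose total variation over $[0,1]$ is at most $2$. The bucket decomposition and the final summation over $i$ are routine.
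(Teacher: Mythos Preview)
Your proof is correct. The paper itself does not give a proof of this claim; it is stated with attribution to \citet{kleinberg2023u} and used as a black box, so there is no in-paper argument to compare against. Your symmetrization trick---adding the two properness inequalities so that the dependence on $\score$ collapses to the bounded difference $g(p)=\score(p,1)-\score(p,0)\in[-1,1]$---is exactly the device used in the cited reference, and your bucket decomposition matches \Cref{prop:swap-bregman} in the paper. The commentary at the end about why a direct Bregman-divergence bound via $G'$ fails (e.g.\ for the log score) is a nice observation and correctly identifies why the symmetrization is the right move.
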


 $\lcal{2}$ calibration error can be written as a special case of $\cfdl$ for quadratic scoring rule (a.k.a. squared loss). In fact, the prediction swap regret was first introduced in \citet{foster1998asymptotic} with the quadratic scoring rule $\score(\pred, \state) = 1-(\pred - \state)^2$. 

\begin{lemma}\label{lem:k2-quadratic}
Define the quadratic scoring rule $\score_2(\pred, \state) = 1-(\pred - \state) ^2$. We have
\begin{equation*}
    \lcal{2} = \cfdl_{\score_2}.
\end{equation*}
\end{lemma}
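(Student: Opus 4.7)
The plan is to prove this identity by a direct algebraic computation, grouping terms in $\cfdl_{\score_2}$ by the prediction bucket and exploiting the fact that $\state_t\in\{0,1\}$ so $\state_t^2=\state_t$, together with the definition $\empq_i = \frac{1}{n_i}\sum_{t:\pred_t=\q_i}\state_t$.

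First I would unfold the definition: by definition of $\cfdl$ and of $\score_2$, $\cfdl_{\score_2}(\vpred,\vstate) = \frac{1}{T}\sum_t\bigl[(\pred_t-\state_t)^2 - (\sigma^*(\pred_t)-\state_t)^2\bigr]$ where $\sigma^*(\q_i)=\empq_i$ by \Cref{def:empirical calibrate}. Partitioning the sum by the bucket $i$ with $\pred_t=\q_i$, it suffices to show that for each $i$,
\[
\sum_{t:\pred_t=\q_i}\bigl[(\q_i-\state_t)^2 - (\empq_i-\state_t)^2\bigr] \;=\; n_i(\q_i-\empq_i)^2.
\]

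Second I would verify this per-bucket identity. Expanding $(\q_i-\state_t)^2 - (\empq_i-\state_t)^2 = \q_i^2 - \empq_i^2 - 2(\q_i-\empq_i)\state_t$, summing over $t$ with $\pred_t = \q_i$, and substituting $\sum_{t:\pred_t=\q_i}\state_t = n_i\empq_i$ gives $n_i(\q_i^2-\empq_i^2) - 2(\q_i-\empq_i)\cdot n_i\empq_i = n_i(\q_i-\empq_i)^2$. (Equivalently, this is just the bias-variance decomposition: for $\state$ drawn from the empirical distribution on bucket $i$, $\mathbb{E}[(\q_i-\state)^2]-\mathbb{E}[(\empq_i-\state)^2]=(\q_i-\empq_i)^2$.) Dividing by $T$ and summing over buckets yields
\[
\cfdl_{\score_2}(\vpred,\vstate) \;=\; \frac{1}{T}\sum_{\q_i\in Q}n_i(\q_i-\empq_i)^2 \;=\; \lcal{2}(\vpred,\vstate).
\]

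There is essentially no obstacle here beyond bookkeeping; the only subtle point is that the swap $\sigma^*(\q_i)=\empq_i$ in the definition of $\cfdl$ must coincide with the prediction-swap-regret-optimal swap for $\score_2$, but this is precisely \Cref{prop:cfdl eq pswap} applied to the quadratic (and hence proper) scoring rule $\score_2$, so no separate argument is needed.
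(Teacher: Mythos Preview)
Your proof is correct and is precisely the direct computation from definitions that the paper refers to (the paper says only that the lemma ``follows immediately from the definitions of $\lcal{2}$ and $S_2$''). Two harmless asides: you do not actually need $\theta_t^2=\theta_t$, since the $\theta_t^2$ terms cancel in the difference $(\q_i-\state_t)^2-(\empq_i-\state_t)^2$; and your closing remark about \Cref{prop:cfdl eq pswap} is unnecessary, because $\cfdl$ is by definition computed with the empirical-calibration swap $\sigma^*$.
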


$\ece$ cannot be represented by the $\cfdl$ with any proper scoring rule:
\begin{lemma}\label{lem:k1-untruthful}
    There does not exist a proper scoring rule $\score$, such that for any sequence of predictions $\vpred$ and states $\vstate$,
    \begin{equation*}
        \ece(\vpred, \vstate) = \cfdl_{\score}(\vpred, \vstate).
    \end{equation*}
\end{lemma}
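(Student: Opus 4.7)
The plan is to assume for contradiction that such a proper scoring rule $S$ exists and then over-constrain $S$ using two simple families of sequences, driving it into an affine form that provably fails to be proper. The high-level intuition is that $|p - \hat q_i|$ is piecewise linear in the bias, whereas the payoff loss induced by a proper scoring rule from miscalibration is (like a Bregman divergence) convex rather than V-shaped in the bias, so the two cannot match universally.

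First, I would fix an arbitrary $p \in [0, 1]$ and consider length-$T$ sequences in which every prediction equals $p$. For the sequence with all outcomes $\theta_t = 1$, the empirical frequency in the single occupied bucket is $\hat q = 1$, giving $\ece(\vpred, \vstate) = 1 - p$ while $\cfdl_S(\vpred, \vstate) = S(1, 1) - S(p, 1)$. Setting these equal forces $S(p, 1) = S(1, 1) - (1 - p)$. Taking the companion sequence with all outcomes equal to $0$ yields analogously $S(p, 0) = S(0, 0) - p$. Since $p$ was arbitrary, both $S(\cdot, 0)$ and $S(\cdot, 1)$ must be affine on $[0, 1]$.

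Next, I would plug these affine forms back into the properness criterion. Writing $a := S(0, 0)$ and $b := S(1, 1)$, the expected score when the true probability is $q$ and the report is $p$ is
\begin{equation*}
(1 - q)(a - p) + q(b + p - 1) = (1 - q)a + qb - q + (2q - 1)\,p,
\end{equation*}
which is affine in $p$ with slope $2q - 1$. Proper\-ness requires $p = q$ to maximize this for every $q \in [0, 1]$, but the maximizer is $p = 1$ for $q > 1/2$ and $p = 0$ for $q < 1/2$, contradicting properness for any $q \ne 1/2$. This contradiction rules out the existence of $S$.

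I do not anticipate a real obstacle: the only point requiring a word of care is that the constraint $S(p, \theta) = S(\theta, \theta) + (2\theta - 1)(p - \theta)$ is pinned down on the full interval $[0, 1]$, which is fine because the lemma quantifies over \emph{all} sequences (with no fixed prediction grid $Q$), so we are free to choose the constant prediction value $p$ to be any real number in $[0, 1]$. Everything else is the short algebraic verification sketched above.
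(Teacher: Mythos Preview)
Your argument is correct. Both steps—pinning down $S(\cdot,0)$ and $S(\cdot,1)$ as affine via the extreme single-bucket sequences, and then checking that the resulting affine scoring rule is improper—go through exactly as you describe.

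The paper does not spell out a proof but points to the Bregman divergence characterization (\Cref{prop:swap-bregman}): writing $\cfdl_S=\frac1T\sum_i n_i\,\breg_S(q_i,\hat q_i)$, the identity $\ece=\cfdl_S$ for all sequences forces $\breg_S(q,\hat q)=|q-\hat q|$ for every pair $(q,\hat q)$, and one then argues that $|q-\hat q|$ is not a Bregman divergence of any convex $u$ (fixing $q$, the function $\hat q\mapsto \breg_S(q,\hat q)$ is $u$ minus an affine function, so it would have to have a kink at every $q$ simultaneously). Your route avoids the Bregman machinery altogether: by testing only at $\hat q\in\{0,1\}$ you recover the same affine constraint on $S$ directly and then kill it with the properness inequality. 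This is more elementary and self-contained; the paper's route is slightly more conceptual in that it explains the obstruction as ``absolute value is not a Bregman divergence,'' which connects to the broader theme that $\cfdl$ is always a weighted Bregman loss.
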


\Cref{lem:k2-quadratic} follows immediately from the definitions of $\lcal{2}$ and $S_2$, whereas \Cref{lem:k1-untruthful} can be proved using the Bregman divergence characterization of proper scoring rules (see \Cref{prop:swap-bregman}).



\subsection{Online Binary Calibration}
\label{sec:online}

We focus on the classic online prediction problem studied by \cite{foster1998asymptotic}. The goal is to generate calibrated predictions in the long run, even if the states are adversarially selected. Unless otherwise specified, we present results under the binary prediction setting, i.e.\ the state space is $\statesp = \{0, 1\}$. 
In this setup, a predictor (algorithm) $F$ makes a prediction $\pred_t\in [0,1]$ at each round $t = 1,2,\ldots$, and an adversary $A$ picks a binary state (outcome) $\state_t\in \{0,1\}$. 
Both the prediction $\pred_t$ and the state $\state_t$ can depend on the past history $h_{t - 1} = (\pred_1, \state_1, \dots, \pred_{t-1}, \state_{t-1})$, but they cannot depend on each other. That is, we can assume without loss of generality that the algorithm chooses $p_t$ and the adversary reveals $\theta_t$ simultaneously. The transcript $h_t = (p_1,\theta_1,\ldots,p_t,\theta_t)$ is a function of the strategies of the predictor $F$ and the adversary $A$. That is, $h_t = h_t(F,A)$. The predictions in $T$ rounds are evaluated by a calibration error metric.

We allow the predictor to be randomized, in which case we can view its strategy as a distribution $\cF$ over deterministic strategies $F$. When we use an error metric, say $\ece$, to evaluate the predictions made by our predictor in $T$ rounds, our goal is to minimize the expected value of the error metric w.r.t.\ the worst-case adversary $A$, i.e., we want $\expect{F\sim \cF}{\robustcal(h_T(F,A))}$ to be small for every adversary $A$.

Previous results for different calibration errors are listed here. 

\begin{description}
    \item[$\ece$] There exists a gap between the known upperbound and lowerbound for $\ece$.
    \begin{description}
        \item[Upperbound] \begin{theorem}[\citealp{foster1998asymptotic}, see also \citealp{cal-minimax,foster2021hedging}]
There exists a randomized online binary prediction algorithm that guarantees $O(T^{-\sfrac{1}{3}})$ expected $\ece$. 
\end{theorem}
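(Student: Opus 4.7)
The plan is to discretize the prediction space into $m$ equally spaced buckets and apply Blackwell's approachability theorem to control the cumulative per-bucket miscalibration. Balancing the $O(\sqrt{m/T})$ approachability error against the $O(1/m)$ discretization bias yields the optimal choice $m \asymp T^{1/3}$ and the desired $O(T^{-1/3})$ rate.

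Concretely, I would fix $m = \lceil T^{1/3}\rceil$, set $Q = \{0, 1/m, \ldots, 1\}$, and restrict the predictor to output values in $Q$ at every round. For each bucket $i$, define the signed cumulative bias $D_i := n_i(\hat q_i - q_i)$, so that $\ece(\vpred,\vstate) = \frac{1}{T}\sum_i |D_i|$. The key technical step is to exhibit a randomized predictor guaranteeing $\sum_i \E[|D_i|] \le O(\sqrt{mT} + T/m)$ against any adaptive adversary. The classical route, going back to \citet{foster1998asymptotic}, is Blackwell approachability applied to the per-round bias vector $v_t \in \reals^{m+1}$ whose only nonzero coordinate is $(\state_t - q_i)$ in the position of the chosen bucket. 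The halfspace (one-step) condition is verified by observing that, by mixing the two grid points adjacent to any target conditional mean $\mu$, the predictor can ensure the expected $v_t$ has $\ell_2$-norm at most $O(1/m)$; this certifies that the trajectory $\sum_t v_t$ approaches a ball of radius $O(T/m)$ at the Blackwell rate $O(\sqrt{T})$. Applying Cauchy--Schwarz to translate the resulting $\ell_2$ bound into an $\ell_1$ bound and dividing by $T$ yields
\[
\E[\ece(\vpred,\vstate)] \le O\!\left(\sqrt{\tfrac{m}{T}} + \tfrac{1}{m}\right),
\]
which is minimized at $m = \lceil T^{1/3}\rceil$ to give $O(T^{-1/3})$.

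The main obstacle is verifying the approachability condition under an adaptive adversary. A deterministic predictor fails trivially: once $p_t$ is announced, the adversary can place $\state_t$ on the opposite side of $p_t$, producing an $\Omega(1)$ bias per round in the chosen bucket. Randomization over adjacent grid points is what allows the predictor to match any adversary conditional mean up to the discretization precision $1/m$, which is precisely the halfspace condition Blackwell's theorem demands. Alternative routes bypass Blackwell machinery---for example, the minimax existence proof of \citet{cal-minimax}, or the Hedge-based potential-minimization algorithm of \citet{foster2021hedging}---but each reduces to a variant of this two-point randomization argument and achieves the same $O(T^{-1/3})$ rate.
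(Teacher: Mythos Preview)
The paper does not supply its own proof of this theorem; it is stated purely as a background result attributed to \citet{foster1998asymptotic} (with alternative proofs in \citealp{cal-minimax,foster2021hedging}), so there is no in-paper argument to compare your proposal against. Your sketch is essentially the classical Blackwell-approachability proof from \citet{foster1998asymptotic}: discretize into $m$ buckets, track the signed bias vector $(D_i)_i$, verify the one-shot halfspace condition via two-point randomization over adjacent grid points, obtain an $\ell_2$ bound on the cumulative bias, convert to $\ell_1$ by Cauchy--Schwarz, and balance against the $1/m$ discretization to get $m\asymp T^{1/3}$. This is the standard route and is correct in outline.

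One small imprecision: your description of the halfspace condition (``the predictor can ensure the expected $v_t$ has $\ell_2$-norm at most $O(1/m)$'') is not quite the Blackwell condition. What is actually needed is that for every halfspace containing the target, the predictor has a mixed strategy whose expected one-step vector lies in that halfspace regardless of the adversary's choice of $\theta_t$; equivalently, the expected increment has nonpositive inner product with the current bias vector. The two-point randomization does establish this (exactly, not just up to $O(1/m)$, once the sign pattern of $D$ is used to locate the right adjacent pair), and the $1/m$ term you carry appears instead as a separate discretization bias. This is a matter of phrasing rather than a gap in the argument.
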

        \item[Lowerbound] \begin{theorem}[\citealp{sidestep}]
For any randomized online binary prediction algorithm, the $\ece$ w.r.t.\ the worst-case adversary is $\Omega(T^{-0.472})$.
\end{theorem}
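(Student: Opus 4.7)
The plan is to prove this lower bound via Yao's minimax principle: it suffices to exhibit a distribution over state sequences $(\theta_1,\ldots,\theta_T)$ such that every deterministic predictor incurs expected $\ece \ge \Omega(T^{-0.472})$ against it. After applying Yao, I would discretize the prediction space to $O(\sqrt T)$ buckets, using the standard observation that finer discretization affects ECE only by lower-order terms, so that at each round the deterministic predictor effectively commits to a bucket index as a function of the visible history.

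The core construction is a recursive ``sidestepping'' adversary. At the top level, the adversary runs for $T$ rounds and maintains a family of active intervals (buckets) that it is attacking. Within each recursive sub-call on $k$ rounds, the adversary either (a) concentrates on a single bucket and forces its empirical mean to deviate from its nominal prediction by $\Theta(1/\sqrt k)$ with constant probability, contributing $\Theta(\sqrt k)$ to un-normalized ECE, or (b) refines the prediction interval and recurses on sub-intervals at a finer scale. At each round, the adversary randomizes $\theta_t$ using a mixed strategy calibrated so that, whichever bucket the deterministic predictor commits to next, a suitably chosen potential function grows in expectation.

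To track progress I would introduce a potential $\Phi_t$ that aggregates signed per-bucket biases consistently across active intervals, designed so that (i) $\Phi_T$ lower-bounds $T \cdot \ece$ in expectation, via a sign-control argument that prevents cross-bucket cancellation, and (ii) the recursive invariant yields $\E[\Phi_T] \ge \Omega(T^{\alpha^*})$, where $\alpha^*$ is the fixed point of the recurrence that balances the ``concentrate'' and ``recurse'' branches. The exponent $0.528 = 1 - 0.472$ arises from solving this optimization, which pits the $\sqrt k$ payoff of concentration against the expected gain of splitting and recursing on intervals of geometrically shrinking width.

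The main obstacle, and the reason this goes beyond the folklore $\Omega(T^{-1/2})$ bound from i.i.d.\ coin flips, is establishing the recursive inequality against a worst-case deterministic predictor who knows the adversary's mixed strategy. One must argue that no deterministic response can simultaneously neutralize all active buckets: any bucket the predictor ``defends'' leaves others exposed to concentration. This requires a careful coupling of the adversary's randomization with the predictor's response that preserves the sign structure of $\Phi$, preventing the kind of cancellation that would collapse an unsigned potential. Verifying this level by level, including the delicate dependence on split ratios at each recursion depth and ensuring the errors committed at the leaves of the recursion tree add rather than cancel in expectation, is the technically hard heart of the argument.
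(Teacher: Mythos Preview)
This theorem is not proved in the present paper; it is quoted from \citet{sidestep} as a known result in the background section (see \Cref{sec:online}). There is therefore no proof in this paper to compare your proposal against.

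As to the proposal itself: what you have written is not a proof but a high-level outline of the kind of argument one expects such a lower bound to require. The themes you mention --- Yao's principle, a recursive ``sidestepping'' adversary, a signed potential that tracks per-bucket biases, and an exponent emerging as the fixed point of a recurrence balancing concentration against refinement --- are indeed the ingredients of the Qiao--Valiant construction. But the proposal does not state the adversary's actual randomized rule, does not define the potential $\Phi_t$, does not write down the recurrence whose solution is $0.528$, and does not carry out the sign-control / anti-cancellation argument you correctly identify as the heart of the matter. At this level of detail the sketch is consistent with the true proof but also with many incorrect variants; in particular, the step ``no deterministic response can simultaneously neutralize all active buckets'' is exactly where the real work lies, and you have only named it, not done it. If your goal is to reproduce the result, you would need to consult \citet{sidestep} directly and fill in each of these pieces concretely.
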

    \end{description}


\item[$\lcal{2}$ Calibration Error] 
As a variant of $\ece$, $\lcal{2}$ calibration error has an upperbound result from the literature. 

\begin{description}
    \item[Upperbound] 
    \begin{theorem}[\citealp{rothbook}]
\label{thm:k2 sqrt T}
There exists a randomized online binary prediction algorithm that guarantees $O(T^{-\sfrac{1}{2}}\log T)$ expected $\lcal{2}$ calibration error. 
\end{theorem}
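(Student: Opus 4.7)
The plan is to discretize the prediction space into $m = \lceil\sqrt{T}\rceil$ evenly spaced buckets $Q = \{q_1,\ldots,q_m\}$ with $q_i = i/m$, and then run a randomized online binary prediction algorithm that achieves a per-bucket concentration guarantee. Specifically, the algorithm should, at each round, randomize $p_t$ over (adjacent points in) $Q$ so that the bias sequence within each bucket behaves like a bounded martingale. One convenient realization is a Blackwell approachability-based algorithm in the style of \citet{foster1998asymptotic}, or equivalently a defensive-forecasting/potential scheme that ensures $\mathbb{E}[\theta_t \mid p_t = q_i,\,h_{t-1}]$ concentrates around $q_i$; alternatively one can simply invoke an existing per-bucket calibration primitive, e.g.\ the algorithm of \citet{noarov2023highdimensional}, as a black box.

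The analysis then proceeds bucket-wise. Let $D_i := \sum_{t:\pred_t = q_i}(\state_t - q_i)$, so that $n_i(q_i - \empq_i)^2 = D_i^2/n_i$ and
\[
\lcal{2}(\vpred,\vstate) \;=\; \frac{1}{T}\sum_{i=1}^{m} \frac{D_i^2}{n_i}.
\]
Conditional on the bucket index sequence, the chosen algorithm guarantees that $\{D_i^{(k)}\}$ (the partial sums inside bucket $i$) form a bounded martingale. Applying Azuma--Hoeffding inside each bucket and a union bound over the $m$ buckets, with probability at least $1 - 1/T$ we have $|D_i| \leq O(\sqrt{n_i \log T})$ for every $i$. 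Hence $D_i^2/n_i \leq O(\log T)$ per bucket, and summing,
\[
\lcal{2}(\vpred,\vstate)\;\leq\; \frac{m \cdot O(\log T)}{T}\;=\;O\!\left(\frac{\log T}{\sqrt T}\right).
\]
On the $1/T$-probability failure event we use the trivial bound $\lcal{2}\le 1$, which contributes only $O(1/T)$ in expectation and is absorbed. The discretization of $[0,1]$ into $m$ points contributes an additional $O(1/m^2) = O(1/T)$ additive term per round on the squared loss, again negligible.

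The hard part is Step~1: constructing (and proving correctness of) the randomized algorithm that ensures a per-bucket martingale property against an adaptive adversary. Naive rounding does not work because the adversary can correlate $\theta_t$ with the algorithm's distribution over $Q$. What makes it go through is that the algorithm chooses the randomization between adjacent grid points so that, in expectation over its internal coins, the conditional drift inside each bucket vanishes up to a $1/m$ discretization term. This is precisely what Blackwell approachability (or an appropriately designed online-learning potential) provides, and this is the step that must be invoked from \citet{rothbook}; the rest is the clean martingale-plus-union-bound calculation above.
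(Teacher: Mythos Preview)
The paper does not actually prove \Cref{thm:k2 sqrt T}; it is quoted as a prior result from \citet{rothbook}. That said, the paper does implicitly yield a proof: by \Cref{thm:k-msr} we have $\lcal{2}\le \robustcal$, so the main \Cref{thm:sqrt MSR} (\Cref{alg:min MSR}, with per-bucket concentration coming from the expert-regret oracle of \Cref{lemma:msmwc}/\Cref{lem:bound LL algorithm 1}) immediately gives $\E[\lcal{2}]\le O(T^{-1/2}\log T)$.

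Your high-level decomposition is right: with $m\approx\sqrt T$ buckets, the bound $|D_i|\le O(\sqrt{n_i\log T})$ for all $i$ yields $\lcal{2}\le \frac{1}{T}\sum_i O(\log T)=O(\log T/\sqrt T)$. The gap is in how you justify the per-bucket bound. You claim the constructive algorithm makes $\{D_i^{(k)}\}$ a bounded martingale because ``the conditional drift inside each bucket vanishes.'' This is not what randomizing between adjacent grid points achieves against an adaptive adversary. Since $p_t$ and $\theta_t$ are chosen simultaneously from $h_{t-1}$, we have $\E[\theta_t\mid p_t=q_i,h_{t-1}]=\E[\theta_t\mid h_{t-1}]=:\mu_t$, and for the drift to vanish you would need $q_i\approx\mu_t$; but the algorithm does not know $\mu_t$ and cannot arrange this. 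Blackwell/Foster-style randomization controls a \emph{potential} (e.g.\ $\sum_i D_i^2$ increases by at most $O(1)$ in expectation regardless of $\theta_t$), which is a different statement and does not license Azuma per bucket.

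There are two clean repairs. First, if you want a martingale argument, do it non-constructively via minimax (exactly as in \Cref{appdx:minimax}): after swapping the order of play the predictor \emph{does} know $\mu_t$, rounds it to the nearest grid point, and then $\theta_t-\mu_t$ is a genuine martingale difference inside each bucket with a $1/m$ rounding slack; Azuma plus union bound then gives your $|D_i|\le O(\sqrt{n_i\log T})$. Second, if you want a constructive algorithm, invoke the \citet{noarov2023highdimensional}/MSMWC primitive (as the paper does in \Cref{sec:alg}) to obtain $|D_i|\le O(\sqrt{n_i\log T})+n_i/m+\LL$ with $\E[\LL]=O(\log T)$, and then plug directly into your bucket sum; no Azuma step is needed or valid here, since the per-bucket bound comes from expert-regret inequalities, not martingale concentration.
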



\end{description}

\item[Smooth Calibration Error] \cite{utc} propose the smooth calibration error. 

\begin{description}
    \item[Upperbound] \cite{qiao-distance} prove the existence of an algorithm that achieves $O(\frac{1}{\sqrt{T}})$  distance to calibration, following which  \cite{elementary} give a construction of such an algorithm. The result implies $O(\frac{1}{\sqrt{T}})$ smooth calibration error, since the distance to calibration upperbounds the smooth calibration error. 

\begin{theorem}[\citealp{qiao-distance, elementary}]
    There exists a randomized online binary prediction algorithm that guarantees $O(\frac{1}{\sqrt{T}})$ expected distance to calibration. 
\end{theorem}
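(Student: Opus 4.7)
The plan is to use a minimax-based argument to establish existence, following the non-constructive approach of \citet{qiao-distance}. Setting up the online interaction as a two-player zero-sum game where the payoff is $\E_{F\sim \cF}[\distcal(h_T(F,A))]$, I would appeal to Sion's minimax theorem (after suitable convexification and discretization of both strategy spaces) to exchange the order of $\min_{\cF}\max_A$ and $\max_A\min_{\cF}$. This reduces the problem to showing that for every joint distribution $\mu$ over state sequences $\vstate \in \{0,1\}^T$, there exists a deterministic predictor achieving $O(1/\sqrt{T})$ expected distance to calibration against $\mu$.

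Against a committed $\mu$, the natural candidate is the Bayesian predictor $p_t = \E_\mu[\theta_t \mid \theta_1,\ldots,\theta_{t-1}]$, which produces a martingale difference sequence $\theta_t - p_t$. The key property I would exploit is that for any subset of rounds determined by the history, the within-subset bias concentrates by Azuma--Hoeffding at the rate $O(\sqrt{n_i \log T})$. The core technical step is converting this martingale concentration into a distance-to-calibration bound of the right order: I would construct an explicit calibrated perturbation $\vempp$ of $\vpred$ whose per-round $L_1$ distance to $\vpred$ is $O(1/\sqrt{T})$ on average, by recalibrating each bucket to its empirical frequency and redistributing any residual mass across adjacent buckets.

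The main obstacle is this last conversion step, since distance to calibration insists on the \emph{existence} of a nearby perfectly calibrated sequence rather than merely small bucket-wise biases, and naive bucketing arguments (via Cauchy--Schwarz over $\Theta(\sqrt{T})$ buckets) lose a polynomial factor and yield only $O(T^{-1/4})$. \citet{qiao-distance} resolve this by an indirect argument using a transport-based characterization of distance to calibration that avoids the worst-case Cauchy--Schwarz loss. The later work of \citet{elementary} instead bypasses the minimax step by designing an online potential function that tracks the cumulative signed bias against every relevant recalibration target simultaneously, and prescribing an explicit predictor that keeps this potential bounded by $O(\sqrt{T})$ throughout all $T$ rounds, yielding an efficient construction with the same asymptotic guarantee.
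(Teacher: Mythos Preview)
The paper does not prove this theorem at all: it appears in the preliminaries (\Cref{sec:online}) as a cited background result, with the proof deferred entirely to \citet{qiao-distance} and \citet{elementary}. So there is nothing in the paper to compare your proposal against.

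That said, your sketch is a reasonable high-level summary of what those two references actually do. The minimax reduction followed by the Bayesian conditional-mean predictor is indeed the backbone of \citet{qiao-distance}, and you correctly identify the crux: bucket-wise martingale concentration by itself only yields $|q_i - \hat q_i| \lesssim 1/\sqrt{n_i}$, and a naive Cauchy--Schwarz aggregation over $\Theta(\sqrt T)$ buckets would degrade to $O(T^{-1/4})$. The substance of \citet{qiao-distance} is precisely the step you describe as ``transport-based'': they show that the distance to calibration can be controlled by a more refined coupling argument that avoids paying for all buckets simultaneously. Your one-line description of \citet{elementary} as ``an online potential function tracking cumulative signed bias'' is vaguer than what they actually do (an explicit sign-based prediction rule with a careful invariant), but the spirit is right.

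If this were meant to stand as a self-contained proof rather than a literature summary, the gap would be exactly the step you yourself flag as ``the main obstacle'': you have not actually carried out the construction of the nearby calibrated sequence $\hat{\bm p}$ with $\frac{1}{T}\sum_t |p_t - \hat p_t| = O(1/\sqrt T)$, nor the potential-function argument. Both of these are the nontrivial content of the cited papers, so as written your proposal is a correct outline but not a proof.
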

\item [Lowerbound]
\begin{theorem}[\citealp{qiao-distance}]
    For any randomized online binary prediction algorithm, the smooth calibration error w.r.t.\ the worst-case adversary is $\Omega(T^{-\frac{2}{3}})$.
\end{theorem}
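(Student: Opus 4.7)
The plan is to derive this lower bound by combining the polynomial relationship between smooth calibration error and distance to calibration (established earlier in the excerpt) with an independent lower bound on distance to calibration also proved by \citet{qiao-distance}.

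First, I would recall the inequality $\distcal \le \sqrt{32\, \smooth}$ from the lemma stated in \Cref{sec:prelim}. Squaring and rearranging, for every prediction-state transcript we have $\smooth \ge \distcal^2/32$. Hence any lower bound of the form $\mathbb{E}[\distcal] \ge \Omega(T^{-1/3})$ immediately yields the desired $\Omega(T^{-2/3})$ lower bound on the expected smooth calibration error, after using Jensen's inequality ($\mathbb{E}[\distcal^2] \ge (\mathbb{E}[\distcal])^2$). So the task reduces to proving $\mathbb{E}[\distcal] = \Omega(T^{-1/3})$ against a worst-case adversary.

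Second, for the distance-to-calibration lower bound, I would use a Yao-style minimax argument: it suffices to exhibit a distribution over state sequences $\vstate$ such that every deterministic predictor $F$ produces a transcript with $\mathbb{E}_\vstate[\distcal(F(\vstate),\vstate)] = \Omega(T^{-1/3})$. The adversarial distribution partitions $T$ rounds into phases of length $\Theta(T^{2/3})$, and within each phase the states are drawn from a biased coin whose bias is itself selected randomly at the start of the phase. The intuition is that at the start of each phase, the predictor cannot know the new bias and must commit to a prediction; while the predictor can eventually learn the empirical frequency of each phase, any sequence of predictions that is $\varepsilon$-close to a calibrated sequence must have very few ``jumps'' of size larger than $\varepsilon$, so it cannot simultaneously track biases that differ by $\Omega(T^{-1/3})$ across phases. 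Formalizing this trade-off using a potential-function argument (or a martingale concentration bound on the phase-wise empirical averages) yields the $\Omega(T^{-1/3})$ lower bound on the expected distance to calibration.

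The main obstacle is the adversarial construction in the second step: the difficulty lies in proving a tight lower bound on the number of ``large prediction jumps'' needed to approximate any calibrated sequence, which requires careful accounting of how $\distcal$ constrains both the local fluctuations and the global empirical biases. Once this distance-to-calibration lower bound is in hand, the passage to smooth calibration error is essentially free via the quadratic relationship from \citet{utc}, losing only the square in the exponent and producing the stated $\Omega(T^{-2/3})$ bound.
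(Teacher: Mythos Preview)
This theorem is cited from \citet{qiao-distance} without proof in the present paper; there is no argument here to compare your proposal against.

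On its own merits, your reduction strategy---deriving the $\Omega(T^{-2/3})$ smooth calibration lower bound from an $\Omega(T^{-1/3})$ distance-to-calibration lower bound via the pointwise inequality $\smooth \ge \distcal^2/32$ and Jensen's inequality---is logically sound. However, this pushes all of the substantive work into the distance-to-calibration lower bound, for which you give only a high-level sketch. Your phase-based adversary is plausible in spirit, but the sketch is too vague to assess: you have not made precise the claim that any predictor close to a calibrated sequence must make few large jumps, nor why the phase structure forces $\Omega(T^{-1/3})$ total distance rather than something smaller. If you intend a self-contained proof rather than simply citing \citet{qiao-distance}, that step is where the real difficulty lies, and the construction in that paper is considerably more delicate than the outline you give.
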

\end{description}

\end{description}


\section{Calibration Decision Loss}

\label{sec:MSR}

In this section, we introduce our calibration error metric, the Calibration Decision Loss (CDL), meaning the decision loss resulted from miscalibration. In \Cref{sec:MSR-breg,sec:MSR-Vshape}, we discuss connections between $\robustcal$ and Bregman divergences that will be useful for obtaining our main result in \Cref{sec:main-sqrt T}. We show an efficient algorithm for computing $\robustcal$ in \Cref{sec:computation}.


\begin{definition}[$\robustcal$]
    Given a sequence of $T$ predictions $\vpred = (\prob_t)_{t\in [T]}$ and realization $\vstate  = (\state_t)_{t\in [T]}$ of states, we define the Calibration Decision Loss ($\robustcal$) as
    \begin{equation}\label{eq:cdl}
        \robustcal(\vpred, \vstate) = \sup_{\score\in [0, 1]} \cfdl_S(\vpred, \vstate),
    \end{equation}
    where the supremum is over all proper scoring rules $\score$ with range bounded in $[0, 1]$.
\end{definition}

The $\robustcal$ as defined above is equal to the maximum swap regret (\Cref{def:action-swap}) of best responding decision makers with payoffs bounded in $[0,1]$:
\begin{lemma}
 Let $\vpred = (\prob_t)_{t\in [T]}$ be a sequence of $T$ predictions  and let $\vstate  = (\state_t)_{t\in [T]}$ be the realization  of states. Consider a decision maker with payoff function $\decisionu:A\times \{0,1\}\to [0,1]$ who best responds to the predictions by taking $\act_t = \bestact(\pred_t)=\argmax_{\act\in\actsp}\expect{\state\sim \prob_t}{\decisionu(\act, \state)}$. Let $\vact_U = (a_1,\ldots,a_T)$ be the vector of best-response actions. We have
    \begin{equation*}
        \robustcal(\vpred, \vstate) = \sup_{\decisionu}\actswap_\decisionu(\vact_U, \vstate),
    \end{equation*}
where the supremum is over all payoff functions $U:A\times \{0,1\}\to [0,1]$ with arbitrary action spaces $A$.
\end{lemma}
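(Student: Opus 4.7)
The plan is to prove the claim by showing two matching inequalities, using the equivalence between proper scoring rules and best-response payoffs established in Claim \ref{claim:decision-proper}, together with Proposition \ref{prop:cfdl eq pswap} ($\cfdl_S = \predswap_S$) and Claim \ref{claim:action-prediction} ($\actswap_U \le \predswap_{S_U}$).

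For the direction $\robustcal(\vpred,\vstate) \ge \sup_U \actswap_U(\vact_U,\vstate)$, I would fix an arbitrary payoff $U:A\times\{0,1\}\to[0,1]$ with its best-response vector $\vact_U$. Claim \ref{claim:decision-proper}(1) tells us the induced scoring rule $S_U(p,\theta) = U(a^*(p),\theta)$ is proper, and it inherits the range $[0,1]$ from $U$, so it is an admissible rule in the supremum defining $\robustcal$. Chaining Claim \ref{claim:action-prediction} with Proposition \ref{prop:cfdl eq pswap} gives
\[
\actswap_U(\vact_U,\vstate) \;\le\; \predswap_{S_U}(\vpred,\vstate) \;=\; \cfdl_{S_U}(\vpred,\vstate) \;\le\; \robustcal(\vpred,\vstate),
\]
and taking the supremum over $U$ yields this direction.

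For the reverse direction $\robustcal(\vpred,\vstate) \le \sup_U \actswap_U(\vact_U,\vstate)$, I would fix an arbitrary proper scoring rule $S$ with range in $[0,1]$ and construct a matching decision task as in the proof of Claim \ref{claim:decision-proper}(2): take $A := \Delta(\statesp) = [0,1]$ and define $U(a,\theta) := S(a,\theta)$, which lies in $[0,1]$. Properness of $S$ forces the best response to be $a^*(p) = p$, so $\vact_U = \vpred$. Then
\[
\actswap_U(\vact_U,\vstate) \;=\; \max_{\sigma:\Delta(\statesp)\to\Delta(\statesp)} \frac{1}{T}\sum_t \bigl[U(\sigma(p_t),\theta_t) - U(p_t,\theta_t)\bigr] \;=\; \predswap_S(\vpred,\vstate) \;=\; \cfdl_S(\vpred,\vstate),
\]
since $U$ and $S$ are literally the same function on $A\times\{0,1\}$ and $a_t = p_t$. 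Taking the supremum over $S$ on the right-hand side of the definition of $\robustcal$ and noting that each such $S$ is realized as an $\actswap_U$ gives the second inequality.

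The two pieces together prove the lemma. There is no real obstacle: the only subtlety worth flagging is that in the construction of $U$ from $S$ one must make the action space literally the prediction space $\Delta(\statesp)$, which collapses the gap between action-level swaps and prediction-level swaps that Claim \ref{claim:action-prediction} treats as a weak inequality. This construction turns the inequality into an equality and is what makes the supremum over bounded proper scoring rules coincide with the supremum over bounded decision tasks.
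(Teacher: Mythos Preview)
Your proposal is correct and follows essentially the same two-inequality argument as the paper: for $\robustcal \ge \sup_U \actswap_U$ the paper invokes Claim~\ref{claim:action-prediction} (action swap $\le$ prediction swap), and for the reverse direction it constructs, exactly as you do, the decision task with $A = [0,1]$ and $U = S$ from the proof of Claim~\ref{claim:decision-proper}, noting that this collapses $\actswap_U$ and $\predswap_S$. Your write-up is slightly more explicit (you spell out that $S_U$ stays in $[0,1]$ and invoke Proposition~\ref{prop:cfdl eq pswap} by name), but the route is the same.
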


\begin{proof}
    For any decision task with payoff function $U$, the prediction swap regret for the corresponding scoring rule $S$ is higher than the swap regret: $\actswap_\decisionu(\vact_U, \vstate) \le \swapS_{\score_\decisionu}(\vpred, \vstate)$ (\Cref{claim:action-prediction}). Thus, $\robustcal\geq \sup_{\decisionu}\actswap_\decisionu(\vact_U, \vstate)$. On the other hand, for any scoring rule $\score$,  construct a decision task with payoff $\decisionu$ as in the proof of \Cref{claim:decision-proper}, by setting $\actsp = [0, 1]$. The resulting decision task has $\swapS_{\score} = \actswap_{\decisionu}$. 
Thus we have  $\robustcal= \sup_{\decisionu}\actswap_\decisionu(\vact_U, \vstate)$.
\end{proof}

By definition, if $\robustcal$ vanishes, then $\cfdl$ (and thus swap regret) also vanishes for every downstream agent. 

\begin{proposition}
    If $\robustcal = \asympO(\frac{1}{\sqrt{T}})$, then both $\cfdl = \asympO(\frac{1}{\sqrt{T}})$ and $\actswap = \asympO(\frac{1}{\sqrt{T}})$ simultaneously for every downstream decision task with  payoff bounded in $[0, 1]$. 
\end{proposition}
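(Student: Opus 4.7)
The proposition is essentially an immediate unpacking of the two characterizations of $\robustcal$ that have just been established, so my proposal is to give a short direct argument rather than a deep proof, and to flag that the real content lies in the preceding definition and lemma, not in this corollary.

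First I would fix an arbitrary downstream decision task with payoff $\decisionu:\actsp\times\{0,1\}\to[0,1]$, together with the induced proper scoring rule $S_\decisionu$, which has range in $[0,1]$ by construction. Since $\robustcal(\vpred,\vstate)$ is defined in \eqref{eq:cdl} as the supremum of $\cfdl_S(\vpred,\vstate)$ over all proper scoring rules $S$ with range in $[0,1]$, I get the pointwise bound $\cfdl_{S_\decisionu}(\vpred,\vstate)\le \robustcal(\vpred,\vstate)$. Combining this with the assumed rate $\robustcal=\asympO(T^{-1/2})$ immediately yields $\cfdl_{S_\decisionu}=\asympO(T^{-1/2})$, and since $\decisionu$ was arbitrary this holds simultaneously for every payoff-bounded task. (Note that the inequality holds for the fixed realized sequence $(\vpred,\vstate)$, and in the randomized online setting we simply take expectations over the predictor's randomness on both sides.)

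Next I would handle the $\actswap$ statement in exactly the same way, but via the lemma proved just before this proposition, which shows $\robustcal(\vpred,\vstate)=\sup_\decisionu \actswap_\decisionu(\vact_\decisionu,\vstate)$ where $\vact_\decisionu$ is the best-response action sequence. Pointwise, this gives $\actswap_\decisionu(\vact_\decisionu,\vstate)\le \robustcal(\vpred,\vstate)$ for every payoff-bounded $\decisionu$, so the $\asympO(T^{-1/2})$ rate transfers uniformly across tasks. As an alternative (and equivalent) route, I could bypass the lemma and instead use \Cref{claim:action-prediction}, which gives $\actswap_\decisionu(\vact_\decisionu,\vstate)\le \predswap_{S_\decisionu}(\vpred,\vstate)=\cfdl_{S_\decisionu}(\vpred,\vstate)$, and then apply the bound on $\cfdl_{S_\decisionu}$ from the previous paragraph; this is the route I would probably present, since it makes the logical chain $\actswap \le \cfdl \le \robustcal$ explicit.

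There is no real obstacle here: the statement is a corollary of (i) the definition of $\robustcal$ as a supremum over all payoff-bounded proper scoring rules, (ii) the equivalence $\cfdl_S=\predswap_S$ from \Cref{prop:cfdl eq pswap}, and (iii) the reduction from action swap regret to prediction swap regret from \Cref{claim:action-prediction}. The only minor thing worth mentioning explicitly is the quantifier order: because $\robustcal$ is a single quantity that dominates the per-task regret for \emph{every} bounded $\decisionu$, a single predictor achieving the $\asympO(T^{-1/2})$ bound on $\robustcal$ serves all downstream decision tasks simultaneously, which is the qualitative strengthening over per-task guarantees that the proposition is meant to highlight.
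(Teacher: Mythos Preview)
Your proposal is correct and matches the paper's treatment: the paper simply remarks ``By definition, if $\robustcal$ vanishes, then $\cfdl$ (and thus swap regret) also vanishes for every downstream agent'' and states the proposition without further proof. Your unpacking via the definition of $\robustcal$ and the chain $\actswap_\decisionu \le \cfdl_{S_\decisionu} \le \robustcal$ is exactly the intended argument.
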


\subsection{Bregman Divergence Formulation}
\label{sec:MSR-breg}
As an important preparation for our main result in \Cref{sec:main-sqrt T}, we show that the $\cfdl$ can be written in an equivalent form using the Bregman divergence (\Cref{prop:swap-bregman}). This follows from the convex function formulation of proper scoring rules from \citet{Mcc-56, sav-71}. 


Each proper scoring rule induces a Bregman divergence, measuring the loss from incorrect prediction. The Bregman divergence is defined with the convex function of a scoring rule. See \Cref{fig:proper-score} for a  geometric demonstration of proper scoring rules.

\begin{theorem}[\citealp{Mcc-56, sav-71}]
\label{thm:score-u}
    A scoring rule is proper if and only if there exists a convex function $\util:[0,1]\to\reals$ and its sub-gradient $\nabla\util:[0,1]\to \reals$ such that
\begin{equation}
\label{eq:score-u}
    \score(\pred, \state) = \util(\pred) + \nabla\util(\pred)\cdot(\state- \pred).
\end{equation}
\end{theorem}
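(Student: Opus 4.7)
The plan is to prove both directions of this characterization. For the easier direction (sufficiency), I would fix a convex $\util:[0,1]\to\reals$ with subgradient $\nabla\util$, define $\score(\pred,\state):=\util(\pred)+\nabla\util(\pred)(\state-\pred)$, and verify properness by direct computation. Taking expectation under $\state\sim\pred$ gives $\expect{\state\sim\pred}{\score(\pred,\state)}=\util(\pred)$, while $\expect{\state\sim\pred}{\score(\pred',\state)}=\util(\pred')+\nabla\util(\pred')(\pred-\pred')$. Properness then reduces exactly to the subgradient inequality $\util(\pred)\ge \util(\pred')+\nabla\util(\pred')(\pred-\pred')$, which holds by definition of subgradient.

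For the harder direction (necessity), I would start from a proper $\score$ and define $\util(\pred):=\expect{\state\sim\pred}{\score(\pred,\state)}=\pred\cdot\score(\pred,1)+(1-\pred)\score(\pred,0)$, the expected score of a truthful predictor. My first step is to show $\util$ is convex. For $\pred=\lambda \pred_1+(1-\lambda)\pred_2$, I would exploit the fact that $\expect{\state\sim\pred}{\score(\pred',\state)}$ is linear in $\pred$ for any fixed $\pred'$: hence $\util(\pred)=\lambda\expect{\state\sim \pred_1}{\score(\pred,\state)}+(1-\lambda)\expect{\state\sim \pred_2}{\score(\pred,\state)}$, and applying properness to each term (with $\pred$ as the ``wrong report'' against the true distributions $\pred_1,\pred_2$) gives $\util(\pred)\le\lambda\util(\pred_1)+(1-\lambda)\util(\pred_2)$.

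Next I would define the candidate subgradient $\nabla\util(\pred):=\score(\pred,1)-\score(\pred,0)$ and verify it belongs to the subdifferential of $\util$. A direct calculation shows $\expect{\state\sim\pred}{\score(\pred',\state)}-\util(\pred')=(\pred-\pred')[\score(\pred',1)-\score(\pred',0)]$; combining with properness ($\expect{\state\sim\pred}{\score(\pred,\state)}\ge \expect{\state\sim\pred}{\score(\pred',\state)}$) yields the subgradient inequality $\util(\pred)\ge \util(\pred')+\nabla\util(\pred')(\pred-\pred')$. Finally, I would substitute $\state=1$ and $\state=0$ separately into $\util(\pred)+\nabla\util(\pred)(\state-\pred)$ and check by elementary algebra that the two values simplify to $\score(\pred,1)$ and $\score(\pred,0)$ respectively, completing the representation.

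I do not expect a serious obstacle beyond careful bookkeeping. The only subtlety is that where $\util$ fails to be differentiable, the subdifferential is multi-valued; the prescription $\nabla\util(\pred)=\score(\pred,1)-\score(\pred,0)$ provides a canonical single-valued selection that is always a valid subgradient by the argument above, which is sufficient for the stated representation.
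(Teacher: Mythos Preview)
Your argument is correct and is essentially the standard proof of the McCarthy--Savage characterization. Note, however, that the paper does not give its own proof of this statement: it is quoted as a classical result with attribution to \citet{Mcc-56,sav-71}, so there is no in-paper proof to compare against. Your write-up would serve perfectly well as a self-contained proof if one were needed.
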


Notice that when the prediction is the true distribution, $\expect{\state\sim\pred}{\score(\pred, \state)} = \util(\pred)$ since $\expect{}{\state} = \pred$.

\begin{definition}[Bregman divergence]
    Specified by a convex function $\util:[0,1]\to \reals$ and its sub-gradient $\nabla \util:[0,1]\to \reals$, the Bregman divergence $\breg(\pred, \hat \pred)$ is defined as\footnote{The definition of Bregman divergence here reverses the order of input in the conventional definition to align with proper scoring rules. In the conventional definition, $\breg(\pred, \hat{\pred}) = \util(\pred) - \util(\hat{\pred}) + \nabla\util(\hat{\pred})(\hat{\pred} - \pred)$.}
\begin{equation}
\label{eq:breg}
    \breg(\pred, \hat{\pred}) = \util(\hat{\pred}) - \util(\pred) + \nabla\util(\pred)(\pred - \hat{\pred}).
\end{equation}
\end{definition}

Given a proper scoring rule $\score$, we write the Bregman divergence defined with $\util(\pred) = \expect{\state\sim\pred}{\score(\pred, \state)}$ as $\breg_\score$. Breaking down $\cfdl$ to each prediction value in $Q$, it follows that the contribution of $\q_i\in Q$ is:
\begin{equation*}
    \frac{1}{\qcount_i}\sum_{t\in [T]}(\score(\empq_i, \state_t) - \score(\q_i, \state_t) )\ind{\pred_t = \q_i} = \expect{\state\sim\empq_i}{\score(\empq_i, \state) - \score(\q_i, \state)}=\breg_\score(\q_i, \empq_i).
\end{equation*}

\begin{figure}[thbp]
    \centering
          \begin{tikzpicture}[scale = 0.50]
      
      \draw[scale=1, domain=0:1.0, smooth, variable=\t, ultra thick]
  plot ({\t * 10},{(\t^2 + 1 - \t) * 5});

        \draw [white] (0, 0) -- (11.5, 0);
        \draw (0,0) -- (10.5, 0);
        \draw (0, 0) -- (0, 5.5);

\draw[scale = 1, domain = 0:1, variable = \x] plot ({\x * 10}, {(0.75^2 + 1 - 0.75 + (2 * 0.75 - 1) * (\x - 0.75)) * 5});
        \draw[dotted] (0, 5) -- (10, 5);
        \draw[dotted] (7.5, 0) -- (7.5, 4.0625);
        \draw (7.5, -0.5) node {$\pred$};

        \node [circle, fill = black, inner sep=0pt, minimum size=3.5pt, label = left: {$\score(\pred, 0)$}] at (0, 2.1875){};
        \node [circle, fill = black, inner sep=0pt, minimum size=3.5pt, label = right: {$\score(\pred, 1)$}] at (10, 4.6875){};
        
        \node [circle, fill = black, inner sep=0pt, minimum size=3.5pt, label = north: {$\util(\pred)$}] at (7.5, 4.0625){};

        \draw (2, 0) -- (2, 0.2);
        \draw [dotted] (2, 0) -- (2, 4.2);
        \draw (2, -0.5) node {$\empp$};

        \draw [dotted] (2, 4.2) -- (0, 4.2);
        \draw[dotted] (2, 2.6875) -- (0,  2.6875);        \node [circle, fill = black, inner sep=0pt, minimum size=3.5pt, label = north: {$\util(\empp)$}] at (2, 4.2){};     

        \draw [decorate,decoration={brace,amplitude=3pt,mirror, raise = 1.5pt}]
  (0, 4.2) -- (0,  2.6875) node[midway, xshift = -35pt]{$\breg(\pred, \empp)$};

        \draw (-0.5, 6) node {$\score$};

        \draw (-0.38, 0) node {\small $0$};
        \draw (-0.38, 5) node {\small $1$};
        \draw (10, -0.5) node {\small $1$};
        \draw (0, -0.5) node {$0$};
        \draw (10, 0) -- (10, 0.2);

        \draw (5,-1.3) node {state; report};
        \draw  (10.5,2.5) node [rotate=90] {score};
        
      \end{tikzpicture}     
    \caption{The graphic explanation of the connection between proper scoring rule and Bregman divergence. The thick convex curve plots the convex utility function $\util(\pred)$ for a proper scoring rule. Fix a report, the score $\score(\pred, \state) = \util(\pred) + \nabla\util(\pred) (\state - \pred)$ is the extreme points on the gradient hyperplane passing $\util(\pred)$ (the thin line). Given empirical distribution $\empp$, the Bregman divergence $\breg(\pred, \empp)$ is the loss of reporting $\pred$ instead of $\empp$.}
    \label{fig:proper-score}
\end{figure}

\begin{proposition}\label{prop:swap-bregman}
 Given a sequence of $T$ predictions $\vpred = (\pred_t)_{t\in [T]}$ from a finite set $Q = \{q_1,\ldots,q_m\}\subseteq [0,1]$ and realizations $\vstate  = (\state_t)_{t\in [T]}$ of states, define $n_i$ and $\hat q_i$ as in \eqref{eq:ni-intro} and \eqref{eq:hat-q-intro}. For any  proper scoring rule $\score$,
    \begin{equation}
    \label{eq:swapS-1}
        \cfdl_\score(\vpred, \vstate) = \frac{1}{T}\sum_{i\in [m]}n_i\breg_\score(q_i, \empq_i).
    \end{equation}
\end{proposition}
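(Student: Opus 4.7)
The plan is to derive the identity by direct computation: regroup the sum defining $\cfdl_S$ into buckets indexed by the prediction value $q_i$, rewrite each bucket-wise contribution as an expectation under the empirical conditional distribution $\hat q_i$, and then apply the convex-function representation of proper scoring rules (\Cref{thm:score-u}) to identify this expectation with $\breg_S(q_i, \hat q_i)$.

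Concretely, I would first unfold the definition
\[
\cfdl_S(\vpred, \vstate) = \frac{1}{T}\sum_{t=1}^T\bigl[S(\sigma^*(p_t), \theta_t) - S(p_t, \theta_t)\bigr]
\]
and split it into buckets using $\sum_{t=1}^T = \sum_{i=1}^m \sum_{t:\, p_t = q_i}$. Since $\sigma^*(q_i) = \hat q_i$ by \Cref{def:empirical calibrate}, the $i$-th bucket contributes $\frac{1}{T}\sum_{t=1}^T \mathbb{I}[p_t = q_i]\bigl[S(\hat q_i, \theta_t) - S(q_i, \theta_t)\bigr]$. Pulling out the factor $n_i$ and using the fact that $\hat q_i = \tfrac{1}{n_i}\sum_t \theta_t \mathbb{I}[p_t = q_i]$, the empirical distribution of $\theta_t$ conditioned on $p_t = q_i$ is exactly the Bernoulli with mean $\hat q_i$. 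Hence the bucket's contribution equals
\[
\frac{n_i}{T}\,\mathbb{E}_{\theta\sim \hat q_i}\bigl[S(\hat q_i, \theta) - S(q_i, \theta)\bigr].
\]

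Next I would invoke \Cref{thm:score-u} to write $S(p, \theta) = u(p) + \nabla u(p)(\theta - p)$ with $u(p) = \mathbb{E}_{\theta\sim p}[S(p,\theta)]$. Taking expectations under $\theta \sim \hat q_i$ gives $\mathbb{E}[S(\hat q_i,\theta)] = u(\hat q_i)$ (the linear correction vanishes) and $\mathbb{E}[S(q_i,\theta)] = u(q_i) + \nabla u(q_i)(\hat q_i - q_i)$. Subtracting,
\[
\mathbb{E}_{\theta\sim \hat q_i}\bigl[S(\hat q_i,\theta) - S(q_i,\theta)\bigr] = u(\hat q_i) - u(q_i) + \nabla u(q_i)(q_i - \hat q_i),
\]
which by \eqref{eq:breg} is precisely $\breg_S(q_i, \hat q_i)$. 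Summing over $i$ yields the claimed identity.

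There is no real obstacle here: each step is a substitution or a direct application of an already-stated result. The only point worth being careful about is keeping the arguments of the Bregman divergence in the right order (the paper's footnote warns that the convention reverses the standard one), so I would double-check that $u(\hat q_i) - u(q_i) + \nabla u(q_i)(q_i - \hat q_i)$ matches $\breg_S(q_i, \hat q_i)$ and not $\breg_S(\hat q_i, q_i)$. Beyond that bookkeeping, the proof is essentially a one-line consequence of properness together with the fact that $\hat q_i$ is the true conditional mean of $\theta_t$ given $p_t = q_i$.
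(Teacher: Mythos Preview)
Your proposal is correct and follows essentially the same approach as the paper: the paper also regroups $\cfdl_S$ bucket-wise, rewrites each bucket's contribution as $\E_{\theta\sim\hat q_i}[S(\hat q_i,\theta)-S(q_i,\theta)]$, and identifies this with $\breg_S(q_i,\hat q_i)$. Your derivation is in fact more explicit than the paper's, which simply asserts the final equality $\E_{\theta\sim\hat q_i}[S(\hat q_i,\theta)-S(q_i,\theta)]=\breg_S(q_i,\hat q_i)$ without spelling out the substitution via \Cref{thm:score-u}.
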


\subsection{Approximation via V-Bregman Divergences}
\label{sec:MSR-Vshape}
Following ideas from \citet{li2022optimization, kleinberg2023u}, we decompose any Bregman divergence into a linear combination on a basis. Thus, our $\robustcal$ is bounded by the worst case $\cfdl$ on the basis, the V-Bregman divergences. 

\begin{definition}[V-Bregman divergence]
     The V-Bregman divergence with kink $\kink$ is defined as 
   \begin{equation}
   \label{eq:vbreg}
       \vbreg_{\kink}(q, \hat{q}) = \begin{cases}
        0,    &  \text{if 1) } q< \kink \text{ and }\hat{q}\leq \kink \text{ or 2)}q\geq \kink \text{ and }\hat{q}\geq \kink;\\
         \frac{|\hat{q}-\kink|}{\max\{1-\kink, \kink\}},    & \text{otherwise}.
       \end{cases}
   \end{equation}
Equivalently, this is the Bregman divergence for the convex function $u:[0,1]\to [0,1]$ shown in \Cref{fig: v shape}, whose sub-gradient is 
\[
\nabla u (p) = \begin{cases}
-\frac{1}{2\max\{1 - \mu,\mu\}}, & \text{if }p < \mu;\\
\frac{1}{2\max\{1 - \mu,\mu\}}, & \text{if }p \ge \mu.
\end{cases}
\]
\end{definition}

We note that any V-Bregman divergence can be induced by a V-shaped proper scoring rule bounded in $[0, 1]$. More specifically, the V-shaped scoring rule offers two actions for the agent to decide, with optimal decision rule as a threshold at $\kink$. Without loss of generality, suppose $\kink\leq \sfrac{1}{2}$. $\vbreg_\kink$ can be induced from the following proper scoring rule, which is shown in \Cref{fig: v shape}.
 \begin{equation}
 \label{eq:V-shaped score}
      \score_{\kink} (\pred, \state) = \left\{\begin{array}{cc}
      \sfrac{1}{2} -\frac{1}{2}\cdot \frac{\state - \kink}{1-\kink}  &  \text{if }\pred\leq \kink\\
        \sfrac{1}{2} +\frac{1}{2}\cdot \frac{\state - \kink}{1-\kink}    & \text{else},
      \end{array}
      \right.
   \end{equation}

   \begin{figure}[thbp]
    \centering
      \begin{tikzpicture}[scale = 0.50]

        \draw [white] (0, 0) -- (11.5, 0);
        \draw (0,0) -- (10.5, 0);
        \draw (0, 0) -- (0, 5.5);

        \draw [ultra thick] plot (0, 3.57) -- (3, 2.5);
        \draw[ultra thick] plot (3, 2.5)-- (10, 5);
        \draw (0, 3.57) -- (10, 0);
        \draw (0, 1.428) -- (10, 5);

        \draw[dotted] (0, 5) -- (10, 5);
        \draw[dotted] (0, 2.5) -- (10, 2.5);

        \draw[dotted] (3, 0) -- (3, 2.5);




        \draw (-0.5, 6) node {$\score$};
        \draw (-0.38, 0) node {\small $0$};
        \draw (-0.38, 5) node {\small $1$};
        \draw (-0.5, 2.5) node {$\sfrac{1}{2}$};
        \draw (10, -0.5) node {\small $1$};
        \draw (0, -0.5) node {$0$};
        \draw (3, -0.6) node {$\kink$};
        \draw (3, 0) -- (3, 0.2);
        \draw (10, 0) -- (10, 0.2);

        \draw (-1.5,  1.428) node {$\score(1, 0)$};
        \draw (-1.5, 3.57) node {$\score(0, 0)$};
        \draw (11.6, 0) node {$\score(0, 1)$};
        \draw (11.6, 5) node {$\score(1, 1)$};

        \draw (5,-1.3) node {state; report};
        \draw  (10.5,2.5) node [rotate=90] {score};
        
      \end{tikzpicture}     
    \caption{The thick black line plots the special convex  function $\util$ for the scoring rule. The convex utility function is V-shaped, consisting of two linear pieces intersecting at $\kink$. Once fixing the prediction $\pred$, the score $\score(\pred, \state)= \util(\pred) + \nabla\util(\pred)\cdot(\state - \pred)$  is linear in the state $\state$. The scoring rule offers two set of scores for the prediction to selecttwo linear lines for prediction. When $\pred\leq q_0$, the prediction selects scores  $\{\score(0, 0), \score(0, 1)\}$. Otherwise, the prediction selects  $\{\score(1, 0), \score(1, 1)\}$.}
    \label{fig: v shape}
\end{figure}



The following lemma allows us to decompose a general Bregman divergence as a linear combination of V-Bregman divergences.
\begin{lemma}
[\citet{li2022optimization, kleinberg2023u}]
\label{lm:v-decompose}
Let $u:[0,1]\to [0,1]$ be a twice differentiable convex function. Let $S$ be the proper scoring rule generated by $u$ as in \eqref{eq:score-u}. Then for every $p,\hat p\in [0,1]$,
    \begin{equation}
    \label{eq:linear-decomp-breg}
        \breg_\score (p,\hat p)  = \int_{\kink = 0}^{1} \util''(\kink)\cdot\max(1-\kink, \kink)\vbreg_{\kink}(p,\hat p)\diff \kink.
    \end{equation}
\end{lemma}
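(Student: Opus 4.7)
The plan is to observe that the right-hand side, after substituting the definition of $\vbreg_\mu$, reduces to a classical Taylor-with-integral-remainder identity for $u$, which is precisely the Bregman divergence on the left.

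First I would dispose of the trivial case $p=\hat p$ (both sides are $0$) and then treat $p<\hat p$ and $p>\hat p$ symmetrically; without loss of generality assume $p<\hat p$. The key observation is that $\vbreg_\mu(p,\hat p)$ is nonzero only when $p$ and $\hat p$ lie strictly on opposite sides of $\mu$. Reading off the cases in \eqref{eq:vbreg}, the nonvanishing region for $p<\hat p$ is exactly $\mu\in(p,\hat p)$, and on that region $\vbreg_\mu(p,\hat p)=(\hat p-\mu)/\max(1-\mu,\mu)$. Thus the factor $\max(1-\mu,\mu)$ in the integrand cancels cleanly against the denominator, and the right-hand side of \eqref{eq:linear-decomp-breg} simplifies to
\begin{equation*}
\int_p^{\hat p} u''(\mu)\,(\hat p-\mu)\,d\mu.
\end{equation*}

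The next step is a single integration by parts. Taking $dv=u''(\mu)\,d\mu$ and $w=\hat p-\mu$, we get
\begin{equation*}
\int_p^{\hat p} u''(\mu)(\hat p-\mu)\,d\mu
=\bigl[u'(\mu)(\hat p-\mu)\bigr]_p^{\hat p}+\int_p^{\hat p}u'(\mu)\,d\mu
=-u'(p)(\hat p-p)+\bigl(u(\hat p)-u(p)\bigr),
\end{equation*}
which equals $\breg_\score(p,\hat p)=u(\hat p)-u(p)+\nabla u(p)(p-\hat p)$ by the definition \eqref{eq:breg}. For the case $p>\hat p$, the same bookkeeping shows the nonvanishing region is $\mu\in(\hat p,p)$, so $|\hat p-\mu|=\mu-\hat p$, and an analogous integration by parts on $\int_{\hat p}^{p}u''(\mu)(\mu-\hat p)\,d\mu$ yields $u(\hat p)-u(p)+u'(p)(p-\hat p)=\breg_\score(p,\hat p)$ again.

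There is essentially no obstacle: the identity is just Taylor's theorem with integral remainder in disguise, and the only thing to verify carefully is that the cancellation of $\max(1-\mu,\mu)$ is valid (which requires $\mu$ strictly between $p$ and $\hat p$, so in particular $\mu\in(0,1)$, avoiding division by zero). The mildly delicate bit is reading off the support of $\vbreg_\mu(p,\hat p)$ from the piecewise definition \eqref{eq:vbreg}, since the boundary cases $\mu=p$ and $\mu=\hat p$ contribute measure zero and can be safely absorbed either way. Twice differentiability of $u$ ensures $u''$ is integrable and the integration by parts is justified; the general (non-$C^2$) case would follow by a standard approximation argument if needed, but is not required here.
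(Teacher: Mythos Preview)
Your proof is correct and matches the standard argument: restrict the integral to $\mu$ between $p$ and $\hat p$ via the support of $\vbreg_\mu$, cancel the $\max(1-\mu,\mu)$ factor, and integrate $\int u''(\mu)(\hat p-\mu)\,d\mu$ by parts to recover the Bregman divergence. The paper does not include its own proof of this lemma (it is cited from prior work), but a draft computation in the source follows exactly the same route.
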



\begin{definition}[$\swapV$]
         Consider a sequence of $T$ predictions $\vpred = (\pred_t)_{t\in [T]}$ from a finite set $Q = \{q_1,\ldots,q_m\}\subseteq[0,1]$ and realizations $\vstate  = (\state_t)_{t\in [T]}$ of states.
         For $\mu\in [0,1]$, we define $\cfdl_\mu(\vpred,\vstate)$ to be the Calibration Fixed Decision Loss for the V-shaped scoring rule $\score_\kink$ with kink $\mu$, or equivalently, by \Cref{prop:swap-bregman}, $\cfdl_\mu(\vpred,\vstate)$ can be expressed using the V-Bregman divergence at kink $\mu$:
         \[
         \cfdl_\mu(\vpred,\vstate) = \frac{1}{T}\sum_{i\in [m]}n_i\vbreg_{\kink}(q_i, \empq_i).
         \]
         We define $\swapV(\vpred,\vstate)$ to be the supremum of $\cfdl_\mu(\vpred,\vstate)$ over all V-Bregman divergences:
\begin{equation}
\label{eq:swapV}
    \swapV(\vpred,\vstate) = \sup_{\kink\in[0, 1]}\cfdl_\mu(\vpred,\vstate).
\end{equation}
\end{definition}

\begin{theorem}\label{thm:v approx msr}
$\swapV$ is a constant-factor approximation of $\robustcal$. That is, for any sequence of predictions $\vpred = (p_1,\ldots,p_T)\in [0,1]^T$ and any sequence of states $\vstate = (\theta_1,\ldots,\theta_T)\in \{0,1\}^T$,
\begin{equation*}
    \swapV (\vpred,\vstate) \leq \robustcal(\vpred,\vstate)\leq 2\swapV(\vpred,\vstate).
\end{equation*}
\end{theorem}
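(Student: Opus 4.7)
The plan is to prove the two inequalities separately. The lower bound $\swapV\le\robustcal$ is essentially immediate from the definitions, while the upper bound $\robustcal\le 2\swapV$ will follow from the linear decomposition of Bregman divergences into V-Bregman divergences (\Cref{lm:v-decompose}) combined with a calculus bound on the total ``curvature'' of any convex potential $\util$ whose induced proper scoring rule has range in $[0,1]$.

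\textbf{Lower bound.} I will observe that for each $\kink\in[0,1]$ the V-shaped scoring rule $\score_\kink$ from \eqref{eq:V-shaped score} is a proper scoring rule with range in $[0,1]$, and by construction it induces the V-Bregman divergence $\vbreg_\kink$, so $\cfdl_\kink(\vpred,\vstate)=\cfdl_{\score_\kink}(\vpred,\vstate)$. Hence $\cfdl_\kink(\vpred,\vstate)$ is one of the quantities over which the supremum in \eqref{eq:cdl} is taken, giving $\cfdl_\kink\le\robustcal$ for every $\kink$ and therefore $\swapV\le\robustcal$ after taking the supremum in $\kink$.

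\textbf{Upper bound (smooth case).} Fix any proper scoring rule $\score:[0,1]\times\{0,1\}\to[0,1]$ and let $\util$ be its convex potential from \Cref{thm:score-u}. I first assume $\util$ is twice continuously differentiable. Applying \Cref{prop:swap-bregman} and then \Cref{lm:v-decompose} and swapping the finite sum with the integral:
\begin{align*}
\cfdl_\score(\vpred,\vstate)
&= \frac{1}{T}\sum_{i=1}^m \qcount_i \int_0^1 \util''(\kink)\max(1-\kink,\kink)\,\vbreg_\kink(\q_i,\empq_i)\,\diff\kink \\
&= \int_0^1 \util''(\kink)\max(1-\kink,\kink)\,\cfdl_\kink(\vpred,\vstate)\,\diff\kink \\
&\le \swapV(\vpred,\vstate)\int_0^1 \util''(\kink)\max(1-\kink,\kink)\,\diff\kink.
\end{align*}
Since $\max(1-\kink,\kink)\le 1$ and $\util''\ge 0$, the remaining integral is at most $\int_0^1 \util''(\kink)\,\diff\kink = \util'(1)-\util'(0)$. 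Now I will use the range constraint on $\score$: evaluating \eqref{eq:score-u} at $\pred\in\{0,1\}$ yields $\util(0),\util(0)+\util'(0),\util(1),\util(1)-\util'(1)\in[0,1]$, which forces $\util'(0)\ge -1$ and $\util'(1)\le 1$ and hence $\util'(1)-\util'(0)\le 2$. Therefore $\cfdl_\score(\vpred,\vstate)\le 2\,\swapV(\vpred,\vstate)$, and $\robustcal\le 2\swapV$ will follow after taking the supremum over $\score$.

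\textbf{Main obstacle.} The principal technical subtlety is that a proper scoring rule bounded in $[0,1]$ need not have a twice differentiable $\util$ (for instance, the V-shaped rules themselves have piecewise linear $\util$), so \Cref{lm:v-decompose} does not directly apply. I plan to handle this by approximating $\util$ by a sequence of smooth convex functions $\util_\varepsilon$ via mollification, with a tiny inward shift so that the scoring rule induced by $\util_\varepsilon$ remains in $[0,1]$ up to an $O(\varepsilon)$ slack that I can absorb into the bound. Because $\cfdl_\score$ depends on $\util$ only through its values and sub-gradients at the finitely many pairs $(\q_i,\empq_i)$, continuity will let me pass to the limit $\varepsilon\to 0$ and recover the same $2\swapV$ bound for the original $\score$, completing the proof.
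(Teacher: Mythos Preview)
Your proposal is correct and follows essentially the same route as the paper: the lower bound is immediate, and for the upper bound both you and the paper apply \Cref{lm:v-decompose} to a smooth approximation of $\util$, bound $\int_0^1\util''(\kink)\max(1-\kink,\kink)\,\diff\kink\le\util'(1)-\util'(0)\le 2$ via the range constraint $S\in[0,1]$, and then pass to the limit using that $\cfdl_\score$ depends on $\util$ only at the finitely many points $q_i,\hat q_i$. The one place to be a bit more careful is the smoothing step: the paper constructs the approximants $\util_j$ so that $\util_j'$ agrees with the chosen sub-gradient $\nabla\util$ \emph{exactly} on the finite set $\{0,1,q_1,\ldots,q_m,\hat q_1,\ldots,\hat q_m\}$, which makes $\breg_{S_j}(q_i,\hat q_i)\to\breg_S(q_i,\hat q_i)$ automatic, whereas generic mollification does not control the limit of $\util_\varepsilon'$ at a kink of $\util$ and hence need not recover the specific sub-gradient used by $S$ there---so your passage to the limit needs one extra line to handle this (e.g., match derivatives at those finitely many points as the paper does).
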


\begin{proof}
Since $\vbreg_\mu$ is a special case of a Bregman divergence $\breg_S$ for a scoring rule $S$, comparing the definition of $\swapV$ in \eqref{eq:swapV} and the definition of $\robustcal$ in \eqref{eq:cdl}, we immediately get $\swapV (\vpred,\vstate)\leq \robustcal(\vpred,\vstate)$.

Let $Q = \{q_1,\ldots,q_m\}\subseteq [0,1]$ be a finite prediction space such that $p_t\in Q$ for every $t\in [T]$. For $i = 1,\ldots,m$, define $n_i$ and $\hat q_i$ as in \eqref{eq:ni-intro} and \eqref{eq:hat-q-intro}.

Let us consider any proper scoring rule $S:[0,1]\times \{0,1\}\to [0,1]$. By \Cref{thm:score-u}, there exists a convex function $u:[0,1]\to [0,1]$ with bounded sub-gradients $\nabla u(p)\in [-1,1]$ such that \eqref{eq:score-u} holds. We can construct a sequence of twice-differentiable convex functions $u_1,u_2,\ldots$ on the domain $[0,1]$ to approximate $u$ in the following manner. For every $q\in \{0,1,q_1,\ldots,q_m,\hat q_1,\ldots,\hat q_m\}$, the derivative $u_j'(q)$ satisfies $u_j'(q) = \nabla u(q)$ for every $j$, and the function value $u_j$ itself satisfies $u_j(q)\to u(q)$ as $j\to +\infty$. 
Let $S_j$ denote the proper scoring rule generated by $u_j$ as in \eqref{eq:score-u}.
 For every $i = 1,\ldots,m$, we have
\[
\lim_{j\to \infty} \breg_{S_j}(q_i,\hat q_i) = \breg_S(q_i,\hat q_i).
\]

By \Cref{lm:v-decompose}, for every $j = 1,2,\ldots,$
\begin{align*}
\frac 1T\sum_{i\in [m]}n_i\breg_{S_j}(q_i, \empq_i) & = \frac 1T\sum_{i\in [m]}n_i\int_{\kink = 0}^{1} \util''_j(\kink)\cdot\max(1-\kink, \kink)\vbreg_{\kink}(q_i,\hat q_i)\diff \kink\\
& = \int_{\kink = 0}^{1} \util''_j(\kink)\cdot\max(1-\kink, \kink)\frac 1T\sum_{i\in [m]}n_i\vbreg_{\kink}(q_i,\hat q_i)\diff \kink\\
& \le \int_{\kink = 0}^{1} \util''_j(\kink)\cdot\max(1-\kink, \kink)\swapV (\vpred,\vstate)\diff \kink\\
& = \swapV (\vpred,\vstate) \int_{\kink = 0}^{1} \util''_j(\kink)\cdot\max(1-\kink, \kink)\diff \kink\\
& \le \swapV (\vpred,\vstate) \int_{\kink = 0}^{1} \util''_j(\kink)\cdot\diff \kink\\
& = \swapV (\vpred,\vstate)(u_j'(1) - u_j'(0))\\
& = \swapV (\vpred,\vstate)(\nabla u(1) - \nabla u(0))\\
& \le 2\swapV (\vpred,\vstate).
\end{align*}
Therefore, by \Cref{prop:swap-bregman},
\begin{align*}
\cfdl_S(\vpred,\vstate) = \frac 1T\sum_{i=1}^m n_i\breg_{S}(q_i, \empq_i) & = \lim_{j\to \infty}\frac 1T\sum_{i=1}^m n_i\breg_{S_j}(q_i, \empq_i)\\
 & \le 2\swapV (\vpred,\vstate).
\end{align*}
Taking supremum over $S$ proves $\robustcal(\vpred,\vstate) \le 2\swapV(\vpred,\vstate)$.
\end{proof}

\subsection{Computation of $\robustcal$}
\label{sec:computation}

We allow the state space to be non-binary in this section. By solving a linear program, the $\robustcal$ can be computed in time polynomial in the size of the prediction space $|Q|$ and the state space $|\statesp|$.

\begin{theorem}\label{thm:computation}
   Given a sequence of predictions $\vpred$ and states $\vstate$,  suppose $Q$ is the space of predictions. $\robustcal$ can be computed in time polynomial in $|Q|$ and $|\statesp|$.
\end{theorem}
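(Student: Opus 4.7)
The plan is to express $\robustcal(\vpred,\vstate)$ as the optimum of a linear program of size polynomial in $m:=|Q|$ and $|\statesp|$. Write $Q=\{q_1,\ldots,q_m\}$, and for each $i\in[m]$ let $n_i$ be the count of $t$ with $\pred_t=q_i$ and $\hat q_i\in\Delta(\statesp)$ the empirical distribution of $\state_t$ conditional on $\pred_t=q_i$. By the equivalence between proper scoring rules and best-response payoffs of decision tasks (Claim 2.10, whose argument is insensitive to $|\statesp|$), $\robustcal(\vpred,\vstate)$ equals the supremum, over all decision tasks $(A,\statesp,U)$ with $U\in[0,1]$, of
\[
\frac{1}{T}\sum_{i=1}^m n_i \bigl\langle \hat q_i,\, v_{a^*(\hat q_i)} - v_{a^*(q_i)} \bigr\rangle,
\]
where $v_a\in[0,1]^{|\statesp|}$ denotes the payoff vector $U(a,\cdot)$ and $a^*(p)\in\argmax_a\langle p, v_a\rangle$.

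Because this objective depends on the decision task only through the at-most $2m$ payoff vectors $v_{a^*(q_i)}$ and $v_{a^*(\hat q_i)}$, I may restrict to decision tasks whose action space has size at most $2m$; removing the other actions does not affect best responses. Introduce LP variables $u_i,w_i\in\reals^{|\statesp|}$ for $i\in[m]$, representing the payoff vectors of the actions selected on the raw prediction $q_i$ and on its calibrated counterpart $\hat q_i$, respectively. The LP is
\begin{align*}
\text{maximize}\quad & \frac{1}{T}\sum_{i=1}^m n_i \langle \hat q_i,\, w_i - u_i\rangle,\\
\text{subject to}\quad & 0\le u_i(\state),\,w_i(\state)\le 1, \qquad \forall\, i\in[m],\,\state\in\statesp,\\
& \langle q_i, u_i\rangle\ge \langle q_i, u_j\rangle,\; \langle q_i, u_i\rangle\ge \langle q_i, w_j\rangle, \qquad \forall\, i,j\in[m],\\
& \langle \hat q_i, w_i\rangle\ge \langle \hat q_i, u_j\rangle,\; \langle \hat q_i, w_i\rangle\ge \langle \hat q_i, w_j\rangle, \qquad \forall\, i,j\in[m].
\end{align*}
This LP has $O(m|\statesp|)$ variables and $O(m^2)$ constraints, hence is solvable in time polynomial in $|Q|$ and $|\statesp|$.

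For correctness, any decision task achieving $\cfdl=\alpha$ induces a feasible LP solution of objective $\alpha$ by setting $u_i:=v_{a^*(q_i)}$ and $w_i:=v_{a^*(\hat q_i)}$. Conversely, any feasible LP solution defines a decision task whose $2m$ actions have payoff vectors $u_1,\ldots,u_m,w_1,\ldots,w_m$; the best-response inequalities exactly state that $u_i$ is optimal at $q_i$ and $w_i$ is optimal at $\hat q_i$ within this action set, so the LP objective equals the $\cfdl$ of this task. Taking a supremum yields the LP optimum equals $\robustcal(\vpred,\vstate)$.

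The main subtlety I anticipate is tie-breaking in the converse direction: if $\hat q_i = q_j$ for some $i,j$, the LP constraints force $\langle \hat q_i, w_i\rangle = \langle \hat q_i, u_j\rangle$, so both $u_j$ and $w_i$ are valid best responses at that distribution; we are free to declare $a^*(\hat q_i)$ to be the action with payoff vector $w_i$, recovering the LP value. Beyond this tie-breaking matter, all steps reduce to standard linear programming.
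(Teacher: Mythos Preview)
Your proposal is correct and follows essentially the same approach as the paper: both reduce the computation of $\robustcal$ to a linear program whose variables are the payoff/score values at the finitely many relevant predictions in $Q\cup\hat Q$, with boundedness and properness (best-response) constraints. The paper's LP additionally includes auxiliary ``indicator'' actions $\hat\Theta$ (the Dirac distributions on each state), which your formulation omits without harm; conversely, your write-up handles the tie-breaking issue more explicitly than the paper does.
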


\begin{proof}
    The computation of $\robustcal$ is an optimization problem over the space of proper scoring rules. We follow the idea in \citet{li2022optimization, kleinberg2023u}. The optimal scoring rule can be computed by solving a linear program. Let $\hat{Q} = \{\empq_i\}_i$ be the set of empirical distributions for each $\q_i\in Q$.  Define $\hat{\statesp} = \{\Tilde{\state} \mid j\in |\statesp|, \Tilde{\state}_j = 1, \Tilde{\state}_{j'\neq j} = 0\}$ as the set of indicator predictions of a certain state. Define the space of predictions as $\mathcal{Q} = Q\cup\hat{Q}\cup\hat{\statesp}$. We set the scores $s_{\q, \state}, \forall \q\in \mathcal{Q}, \state\in \statesp$ as variables in the linear program. 
    \begin{align*}
        \max_{\score}\qquad &\frac{1}{T}\sum_{\q\in Q, \state\in \statesp} (s_{\empq, \state} - s_{\q, \state})\empq(\state)&\\ 
        \text{s.t.}\qquad& s_{\q, \state}\in [0, 1], \quad \forall \q\in \mathcal{Q} , \state\in \statesp & \text{(bounded payoff)}\\
        & \sum_{\state}\q(\state)s_{\q, \state}\geq \sum_{\state}\q(\state)s_{\q', \state}, \quad \forall \q, \q'\in \mathcal{Q} & \text{(properness)}
    \end{align*}

    The following proper scoring rule achieves the worst case $\cfdl$.
    \begin{equation*}
        \score(\pred, \state) = s_{\q, \state}, \text{ where }\q = \argmax_{\q'\in \mathcal{Q}}\expect{\state\sim\pred}{s_{\q', \state}}.\qedhere
    \end{equation*}
\end{proof}

\section{$\robustcal$ and Calibration Errors}
\label{sec:connection}

In this section, we discuss connections of our $\robustcal$ to calibration errors in the literature.  We show that both $\ece$ and $\lcal{2}$ are polynomially related to $\robustcal$ (\Cref{thm:k-msr}), but neither is a constant-factor approximation (\Cref{example:k1 k2 msr}). We give examples where the smooth calibration error differs significantly from $\robustcal$ in either direction (\Cref{example:smooth msr}). We also show the U-calibration error lowerbounds $\robustcal$ (\Cref{prop:ucal-msr}), but is not polynomially related (\Cref{example:u calib}).


\begin{theorem}\label{thm:k-msr}
Given samples of predictions and corresponding states, 
\begin{align*}
        \ece^2\leq &\robustcal\leq 2\ece,\\
        \lcal{2}\leq &\robustcal\leq 2\sqrt{\lcal{2}}.
\end{align*}
\end{theorem}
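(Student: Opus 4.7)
The plan is to assemble the four inequalities from two small building blocks: (i) the already-stated Bregman/scoring-rule bounds that tie $\cfdl$ (and hence $\robustcal$) to $\ece$ and $\lcal{2}$, and (ii) a one-line Cauchy--Schwarz comparison between $\ece$ and $\sqrt{\lcal{2}}$. Nothing in the statement requires constructing an adversarial example, so this is purely a chain of inequalities.

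First I would establish the two upper bounds on $\robustcal$. The bound $\robustcal \le 2\,\ece$ is essentially immediate: Claim~\ref{claim:kleinberg k1 swap bound} gives $\cfdl_S(\vpred,\vstate) \le 2\,\ece(\vpred,\vstate)$ for every proper scoring rule $S$ with range in $[0,1]$, and taking the supremum over such $S$ on the left-hand side is exactly the definition of $\robustcal$ in~\eqref{eq:cdl}. For the lower bound $\lcal{2} \le \robustcal$, Lemma~\ref{lem:k2-quadratic} identifies $\lcal{2}$ with $\cfdl_{S_2}$ for the quadratic scoring rule $S_2(p,\theta) = 1 - (p-\theta)^2$, whose range lies in $[0,1]$; hence $\lcal{2} = \cfdl_{S_2} \le \sup_{S \in [0,1]} \cfdl_S = \robustcal$.

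Next I would connect $\ece$ and $\lcal{2}$ by Cauchy--Schwarz applied to the bucket-wise representations. Writing $w_i := n_i/T$ so that $\sum_i w_i = 1$, we have
\[
\ece(\vpred,\vstate) \;=\; \sum_i w_i \,|q_i - \hat q_i| \;\le\; \Bigl(\sum_i w_i\Bigr)^{1/2}\Bigl(\sum_i w_i (q_i-\hat q_i)^2\Bigr)^{1/2} \;=\; \sqrt{\lcal{2}(\vpred,\vstate)}.
\]
Squaring gives $\ece^2 \le \lcal{2}$, which combined with $\lcal{2} \le \robustcal$ yields $\ece^2 \le \robustcal$, and combined with $\robustcal \le 2\,\ece$ yields $\robustcal \le 2\sqrt{\lcal{2}}$.

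There is no real obstacle here, since the two nontrivial facts ($\cfdl \le 2\,\ece$ for any proper scoring rule with range in $[0,1]$, and $\lcal{2} = \cfdl_{S_2}$) are already proved earlier in the paper; the only thing to be careful about is ensuring that the scoring rule $S_2$ indeed satisfies the normalization $S_2 \in [0,1]$ so that it is an admissible competitor in the supremum defining $\robustcal$, which is immediate since $(p-\theta)^2 \in [0,1]$ when $p\in[0,1]$ and $\theta\in\{0,1\}$.
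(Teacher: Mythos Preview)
Your proposal is correct and follows essentially the same approach as the paper: the paper likewise derives $\robustcal \le 2\,\ece$ from Claim~\ref{claim:kleinberg k1 swap bound}, $\lcal{2}\le\robustcal$ from Lemma~\ref{lem:k2-quadratic}, and then combines both with the inequality $\ece \le \sqrt{\lcal{2}}$ (which the paper states as ``easy to check'' and you justify via Cauchy--Schwarz) to obtain the remaining two bounds.
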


\begin{proof}[Proof of \Cref{thm:k-msr}]
On the upper bound side, by \Cref{claim:kleinberg k1 swap bound}, any proper scoring rule $\score$ with range bounded in $[0,1]$ satisfies $\cfdl_\score\leq 2\ece$, implying $\robustcal\leq 2\ece$.  It is easy to check that $\ece\leq \sqrt{\lcal{2}}$ \citep[see e.g.][]{kleinberg2023u, rothbook}. Thus we get the other upper bound $\robustcal \leq2\sqrt{\lcal{2}}$.

On the lower bound side, by \Cref{lem:k2-quadratic}, we have $\robustcal  \geq \lcal{2} $. Combining this with the fact $\ece\leq \sqrt{\lcal{2}}$ yields $\robustcal\geq \ece^2$.
%
%
%
\end{proof}

All four inequalities in \Cref{thm:k-msr} are tight up to constant factors. We demonstrate the tight examples in \Cref{example:k1 k2 msr}.

\begin{example}\label{example:k1 k2 msr}
    Consider the following two miscalibrated predictors.
    \begin{itemize}
        \item[(a)] (Tight example for upper bounds of $\robustcal$) The state is deterministically $1$. The predictor deterministically predicts $1-\epsilon$. 

        In this case, $\ece = \epsilon$, $\lcal{2} = \epsilon^2$, $\robustcal =\Theta(\epsilon)$.
        \item[(b)](Tight example for lower bounds of $\robustcal$) The $T$ rounds are divided into $\sqrt{T}$ periods, each with $\sqrt{T}$ rounds. In each period $i$, the empirical distribution of the state is $\frac{i}{\sqrt{T}}$, and the predictor predicts $\frac{i}{\sqrt{T}} +  \frac{1}{\sqrt{T}}$.

        In this case, $\ece = \frac{1}{\sqrt{T}}$, $\lcal{2} = \frac{1}{T}$,  $\robustcal \in [\frac{1}{T}, \frac{8}{T}]$.
    \end{itemize}
\end{example}

From \Cref{example:k1 k2 msr}, we see that when predictions concentrate in a small interval, $\ece$ calculates the $\robustcal$ in the correct order as in (a). However, if predictions have high variance as in (b), the calibration error does not simply add up to the total loss in decision. Consider the V-Bregman divergence which corresponds to a decision problem with two actions. Suppose the kink, also the decision threshold, is at $\sfrac{1}{2}$. Miscalibration at extreme predictions near $0$ or $1$ will not induce a $\cfdl$ to the agent. This intuition  is explained by \Cref{lem:swap regret bound on v bregman with kink}, with which we prove the $\asympO(\frac{1}{\sqrt{T}})$ $\cfdl$ later. We state it here and prove it in \Cref{sec:attribute}.

Define the bias in bucket $i$ (i.e.\ conditional $\ece$ calibration error on prediction $\q_i$):
\begin{equation*}
    \G_i = \qcount_i|\q_i - \empq_i|.
\end{equation*}

\begin{restatable*}{lemma}{lemSwapG}\label{lem:swap regret bound on v bregman with kink}
Let $T,m$ be positive integers. Define $Q = \{q_1,\ldots,q_m\}\subseteq [0,1]$ where $q_i = i/m$ for every $i = 1,\ldots,m$.
Given a sequence of predictions $\vpred = (p_1,\ldots,p_T)\in Q^T$ and realized states $\vstate = (\theta_1,\ldots,\theta_T)\in \{0,1\}^T$, define $n_i$ and $\hat q_i$ as in \eqref{eq:ni-intro} and \eqref{eq:hat-q-intro}. Define $G_i:= n_i|\hat q_i - q_i|$. 
Fix a V-Bregman divergence with kink $\kink$, the $\cfdl$ is bounded by
    \begin{equation*}
        \cfdl_\kink(\vpred, \vstate) \leq \frac{2}{T}\sum_{i\in [m]}\left(\G_i - \qcount_i|\q_i - \kink| \right)_+.
    \end{equation*}
\end{restatable*}

\begin{proof}[Proof of \Cref{example:k1 k2 msr}]
  The calculation of $\ece$ and $\lcal{2}$ are straightforward. We show the calculation of MSR separately for (a) and (b).
  \begin{itemize}
      \item [(a)] By \Cref{lem:swap regret bound on v bregman with kink}, $\robustcal\leq 2\swapV\leq 4\epsilon $. We can find a Bregman divergence such that $\cfdl \geq \frac{\epsilon }{2} $. Consider the scoring rule $\score$ that has a V-Bregman divergence with kink $\kink = 1-\frac{\epsilon}{2}$.
      \begin{equation*}
          \cfdl = \frac{|1 - \kink|}{\kink} \geq \frac{\epsilon }{2}.
      \end{equation*}
      Thus, $\robustcal = \Theta(\epsilon)$.
      \item [(b)] We can prove $\swapV = \Theta(1)$. Fix any V-Bregman divergence with kink $\kink$, consider the corresponding  $\cfdl_\kink$.
      \begin{align*}
          \cfdl_\kink &= \frac{1}{T}\cdot\sqrt{T}\sum_{i = 1}^{\sqrt{T}}\frac{|\frac{i}{\sqrt{T}}-\kink|}{\max\{\kink, 1-\kink\}}\left(\ind{\frac{i}{\sqrt{T}}>\kink>\frac{i+1}{\sqrt{T}}} + \ind{\frac{i}{\sqrt{T}}<\kink<\frac{i+1}{\sqrt{T}}}\right)
      \end{align*}
      We notice that for predictions that induces a non-zero $\cfdl$, it must be $|\frac{i}{\sqrt{T}} - \kink|\leq \frac{1}{\sqrt{T}}$. Since $\max\{\kink, 1-\kink\}\geq \frac{1}{2}$,
      \begin{equation*}
          \cfdl_\kink\leq \frac{1}{\sqrt{T}}\cdot 2\cdot \frac{1}{\sfrac{1}{2}}\frac{1}{\sqrt{T}} = \frac{4}{T}.
      \end{equation*}
      We know $\robustcal\leq 2\swapV\leq \frac{8}{T}$.
  \end{itemize}
\end{proof}

$\robustcal$  is not polynomially related to $\smooth$ or $\distcal$. Specifically, example (b) in \Cref{example:smooth msr} shows the  $\asympO(\sqrt{T})$  $\distcal$ guarantee in online calibration does not apply to $\robustcal$.
\begin{example}\label{example:smooth msr}
    We give two examples of miscalibrated predictors. 
    \begin{itemize}
        \item[(a)] (Large $\distcal$, small $\robustcal$) The same example as (b) in \Cref{example:k1 k2 msr}.
        
        $\distcal\geq\smooth\geq \frac{1}{\sqrt{T}}$, $\robustcal\in [\frac{1}{T}, \frac{2}{T}]$.
        \item[(b)] (Small $\distcal$, large $\robustcal$) At the first $\frac{T}{2}$ rounds, $\state = 1$ deterministically, and the predictor predicts $\sfrac{1}{2} + \epsilon$. At the later $\frac{T}{2}$ rounds, $\state = 0$ deterministically, and the predictor predicts $\sfrac{1}{2} - \epsilon$.

        $\smooth\leq\distcal \leq \epsilon $, $\robustcal = \Omega(1)$. 
        
        We can take $\epsilon = \frac{1}{\sqrt{T}}$, which can be arbitrarily small.

    \end{itemize}
\end{example}

\begin{proof}[Proof of \Cref{example:smooth msr}]
    We calculate $\distcal$ and $\robustcal$ separately for each example.
    \begin{itemize}
        \item [(a)] It only remains to show $\smooth \geq \frac{1}{\sqrt{T}}$. By \Cref{def:smooth calibration error}, take Lipschitz function $\sigma(\cdot) = 1$.
        \begin{equation*}
            \smooth\geq \frac{1}{T}\sum_t (\pred_t - \state_t) = \frac{1}{T}\cdot T\cdot\frac{1}{\sqrt{T}} = \frac{1}{\sqrt{T}}.
        \end{equation*}
        \item [(b)] For $\distcal$, a calibrated predictor always predicts $\sfrac{1}{2}$. 
        \begin{equation*}
            \distcal\leq \frac{1}{T}\sum_{t\in [T]}|\pred_t - \frac{1}{2}| = \epsilon.
        \end{equation*}

        For $\robustcal$, consider the V-Bregman divergence with kink $\sfrac{1}{2} + 2\epsilon$. The $\cfdl$ for this V-Bregman divergence is
        \begin{equation*}
            \cfdl_{\sfrac{1}{2}} \geq  \frac{1}{2}(1-\sfrac{1}{2}-2\epsilon) = \sfrac{1}{4}-\epsilon.
        \end{equation*}
        $\robustcal\geq \cfdl_{\sfrac{1}{2}}$ implies $\robustcal = \Omega(1)$.
    \end{itemize}
\end{proof}

Vanishing U-calibration error is necessary but not sufficient for calibration. By definition, the U-calibration error lowerbounds $\robustcal$.

\begin{proposition}\label{prop:ucal-msr}
    For any sequence of predictions and states, 
    \begin{equation*}
        \ucal\leq \robustcal.
    \end{equation*}
\end{proposition}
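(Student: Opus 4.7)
The plan is to reduce the inequality to the elementary fact that swap regret dominates external regret on any fixed decision task, and then take a supremum.

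First I would invoke the lemma stated immediately after the definition of $\robustcal$, which equates $\robustcal(\vpred,\vstate)$ with $\sup_{U}\actswap_{U}(\vact_{U},\vstate)$, where the supremum ranges over all payoff functions $U:A\times\{0,1\}\to[0,1]$ with arbitrary action spaces and $\vact_{U}$ is the sequence of best responses to $\vpred$. Since $\ucal$ is defined as $\sup_{U}\external_{U}(\vact_{U},\vstate)$ over the same class of payoff functions and with the same best-response action sequence, it suffices to establish the pointwise inequality $\external_{U}(\vact_{U},\vstate)\le\actswap_{U}(\vact_{U},\vstate)$ for every such $U$ and then pass to the supremum.

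For the pointwise inequality, fix a decision task with payoff $U$ and an action sequence $\vact=(a_1,\ldots,a_T)$. For any target action $a^\star\in A$, the constant mapping $\sigma_{a^\star}:A\to A$ defined by $\sigma_{a^\star}(a)=a^\star$ for all $a$ is a feasible swap mapping in \Cref{def:action-swap}, and it yields
\[
\frac{1}{T}\sum_{t\in[T]}\bigl[U(\sigma_{a^\star}(a_t),\theta_t)-U(a_t,\theta_t)\bigr]=\frac{1}{T}\sum_{t\in[T]}\bigl[U(a^\star,\theta_t)-U(a_t,\theta_t)\bigr].
\]
Taking the maximum over $a^\star\in A$ on the left gives a quantity no larger than the unrestricted maximum over all $\sigma:A\to A$, which is exactly $\actswap_{U}(\vact,\vstate)$; the resulting maximum on the right is $\external_{U}(\vact,\vstate)$ by definition. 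Hence $\external_{U}(\vact_{U},\vstate)\le\actswap_{U}(\vact_{U},\vstate)$.

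Taking the supremum over all payoff-bounded decision tasks $U$ on both sides and applying the lemma above yields $\ucal(\vpred,\vstate)\le\robustcal(\vpred,\vstate)$, as desired. There is no genuine obstacle here: the inequality is essentially the standard observation that external regret is the swap regret restricted to constant swap mappings, applied uniformly across the class of decision tasks defining both metrics.
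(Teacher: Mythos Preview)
Your proposal is correct and is precisely the natural unfolding of the paper's one-line justification: the paper simply states that $\ucal\le\robustcal$ holds ``by definition'' without giving a formal proof, and your argument---rewriting $\robustcal$ via the lemma as a supremum of swap regrets and observing that constant swap maps recover external regret---is exactly the standard reasoning that makes this immediate.
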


\cite{kleinberg2023u} gives an example where in the limit as $T\to \infty$, the U-calibration error is $0$, while $\ece$, $\lcal{2}$ and $\smooth$ are non-zero. We present a simpler example showing $\ucal = 0$ is insufficient for calibration and $\robustcal = \Omega(1)$. 

\begin{example}
\label{example:u calib}
    The empirical distribution of the state is $\Pr[\state = 1] = \sfrac{1}{2}$, i.e.\ $\frac{T}{2}$ samples are $1$, $\frac{T}{2}$ samples are $0$. The predictor predicts $\frac{3}{4}$ when the state is $1$, and $\frac{1}{4}$ when the state is $0$.

    In this example, $\ucal = 0$, $\robustcal = \Omega(1)$. 
\end{example}

\begin{proof}[Proof of \Cref{example:u calib}]

First, $\ucal$ is always non-negative. By definition, there exists a degenerate
decision task with a constant payoff, where the decision maker has $0$ external regret. By the same V-shaped decomposition in \Cref{lm:v-decompose} and in \citet{kleinberg2023u}, it only remains to show this predictor is weakly better than predicting $\frac{1}{2}$ on all V-shaped scoring rules in \Cref{eq:V-shaped score}.

Fix each V-shaped scoring rule with kink $\kink\in[0, \frac{1}{4}]$, always predicting $\sfrac{1}{2}$ and the miscalibrated predictor aboth achieve the same payoff. The external regret is thus $0$. For V-shaped scoring rules with kink $\kink\in (\frac{1}{4}, \frac{1}{2}]$, predicting $\frac{1}{2}$ yields payoff $\frac{3}{4}-\frac{1}{4}\cdot\frac{\kink}{1-\kink}$. However, the miscalibrated predictor obtains higher payoff $\frac{3}{4}+\frac{1}{4}\cdot\frac{\kink}{1-\kink}$, which achieves negative external regret. The case for $\kink>\frac{1}{2}$ is similar to the two cases above. We can conclude that the external regret for each downstream decision task is non-positive. 

This example, however, is very miscalibrated. Specifically, consider the V-shaped scoring rule with kink at $\frac{1}{4}-\epsilon$. $\cfdl$ of the miscalibrated predictor is $\Omega(1)$ for predicting $\frac{1}{4}$, where the conditional empirical frequency is $0$. 

\end{proof}

\section{Minimizing Calibration Decision Loss}
\label{sec:main-sqrt T}

In this section, we present our online binary prediction algorithm (\Cref{alg:min MSR}) and prove that it achieves the following low $\robustcal$ guarantee:

\begin{theorem}\label{thm:sqrt MSR}
For $T \ge 2$,
    \Cref{alg:min MSR} runs in time polynomial in $T$ and makes predictions $\vpred = (p_1,\ldots,p_T)$ satisfying
    \begin{equation*}
        \expect{}{\robustcal(\vpred, \vstate)}\leq  O(\frac{\log T}{\sqrt{T}}).
    \end{equation*}
   Here, $\vstate = (\theta_1,\ldots,\theta_T)$ is the sequence of realized states chosen by any adversary in the online binary prediction setting (see \Cref{sec:online}), and the expectation is over the randomness of the algorithm. 
\end{theorem}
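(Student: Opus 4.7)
The plan is to reduce the theorem to bounding the V-Bregman swap regret $\swapV$, via $\robustcal \le 2\swapV$ from \Cref{thm:v approx msr}, and then to obtain the $O(\log T/\sqrt{T})$ bound on $\E[\swapV]$ via the per-bucket calibration approach sketched in \Cref{sec:tech-overview}. Concretely, I would discretize the prediction space to the uniform grid $Q=\{i/m:i=1,\ldots,m\}$ with $m=\lceil\sqrt{T}\rceil$, and run an efficient online subroutine (in the style of \citet{noarov2023highdimensional}) that outputs each $p_t\in Q$ using only the past transcript. The subroutine should be tuned so that the bucket counts $n_i$ and bucket-conditional empirical means $\hat q_i$ from \eqref{eq:ni-intro}--\eqref{eq:hat-q-intro} satisfy a uniform-in-$i$ bias guarantee of the form $|\hat q_i - q_i| \le O(1/\sqrt{n_i})$ (up to logarithmic slack) on a high-probability event.

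With that guarantee in hand, the second step is to invoke \Cref{lem:swap regret bound on v bregman with kink}, which bounds the V-Bregman CFDL by $\cfdl_\mu(\vpred,\vstate)\le \frac{2}{T}\sum_i(G_i-n_i|q_i-\mu|)_+$ where $G_i=n_i|q_i-\hat q_i|$. Under the per-bucket bias bound $G_i\le C\sqrt{n_i}$, the $i$-th summand is nonzero only when $|q_i-\mu|< C/\sqrt{n_i}$, and in that case is at most $C\sqrt{n_i}$. I would then show, uniformly in $\mu\in[0,1]$, that $\sum_i \sqrt{n_i}\cdot\ind{|q_i-\mu|\le C/\sqrt{n_i}}\le O(\sqrt{T}\log T)$. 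This is where \Cref{lm:attribute-informal} (the generalized \Cref{lm:attribute}) enters: a dyadic decomposition over ranges of $n_i$ gives, within each dyadic band, a geometric-series bound using the spacing $1/m=1/\sqrt{T}$ of the grid, and summing over $O(\log T)$ bands yields the required factor. Taking the supremum over $\mu$ converts this into $\swapV\le O(\log T/\sqrt{T})$ on the good event, and hence, together with $\robustcal\le 2\swapV$, the stated expected bound.

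The main technical obstacle is passing from a per-bucket \emph{in-expectation} bias bound (what the \citet{noarov2023highdimensional}-style algorithm gives directly, namely control over $\max_i(\E[|\hat q_i-q_i|]-O(1/\sqrt{n_i}))$) to an \emph{expectation-of-the-max} bound of the form $\E[\max_i(|\hat q_i-q_i|-O(1/\sqrt{n_i}))]$, which is what is needed for the attribution step to yield a bound on $\E[\swapV]$ rather than merely on a per-$\mu$ expectation. I would handle this by refining the concentration analysis of the subroutine: for each bucket $i$, the sequence $\sum_{t\le T}(\theta_t-q_i)\ind{p_t=q_i}$ is a martingale difference sum, so a Freedman-type or stopped-time Azuma inequality gives $|\hat q_i-q_i|\le O(\sqrt{\log(m/\delta)/n_i})$ with probability $1-\delta$ conditional on $n_i\ge 1$. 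A union bound over the $m=O(\sqrt{T})$ buckets costs only a $\sqrt{\log T}$ factor, which is absorbed in the overall $\log T$. The remaining computational efficiency follows because each round of the subroutine requires only $\mathrm{poly}(m,t)$ work; since $m\le\sqrt{T}$, the overall runtime is polynomial in $T$.
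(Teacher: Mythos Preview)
Your high-level plan matches the paper's: reduce to $\swapV$ via \Cref{thm:v approx msr}, discretize, obtain a per-bucket bias bound, and then attribute $\swapV$ to the bucket biases via \Cref{lem:swap regret bound on v bregman with kink} and the geometric argument behind \Cref{lm:attribute}. Where your proposal breaks is the step you flag as the ``main technical obstacle''.

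Your proposed fix---applying Azuma/Freedman to the sequence $\sum_{t\le T}(\theta_t-q_i)\ind{p_t=q_i}$---does not work, because that sequence is \emph{not} a martingale in the adversarial model of \Cref{sec:online}. The adversary chooses $\theta_t$ based on $h_{t-1}$ and the predictor's randomized strategy; conditional on $h_{t-1}$, $\theta_t$ and $p_t$ are independent, but $\E[\theta_t\mid h_{t-1}]$ is under the adversary's control and need not equal $q_i$. Hence $\E[(\theta_t-q_i)\ind{p_t=q_i}\mid h_{t-1}]=(\E[\theta_t\mid h_{t-1}]-q_i)\Pr[p_t=q_i\mid h_{t-1}]$ is generally nonzero, and no Azuma/Freedman-type inequality applies. (The martingale argument you have in mind is exactly what powers the non-constructive minimax proof in \Cref{appdx:minimax}, but only \emph{after} the minimax swap fixes the adversary so the predictor can truthfully report the conditional mean; it does not transfer to the constructive algorithm.)

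The paper closes this gap differently. The oracle from \Cref{lemma:msmwc} yields a \emph{deterministic} regret inequality \eqref{eq:expert-regret}, which together with the auxiliary-expert bound \eqref{eq:0} gives a deterministic pointwise bound $\LL(\vpred,\vstate)\le \sum_{t}\sum_{i,\sigma}w_{t,i,\sigma}\ell_{t,i,\sigma}+C\log(mT)$ (this is \eqref{eq:LL-1}). The only randomness left is in the draw $\tilde p_t\sim s_t$, and Step~3 guarantees $\E_{\tilde p_t\sim s_t}[\sum_{i,\sigma}w_{t,i,\sigma}\ell_{i,\sigma}(\tilde p_t,\theta)]\le \epsilon$ for every $\theta$, so taking expectations yields $\E[\LL]\le O(\log(mT))$. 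This is how the paper upgrades ``max of expectations'' to ``expectation of the max'' without any concentration on the bias sequence. A secondary point: the paper sets $m=\Theta(\sqrt{T}/\log T)$ rather than $m=\sqrt{T}$; with your choice the term $O(\alpha^2 m\log m/T)$ in \Cref{lm:attribute} becomes $O(\log^2 T/\sqrt{T})$, costing an extra $\log T$.
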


We design \Cref{alg:min MSR} such that it makes predictions in a finite set $Q:= \{q_1,\ldots,q_m\}\subseteq [0,1]$, where $q_i = i/m$ for each $i = 1,\ldots,m$. Later we will pick the optimal choice of $m \approx \sqrt T/\log T$. We view each $q_i$ as a bucket, so the prediction $p_t$ made by \Cref{alg:min MSR} in each round $t$ falls into one of the $m$ buckets $q_1,\ldots,q_m$. We use $n_i$ to denote the number of predictions in bucket $i$ (see \eqref{eq:ni-intro}), and use $\hat q_i$ to denote the average value of the realized states corresponding to the $n_i$ predictions (see \eqref{eq:hat-q-intro}).
We define $G_i:= n_i|q_i - \hat q_i|$ as the \emph{bias} from bucket $i$.

In \Cref{sec:attribute}, we prove a key technical lemma which allows us to attribute the $\robustcal$ to the bias $G_i$ from each bucket. We then present \Cref{alg:min MSR} in \Cref{sec:alg} and complete the proof of \Cref{thm:sqrt MSR}.

\subsection{Attributing $\robustcal$ to Bucket-wise Biases}

\label{sec:attribute}

We establish our key technical lemma that allows us to upper bound $\robustcal$ using the bucket-wise biases.
\begin{lemma}[Formal and generalized version of \Cref{lm:attribute-informal}]
\label{lm:attribute}
Let $T,m \ge 2$ be positive integers. Define $Q = \{q_1,\ldots,q_m\}\subseteq [0,1]$ where $q_i = i/m$ for every $i = 1,\ldots,m$.
Given a sequence of predictions $\vpred = (p_1,\ldots,p_T)\in Q^T$ and realized states $\vstate = (\theta_1,\ldots,\theta_T)\in \{0,1\}^T$, define $n_i$ and $\hat q_i$ as in \eqref{eq:ni-intro} and \eqref{eq:hat-q-intro}. Define $G_i:= n_i|\hat q_i - q_i|$. For $\alpha,\beta \ge 0$, define maximum deviation $\LL$:
\begin{equation}
\label{eq:LL}
\LL(\vpred,\vstate) := \max_{1 \le i \le m}\{[G_i - \alpha\sqrt n_i - \beta n_i]_+\}, \quad \text{where $[x]_+:= \max(x,0)$.}
\end{equation}
Then
\[
\robustcal (\vpred, \vstate) \le \frac{4m}{T}\LL(\vpred,\vstate) + \frac{4\alpha}{\sqrt{T}} + 4\beta  + O\Big(\frac{\alpha^2 m \log m}{T}\Big).
\]
\end{lemma}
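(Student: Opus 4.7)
The plan is to reduce the bound to a purely combinatorial estimate on the bucket-wise biases $G_i$ and then exploit the equal spacing of the prediction grid $Q$.

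\textbf{Step 1 (reduction to V-Bregman divergences).} By Theorem~\ref{thm:v approx msr}, $\robustcal(\vpred,\vstate) \le 2\swapV(\vpred,\vstate) = 2\sup_{\mu \in [0,1]} \cfdl_\mu(\vpred,\vstate)$, and Lemma~\ref{lem:swap regret bound on v bregman with kink} gives
\[
\cfdl_\mu(\vpred,\vstate) \le \frac{2}{T}\sum_{i=1}^m \big[G_i - n_i|q_i - \mu|\big]_+.
\]
Hence it suffices to show, uniformly in $\mu \in [0,1]$, the bucket-wise estimate
\[
\sum_{i=1}^m \big[G_i - n_i|q_i - \mu|\big]_+ \;\le\; m\LL + \alpha\sqrt{T} + \beta T + O(\alpha^2 m \log m),
\]
after which dividing by $T/4$ and taking the supremum over $\mu$ yields the claim with constants $4m/T$, $4\alpha/\sqrt{T}$, $4\beta$.

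\textbf{Step 2 (additive decomposition).} The definition of $\LL$ gives $G_i \le \LL + \alpha\sqrt{n_i} + \beta n_i$ for every $i$. Splitting
\[
\LL + \alpha\sqrt{n_i} + \beta n_i - n_i|q_i - \mu| = \LL + \big(\alpha\sqrt{n_i} - \tfrac{1}{2}n_i|q_i-\mu|\big) + \big(\beta n_i - \tfrac{1}{2}n_i|q_i-\mu|\big)
\]
and applying $(a+b+c)_+ \le a_+ + b_+ + c_+$ together with $\LL \ge 0$:
\[
\big[G_i - n_i|q_i - \mu|\big]_+ \le \LL + \big[\alpha\sqrt{n_i} - \tfrac{1}{2}n_i|q_i-\mu|\big]_+ + \big[\beta n_i - \tfrac{1}{2}n_i|q_i-\mu|\big]_+.
\]

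\textbf{Step 3 (bounding each piece).} Summed over $i$, the $\LL$ piece contributes $m\LL$, and the $\beta$ piece is dominated by $\beta n_i$, summing to at most $\beta \sum_i n_i = \beta T$. For the $\alpha$ piece, one-variable calculus on $x \mapsto \alpha\sqrt{x} - \tfrac{1}{2}x|q_i - \mu|$ gives the pointwise bound $\alpha^2/(2|q_i - \mu|)$ whenever $q_i \ne \mu$, and the trivial bound $\alpha\sqrt{n_i} \le \alpha\sqrt{T}$ for the single bucket $i^*$ whose prediction is closest to $\mu$. Since $q_i = i/m$ are equally spaced at distance $1/m$, for every $i \ne i^*$ we have $|q_i - \mu| \ge (|i - i^*| - \tfrac{1}{2})/m$, which yields the harmonic-type estimate
\[
\sum_{i \ne i^*} \frac{\alpha^2}{2|q_i - \mu|} \;=\; O(\alpha^2 m \log m).
\]
Combining the three pieces proves the bucket-wise estimate.

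\textbf{Main obstacle.} The delicate step is the $\alpha$-piece estimate: the pointwise bound $\alpha^2/(2|q_i - \mu|)$ blows up when $\mu$ happens to be very close to some $q_i$, so we must peel off the nearest bucket $i^*$ and control it separately via $\alpha\sqrt{T}$. The equal spacing of $Q$ then forces the remaining distances to be bounded below by $(k - \tfrac{1}{2})/m$ for $k = 1, 2, \ldots$, which is precisely what makes the harmonic sum converge to $O(m \log m)$; without this spacing structure the $\alpha$ contribution would not collapse to $O(\alpha^2 m \log m / T)$.
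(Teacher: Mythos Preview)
Your proposal is correct and follows essentially the same approach as the paper's proof: reduce to $\swapV$ via \Cref{thm:v approx msr}, apply \Cref{lem:swap regret bound on v bregman with kink}, use $G_i \le \LL + \alpha\sqrt{n_i} + \beta n_i$, peel off the nearest bucket with the trivial $\alpha\sqrt{T}$ bound, and control the remaining buckets via the optimized quadratic bound $\alpha^2/(\text{const}\cdot|q_i-\mu|)$ together with the equal spacing of $Q$ to get a harmonic sum $O(\alpha^2 m\log m)$. The only cosmetic difference is that the paper pulls $\LL + \beta n_i$ out of the positive part in one step (using $(a+b+c)_+ \le a + b + (c)_+$ for $a,b\ge 0$) and keeps the full $n_i|q_i-\mu|$ against $\alpha\sqrt{n_i}$, whereas you split $n_i|q_i-\mu|$ into halves via $(a+b+c)_+ \le a_+ + b_+ + c_+$; this costs you a harmless factor of $2$ inside the $O(\cdot)$ but is otherwise identical.
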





Our proof of \Cref{lm:attribute} relies on the following helper lemma which controls the $\cfdl$ w.r.t.\ a single V-shaped Bregman divergence.

\lemSwapG


\begin{proof}
\begin{align*}
            \cfdl_\kink(\vpred, \vstate) &=\frac{1}{T}\sum_{i\in [m]}\qcount_i\vbreg_\kink(\q_i, \empq_i)\\
            &=\frac{1}{T}\sum_{i\in [m]}\qcount_i\frac{|\empq_i-\kink|}{\max\{1-\kink, \kink\}}\left(\ind{\q_i< \kink<\empq_i} + \ind{\q_i> \kink>\empq_i}\right) \tag{by \eqref{eq:vbreg}}\\
            &\leq \frac{1}{T}\sum_{i\in [m]}\qcount_i\frac{\left(|\empq_i-\q_i| - |\kink - \q_i|\right)_+}{\max\{1-\kink, \kink\}}\\
            &\leq \frac{2}{T} \sum_{i\in [m]}\left(\G_i - \qcount_i|\q_i - \kink|\right)_+.\qedhere
\end{align*}
\end{proof}

 \begin{proof}[Proof of \Cref{lm:attribute}]
By \Cref{thm:v approx msr}, it suffices to prove 
\begin{equation}
\label{eq:vswap-1}
\swapV(\vpred,\vstate) \le \frac{2m}{T}\LL(\vpred,\vstate) + \frac{2\alpha}{\sqrt{T}} + 2\beta  + O(\frac{\alpha^2 m \log m}{T}).
\end{equation}

For any $\mu\in [0,1]$, by \Cref{lem:swap regret bound on v bregman with kink},
\begin{align}
\cfdl_\mu(\vpred,\vstate) & \le \frac{2}{T}\sum_{i=1}^m(G_i - n_i|q_i - \mu|)_+\notag\\
& \le \frac{2}{T}\sum_{i=1}^m(\LL(\vpred,\vstate) + \alpha\sqrt n_i + \beta n_i - n_i|q_i - \mu|)_+ \notag\\
& \le \frac{2}{T}\sum_{i=1}^m(\LL(\vpred,\vstate) + \beta n_i + (\alpha \sqrt{n_i} - n_i|q_i - \mu|)_+)\tag{because $\LL(\vpred,\vstate) \ge 0$ and $\beta \ge 0$}\\
& = \frac{2m}{T}\LL(\vpred,\vstate) + 2\beta  + \frac{2}{T}\sum_{i=1}^m(\alpha \sqrt {n_i} - n_i|q_i - \mu|)_+.\label{eq:swap-mu-1}
\end{align}
Let us re-arrange $q_1,\ldots,q_m$ in non-decreasing order of $|q_i - \mu|$. That is, we choose a  bijection $\tau$ from $\{1,\ldots,m\}$ to itself such that $|q_{\tau(i)} - \mu|$ is a non-decreasing function of $i$. When $i = 1$, we use the following trivial upper bound:
\begin{equation}
\label{eq:swap-mu-2}
(\alpha \sqrt {n_{\tau(1)}} - n_{\tau(1)}|q_{\tau(1)} - \mu|)_+ \le \alpha \sqrt {n_{\tau(1)}} \le \alpha \sqrt T.
\end{equation}
When $i > 1$, we have $|q_{\tau(i)} - \mu| \ge \Omega(i/m)$, and thus
\begin{equation}
\label{eq:swap-mu-3}
(\alpha \sqrt {n_{\tau(i)}} - n_{\tau(i)}|q_{\tau(i)} - \mu|)_+ \le \frac{\alpha^2}{4|q_{\tau(i)} - \mu|} = O(\alpha^2 m/i).
\end{equation}

Plugging \eqref{eq:swap-mu-2} and \eqref{eq:swap-mu-3} into \eqref{eq:swap-mu-1}, we get
\begin{align*}
\cfdl_\mu(\vpred,\vstate) & \le \frac{2m}{T}\LL(\vpred,\vstate) + 2\beta  + \frac{2\alpha}{ \sqrt T} + O\left(\frac{1}{T}\alpha^2 m \sum_{i=2}^m \frac 1{i}\right)\\
& \le \frac{2m}{T}\LL(\vpred,\vstate) + \frac{2\alpha}{\sqrt{T}} + 2\beta  + O(\frac{\alpha^2 m \log m}{T}).
\end{align*}
This implies \eqref{eq:vswap-1}, as desired.
\end{proof}

\subsection{Efficient $\robustcal$ Minimization Algorithm}
\label{sec:alg}
Given \Cref{lm:attribute}, we can establish the low $\robustcal$ guarantee in \Cref{thm:sqrt MSR} by designing an algorithm (\Cref{alg:min MSR}) that minimizes $\LL(\vpred,\vstate)$.
Specifically, for parameters $m \approx \sqrt T/\log T, \beta = 1/m, \alpha \approx \sqrt {\log T}$, we show that \Cref{alg:min MSR} achieves $\expect{}{\LL(\vpred,\vstate)} = O(\log T)$ (\Cref{lem:bound LL algorithm 1}). 
Our design of \Cref{alg:min MSR} largely follows the ideas from \citet{noarov2023highdimensional}, but we make small but important refinements to obtain a stronger guarantee as needed to prove \Cref{thm:sqrt MSR} (see \Cref{remark:refine}).

In \Cref{alg:min MSR}, we partition the interval $[0,1]$ into $m$ sub-intervals $I_1,\ldots,I_m$ where 
\begin{equation}
\label{eq:intervals}
I_1 = [0,1/m], I_2 = (1/m, 2/m], \ldots, I_m = ((m-1)/m,1]. 
\end{equation}
In each round $t = 1,\ldots,T$, \Cref{alg:min MSR} first computes a prediction $\tilde p_t\in [0,1]$ and then outputs a discretized prediction $p_t$ via rounding. Specifically, $Q = \{q_1,\ldots,q_m\}$ is the discretized prediction space, where $q_i = i/m$ for every $i = 1, \ldots,m$. The prediction $\tilde p_t$ belongs to an interval $I_i$ for a unique index $i = 1,\ldots,m$, and the corresponding discretized prediction is $p_t = q_i\in Q$. We use $n_i$ to denote the number of rounds $t$ in which $p_t = q_i$, or equivalently, $\tilde p_t \in I_i$:
\[
n_i:= \sum_{t=1}^T\ind{p_t = q_i} = \sum_{t=1}^T\ind{\tilde p_t\in I_i}.
\]
For each $i = 1,\ldots,m$, and $\sigma= \pm 1$, we define
\begin{equation}
\label{eq:expert-gain}
l_{i,\sigma}(p,\theta) := \sigma \ind{p\in I_i}(p - \theta) \quad \text{for every prediction $p\in [0,1]$ and state $\theta\in \{0,1\}$.}
\end{equation}

\Cref{alg:min MSR} calls an expert regret minimization oracle $\msmwc$ from \cite{msmwc}.
Here we imagine $2m + 1$ \emph{experts}: one expert for each pair $(i,\sigma)\in [m]\times\{\pm 1\}$ and one extra auxiliary expert. In each round $t$, the oracle  $\msmwc$ computes a distribution over the experts represented by values $w_{t,i,\sigma}\ge 0$, where for each pair $(i,\sigma)$, the value $w_{t,i,\sigma} \ge 0$ is the probability mass on the expert corresponding to $(i,\sigma)$. Thus, the probability mass on the auxiliary expert is $1 - \sum_{(i,\sigma)}w_{t,i,\sigma} \ge 0$.

The distribution computed by $\msmwc$ in each round $t$ is based on the \emph{gains} $l_{t',i,\sigma}\in [-1,1]$ received by each expert $(i,\sigma)$ at Step 7 in each previous round $t' < t$. We set the gain of the auxiliary expert to always be zero. The work of \citet{msmwc} shows a construction of the oracle with the following property:

\begin{lemma}[\cite{msmwc}, applied to \Cref{alg:min MSR}]\label{lemma:msmwc}
For some absolute constant $C>0$, there exists an expert regret minimization oracle $\msmwc$ for step 1 of \Cref{alg:min MSR} with the following properties. Assume $T,m\ge 2$. In each round $t = 1,\ldots,T$, the oracle computes $w_{t,i,\sigma}\ge 0$ for every $i\in [m]$ and $\sigma = \pm 1$ in time $\mathsf{poly}(m)$ such that 
\begin{align}
\sum_{i\in [m],\sigma = \pm 1}w_{t,i,\sigma} & \le 1, \notag\\
\label{eq:0}
-\sum_{t=1}^T\sum_{i\in [m], \sigma = \pm 1}{\w_{t, i, \sigma}}\expert_{t, i, \sigma} & \le C\log(mT),
\end{align}
and for every $i = 1,\ldots,m$ and $\sigma = \pm 1$,
    \begin{equation}
    \label{eq:expert-regret}
       \sum_{t=1}^T\expert_{t, i, \sigma}-\sum_{t = 1}^T\sum_{i'\in [m], \sigma' = \pm 1}{\w_{t, i', \sigma'}}\expert_{t, i', \sigma'}\leq C\left(\log (mT) + \sqrt{\qcount_i\log(mT)}\right).
    \end{equation}
\end{lemma}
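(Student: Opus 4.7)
The plan is to invoke the \textsc{MsMwC} algorithm of \citet{msmwc} as a black box, with the $2m+1$ experts described in the lemma (one expert for each pair $(i,\sigma)\in[m]\times\{\pm1\}$ together with one auxiliary expert whose gain is always zero). The loss/gain sequence $\expert_{t,i,\sigma}$ defined in \eqref{eq:expert-gain} lies in $[-1,1]$ because $p\in[0,1]$ and $\theta\in\{0,1\}$, so the algorithm of \citet{msmwc} applies directly. Its guarantee is a \emph{data-dependent} (second-order/small-loss) regret bound of the form
\[
\sum_{t=1}^T \expert_{t,e} - \sum_{t=1}^T \sum_{e'} w_{t,e'} \expert_{t,e'} \;\leq\; C\Bigl(\log K + \sqrt{V_e \log K}\Bigr),
\]
simultaneously for every expert $e$, where $K=2m+1$ is the number of experts and $V_e = \sum_{t=1}^T \expert_{t,e}^2$.

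The key observation is that $\expert_{t,i,\sigma} = \sigma\ind{p_t\in I_i}(p_t-\theta_t)$ vanishes whenever $p_t\notin I_i$, and is bounded by $1$ in magnitude otherwise. Hence
\[
V_{i,\sigma} \;=\; \sum_{t=1}^T \expert_{t,i,\sigma}^2 \;=\; \sum_{t=1}^T \ind{p_t\in I_i}(p_t-\theta_t)^2 \;\leq\; n_i,
\]
and substituting this into the \textsc{MsMwC} regret bound (with $\log K = O(\log m)$, so $\log K \le \log(mT)$) yields \eqref{eq:expert-regret}, after absorbing constants into $C$.

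For \eqref{eq:0}, I would apply the same regret bound to the auxiliary expert, whose cumulative gain is identically $0$ and whose variance is $V = 0$. The regret inequality then reduces to
\[
0 - \sum_{t=1}^T \sum_{i,\sigma} w_{t,i,\sigma} \expert_{t,i,\sigma} \;\leq\; C\log(mT),
\]
which is exactly \eqref{eq:0}. The constraint $\sum_{i,\sigma} w_{t,i,\sigma}\le 1$ holds because the $w_{t,\cdot}$'s together with the weight on the auxiliary expert form a probability distribution over the $2m+1$ experts, a property maintained by \textsc{MsMwC} by construction. Finally, the per-round runtime of $\mathsf{poly}(m)$ follows from the standard multiplicative-weights style update of \textsc{MsMwC} over $2m+1$ experts.

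The main (and only) subtlety is matching the setup to the precise statement in \citet{msmwc}: verifying that their algorithm admits the inclusion of an always-zero auxiliary expert (which is immediate since its gain and variance are both zero and the MWC update is well defined), and checking that their variance-based regret bound indeed specializes under the indicator structure $\ind{p_t\in I_i}$ to give the factor $\sqrt{n_i}$ rather than $\sqrt{T}$. Both are routine, so no creative argument beyond a careful invocation is required.
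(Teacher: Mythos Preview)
Your proposal is correct and matches the paper's approach. The paper does not give a detailed proof of this lemma; it simply attributes the result to \citet{msmwc} and remarks (exactly as you do) that the $\sqrt{n_i}$ dependence in \eqref{eq:expert-regret} arises because the gains $\expert_{t,i,\sigma}$ vanish when $\tilde p_t\notin I_i$, and that \eqref{eq:0} is the special case of the regret bound applied to the auxiliary expert with identically zero gain.
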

For each expert $(i,\sigma)$, the guarantee \eqref{eq:expert-regret} is stronger than more standard guarantees for the experts problem in that the right hand side of \eqref{eq:expert-regret} has a $\sqrt{n_i}$ dependence rather than a $\sqrt T$ dependence. For the auxiliary expert, we get the guarantee \eqref{eq:0}, which can be viewed as a special form of \eqref{eq:expert-regret} with $l_{t,i,\sigma} = 0$ and $n_i = 0$.

\begin{algorithm}[thbp]
    \caption{Algorithm for $\robustcal$ minimization. }
    \label{alg:min MSR}
    \begin{algorithmic}
    \State Parameters: positive integers $m, T$; $\epsilon > 0$; discretized prediction space $Q = \{q_1,\ldots,q_m\}$ where $q_i = i/m$; intervals $I_1,\ldots,I_m$ partitioning $[0,1]$ as defined in \eqref{eq:intervals}; functions $l_{i,\sigma}$ as defined in \eqref{eq:expert-gain}.
    \For{each round $t= 1, \ldots, T$}
        
        
        \State 1.\ Compute expert weights $w_{t,i,\sigma}$ for every $i\in [m]$ and $\sigma = \pm 1$ using the expert regret minimization oracle $\msmwc$ from \Cref{lemma:msmwc}.
        \State 2. For any distribution $s$ over $[0,1]$, define
        \begin{equation}
        \label{eq:ht}
        h_t(s):= \max_{\theta\in \{0,1\}}\expect{p \sim s}{\sum_{i\in [m],\sigma = \pm 1}\w_{t, i, \sigma}l_{i,\sigma}(p, \theta)}.
        \end{equation}

        \State 3. Find distribution $s_t$ such that $h_t(s_t) \le  \epsilon$.
        \State 4. Draw $\prednoround_t\in [0,1]$ from distribution $\s_t$.
        \State 5.\ \textbf{Output} $p_t := q_i$, where $i$ is the unique index in $\{1,\ldots,m\}$ satisfying $\tilde p_t \in I_i$.
        \State 6. Receive the realized state $\state_t\in \{0,1\}$. 
        \State 7. Calculate expert gains $\expert_{t, i, \sigma}$ for every $i\in[m]$ and $\sigma = \pm 1$:
                \begin{equation}\label{eq:expert loss}
  \expert_{t, i, \sigma} := l_{i,\sigma}(\tilde p_t,\theta_t).
\end{equation}
    \EndFor
    \end{algorithmic}
\end{algorithm}

The following lemma shows that Step 3 of \Cref{alg:min MSR} can be computed efficiently:

\begin{lemma}[\cite{noarov2023highdimensional}]\label{lemma:low minmax value}
Let $\epsilon > 0$ be a parameter.
At step 3 of \Cref{alg:min MSR}, a solution $s_t$ satisfying $h_t(s_t)\le \epsilon$ always exists and can be computed in time $\mathsf{poly}(\epsilon^{-1})$.
\end{lemma}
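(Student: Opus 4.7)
The plan is to reduce the definition of $h_t$ to a low-dimensional form, use a minimax argument for existence, and then solve a linear feasibility program for computation. Fix the round $t$ and set $\Delta_i := w_{t,i,+1} - w_{t,i,-1}$ for each $i\in[m]$. Let $i(p)\in[m]$ denote the unique index with $p\in I_{i(p)}$. Using \eqref{eq:expert-gain}, the inner sum in \eqref{eq:ht} collapses to a single term,
\begin{equation*}
\sum_{i\in [m], \sigma = \pm 1} w_{t,i,\sigma}\, l_{i,\sigma}(p,\theta) \;=\; \Delta_{i(p)}(p - \theta),
\end{equation*}
so that $h_t(s) = \max_{\theta\in\{0,1\}} \mathbb{E}_{p\sim s}[\Delta_{i(p)}(p-\theta)]$, a maximum of two linear functionals in $s$.

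For existence, I would apply Sion's minimax theorem to the bilinear game where the minimizer plays a distribution $s$ on $[0,1]$ and the maximizer plays $q=\Pr[\theta=1]\in[0,1]$, with payoff $\phi(s,q):=\mathbb{E}_{p\sim s}[\Delta_{i(p)}(p-q)]$. The minimax value equals
\begin{equation*}
\max_{q\in[0,1]}\;\min_{p\in[0,1]} \Delta_{i(p)}(p-q),
\end{equation*}
which is at most zero because the pure choice $p=q$ produces $\Delta_{i(q)}(q-q)=0$ for every $q\in[0,1]$. Hence some $s_t$ with $h_t(s_t)\le 0\le\epsilon$ exists.

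For efficient computation, I would reduce to a linear program by exploiting that $p\mapsto\Delta_{i(p)}(p-\theta)$ is piecewise affine in $p$ (constant slope on each $I_i$). Thus $h_t(s)$ depends on $s$ only through the $2m$ sufficient statistics
\begin{equation*}
\alpha_i \;:=\; \Pr_{p\sim s}[p\in I_i], \qquad \mu_i \;:=\; \mathbb{E}_{p\sim s}\big[p\,\mathbb{I}[p\in I_i]\big], \qquad i=1,\ldots,m.
\end{equation*}
These come from some valid distribution on $[0,1]$ iff $\alpha_i\ge 0$, $\sum_i\alpha_i=1$, and $\tfrac{i-1}{m}\alpha_i\le \mu_i\le \tfrac{i}{m}\alpha_i$; and $h_t(s)\le\epsilon$ reduces to the two linear inequalities $\sum_i \Delta_i\mu_i\le\epsilon$ and $\sum_i \Delta_i(\mu_i-\alpha_i)\le\epsilon$ (the two choices $\theta=0,1$). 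This is a feasibility LP in $O(m)$ variables and $O(m)$ constraints, solvable in $\mathsf{poly}(m)$ time (which is $\mathsf{poly}(T)$ for the choice $m\approx\sqrt T/\log T$); the corresponding $s_t$ is then recovered by placing mass $\alpha_i$ on a two-point distribution supported on the endpoints of $I_i$ with mean $\mu_i/\alpha_i$. Alternatively, running two-expert Hedge over $\theta\in\{0,1\}$ for $O(\epsilon^{-2})$ rounds, each performing an $O(m)$ best response in $p$, achieves the stated $\mathsf{poly}(\epsilon^{-1})$ runtime by standard minimax-approximation analysis.

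The main conceptual obstacle is the discontinuity of $p\mapsto\Delta_{i(p)}$ across interval boundaries, which blocks a direct application of continuous minimax arguments to $p\in[0,1]$. Passing to the finite sufficient-statistic parameterization $(\alpha_i,\mu_i)$ before invoking minimax cleanly resolves this, and also makes the computational step a standard LP rather than an infinite-dimensional optimization.
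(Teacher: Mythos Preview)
Your proposal is correct. The paper does not actually give a self-contained proof of this lemma: it only remarks that existence follows from the minimax theorem and defers the computational part to the Follow-the-Perturbed-Leader construction of \citet{noarov2023highdimensional}. Your existence argument (collapse the sum to $\Delta_{i(p)}(p-\theta)$, convexify $\theta$ to $q\in[0,1]$, and observe that the max--min value is at most $0$ via the choice $p=q$) is precisely the minimax argument the paper alludes to.

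Where you genuinely diverge is in the computation. The paper points to the general no-regret (FTPL) dynamics of \citet{noarov2023highdimensional}, which applies to their much broader family of constraints; your Hedge-over-$\{0,1\}$ alternative is the same idea specialized to two experts. Your primary route, however, is different and more elementary: you exploit the specific piecewise-affine structure of $p\mapsto \Delta_{i(p)}(p-\theta)$ to reduce $h_t(s)\le\epsilon$ to a finite linear feasibility program in the $2m$ sufficient statistics $(\alpha_i,\mu_i)$. This buys a direct $\mathsf{poly}(m)$ running time (stronger than the stated $\mathsf{poly}(\epsilon^{-1})$, since $m\le \epsilon^{-1}$ in the algorithm) and avoids any iterative approximation. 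One cosmetic point in the recovery step: for $i\ge 2$ the interval $I_i$ is half-open at the left, so the two-point support should use the right endpoint together with an interior point of $I_i$ rather than both endpoints; this is harmless since you only need $h_t(s_t)\le\epsilon$.
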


The existence of $s_t$ in \Cref{lemma:low minmax value} can be proved using the minimax theorem. 
We refer the reader to \citet{noarov2023highdimensional} for a complete proof of (a more general version of) \Cref{lemma:low minmax value} which includes an efficient algorithm for computing $s_t$ using the Follow-the-Perturbed-Leader approach.

We are now ready to prove that \Cref{alg:min MSR} guarantees a small value of $\LL(\vpred,\vstate)$ in expectation:

\begin{lemma}\label{lem:bound LL algorithm 1}
Let $C > 0$ be the absolute constant from \Cref{lemma:msmwc}.
Assume $m,T \ge 2$ and $\varepsilon = 1/T$ in \Cref{alg:min MSR}.
Let $\vpred = (p_1,\ldots,p_T)$ be the predictions made by \Cref{alg:min MSR} on the adversarially chosen states $\vstate = (\theta_1,\ldots,\theta_T)$.
Define $\LL$ as in \eqref{eq:LL} for $\alpha = C\sqrt{\log (mT)}, \beta = 1/m$. Then,
    \begin{equation*}
        \expect{}{\LL(\vpred, \vstate)} \leq O(\log (mT)),
    \end{equation*}
where the expectation is over the randomness of \Cref{alg:min MSR}.
\end{lemma}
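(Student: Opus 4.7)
The plan is to reduce the bound on $\E[\LL(\vpred,\vstate)]$ to bounds on two quantities supplied by the oracle $\msmwc$: the per-expert regret bound \eqref{eq:expert-regret} applied to both signs, and the boundedness (on both sides) of the cumulative weighted gain $A := \sum_{t=1}^T \sum_{i,\sigma} w_{t,i,\sigma}\, l_{t,i,\sigma}$. The crucial observation is that, because the regret bound in \eqref{eq:expert-regret} is \emph{pointwise} (deterministic given the realized gains), once I combine it with a tail bound on $A$, I can pull the maximum over $i$ inside the expectation without losing a union-bound factor.

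First I would connect $G_i$ to the oracle's experts. For each bucket $i$ and sign $\sigma = \pm 1$, by \eqref{eq:expert-gain} and \eqref{eq:expert loss},
\[
\sum_{t=1}^T l_{t,i,\sigma} = \sigma\sum_{t=1}^T \ind{\tilde p_t\in I_i}(\tilde p_t - \theta_t).
\]
Taking the larger of the two signs and applying \eqref{eq:expert-regret} to each yields, deterministically,
\[
\tilde G_i := \left|\sum_{t=1}^T \ind{\tilde p_t\in I_i}(\tilde p_t-\theta_t)\right| \le A + C\bigl(\log(mT) + \sqrt{n_i \log(mT)}\bigr).
\]
Rounding $\tilde p_t$ to $q_i\in I_i$ introduces an error of at most $1/m$ per round, so $G_i = |\sum_t \ind{\tilde p_t\in I_i}(q_i-\theta_t)| \le \tilde G_i + n_i/m$. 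Plugging in $\alpha = C\sqrt{\log(mT)}$ and $\beta = 1/m$ gives, for every $i\in[m]$ simultaneously,
\[
G_i - \alpha\sqrt{n_i} - \beta n_i \le A + C\log(mT).
\]

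Next I would take the positive part and maximum over $i$. Since the right-hand side is independent of $i$,
\[
\LL(\vpred,\vstate) = \max_i [G_i - \alpha\sqrt{n_i} - \beta n_i]_+ \le \bigl[A + C\log(mT)\bigr]_+.
\]
By \eqref{eq:0}, we have $-A \le C\log(mT)$ deterministically, so $A + C\log(mT) \ge 0$, and the $[\cdot]_+$ is redundant. Therefore
\[
\E[\LL(\vpred,\vstate)] \le \E[A] + C\log(mT).
\]

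Finally I would bound $\E[A]$ using the min-max step of the algorithm. Conditional on the history up to round $t$, the weights $w_{t,i,\sigma}$ and the distribution $s_t$ are determined before $\tilde p_t$ is drawn and before $\theta_t$ is revealed; moreover, the adversary's choice of $\theta_t$ cannot depend on the realization of $\tilde p_t$. Using \eqref{eq:ht} and the defining property $h_t(s_t)\le \varepsilon = 1/T$ from Step~3,
\[
\E\!\left[\sum_{i,\sigma} w_{t,i,\sigma}\, l_{t,i,\sigma}\,\Big|\,\text{history}\right] = \E_{\tilde p_t\sim s_t}\!\left[\sum_{i,\sigma} w_{t,i,\sigma}\, l_{i,\sigma}(\tilde p_t,\theta_t)\right] \le h_t(s_t)\le 1/T.
\]
Summing over $t$ and taking total expectation yields $\E[A] \le 1$, and hence $\E[\LL(\vpred,\vstate)] \le 1 + C\log(mT) = O(\log(mT))$, as claimed. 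The main subtle point is the interchange of max and expectation in the third paragraph: this works only because the per-bucket inequality $G_i - \alpha\sqrt{n_i} - \beta n_i \le A + C\log(mT)$ holds deterministically and uniformly in $i$, a property inherited from the pointwise (per-expert) guarantee of $\msmwc$; if the oracle only gave bounds in expectation, one would incur an extra $\sqrt{\log m}$ factor from a union bound, which we avoid.
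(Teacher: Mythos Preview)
Your proposal is correct and follows essentially the same approach as the paper's own proof: bound $G_i - \alpha\sqrt{n_i} - \beta n_i$ uniformly in $i$ by $A + C\log(mT)$ using the per-expert regret guarantee \eqref{eq:expert-regret} together with the $1/m$ rounding error, invoke \eqref{eq:0} to remove the positive part, and then bound $\E[A]\le T\varepsilon = 1$ via the min--max step. Your explicit remark that the deterministic, pointwise nature of \eqref{eq:expert-regret} is what permits taking the maximum over $i$ before the expectation is exactly the refinement the paper highlights in \Cref{remark:refine}.
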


\begin{remark}
\label{remark:refine}
We define $\LL(\vpred,\vstate)$ in \eqref{eq:LL} as a maximum over the buckets $i$. Thus, \Cref{lem:bound LL algorithm 1} shows an upper bound on the ``expectation of maximum'', which is stronger than guarantee stated in \citet{noarov2023highdimensional} on the ``maximum of expectation''. In our proof below, we use a slightly more careful analysis than what is used by \citet{noarov2023highdimensional} to achieve the stronger guarantee.
\end{remark}

\begin{proof}[Proof of \Cref{lem:bound LL algorithm 1}]

For every $i = 1,\ldots,m$, by the definition of $G_i$ in \Cref{lm:attribute}, we have
\begin{align*}
G_i = n_i |q_i - \hat q_i| & = 
\left|\sum_{t=1}^T \ind{p_t = q_i}(p_t - \theta_t)\right|\\
 & \le \left|\sum_{t=1}^T \ind{p_t = q_i}(\tilde p_t - \theta_t)\right| + n_i/m\\
 & = \max_{\sigma = \pm 1}\sum_{t=1}^Tl_{i,\sigma}(\tilde p_t, \theta_t) + n_i/m \tag{by \eqref{eq:expert-gain}}\\
 & = \max_{\sigma = \pm 1}\sum_{t=1}^Tl_{t,i,\sigma} + n_i/m.\tag{by \eqref{eq:expert loss}}
\end{align*}
Therefore, by \eqref{eq:expert-regret},
\begin{align*}
G_i - n_i/m - \alpha\sqrt {n_i} \le \left( \max_{\sigma = \pm 1} \sum_{t = 1}^T l_{t,i,\sigma}\right) - \alpha \sqrt {n_i} \le \sum_{t = 1}^T\sum_{i'\in [m], \sigma' = \pm 1}{\w_{t, i', \sigma'}}\expert_{t, i', \sigma'} + C \log(mT).
\end{align*}
By \eqref{eq:0},
\[
0 \le \sum_{t = 1}^T\sum_{i\in [m], \sigma = \pm 1}{\w_{t, i, \sigma}}\expert_{t, i, \sigma} + C \log(mT).
\]
Combining the two inequalities above, we get
\begin{equation}
\label{eq:LL-1}
\LL(\vpred,\vstate) \le \sum_{t = 1}^T\sum_{i\in [m], \sigma = \pm 1}{\w_{t, i, \sigma}}\expert_{t, i, \sigma} + C \log(mT).
\end{equation}
By the definition of $h_t$ in \eqref{eq:ht}, the guarantee of $h_t(s_t) \le \epsilon$, and the fact that $\tilde p_t$ is drawn from $s_t$, we get
\begin{equation}
\label{eq:expLL-1}
    \expect{}{\sum_{t = 1}^T\sum_{i\in [m], \sigma = \pm 1}{\w_{t, i, \sigma}}\expert_{t, i, \sigma}} \le \sum_{t=1}^T \epsilon \le 1.
\end{equation}
Combining \eqref{eq:LL-1} and \eqref{eq:expLL-1}, we get
    \begin{equation*}
       \expect{}{ \LL(\Tilde{\vpred}, \vstate)}\leq O(\log (mT)).\qedhere
    \end{equation*}
\end{proof}
We now complete the proof of our main theorem.
\begin{proof}[Proof of \Cref{thm:sqrt MSR}]
We choose $\epsilon = 1/T$ in \Cref{alg:min MSR} and set $m = \Theta(\sqrt T/\log T)$.
Following the setting of \Cref{lem:bound LL algorithm 1}, we define $\LL$ as in \eqref{eq:LL} for $\alpha = C\sqrt{\log (mT)}, \beta = 1/m$.

We have
     \begin{align*}
       \expect{}{\robustcal(\vpred,\vstate)}&\leq \frac{4m}{T}\expect{}{\LL(\vpred,\vstate)} + \frac{4\alpha}{ \sqrt T }+ 4\beta  + O\Big(\frac{\alpha^2 m \log m}{T}\Big) \tag{by \Cref{lm:attribute}}\\
 & \le O\Big(\frac{m \log (mT)}{T}\Big) +\frac{4\alpha}{ \sqrt T} + 4\beta  + O\Big(\frac{\alpha^2 m \log m}{T}\Big) \tag{by \Cref{lem:bound LL algorithm 1}}\\
 & = O\Big(\frac{m \log T}{T}\Big) + O\Big(\frac{\sqrt {\log T}}{\sqrt{T}}\Big) + O\Big(\frac{\log T}{\sqrt{T}}\Big) + O\Big(\frac{\log T}{\sqrt{T}}\Big)\\
 & =O\Big(\frac{\log T}{\sqrt{T}}\Big).
     \end{align*}
The running time guarantee of \Cref{alg:min MSR} follows from \Cref{lemma:msmwc,lemma:low minmax value}.
\end{proof}




\bibliographystyle{econ}
\bibliography{ref}
\appendix

\section{Minimax Proof for Minimizing $\robustcal$}
\label{appdx:minimax}

We prove the existence of an algorithm that achieves $O(\frac{\log T}{\sqrt{T}})$ $\robustcal$ via the minimax theorem. The minimax theorem allows us to assume that the adversary's (randomized) strategy is fixed and known by the predictor. Our proof demonstrates a remarkable difference between $\ece$ and $\robustcal$ - simply by  truthfully reporting the mean of each state (conditioned on the history and rounded to a suitable finite subset of $[0,1]$), our predictor achieves the $O(\frac{\log T}{\sqrt{T}})$ CDL rate that nearly matches the natural $\Omega(\frac 1{\sqrt T})$ lower bound.

In this setup, a predictor $F$ makes a prediction $p_t\in [0,1]$ at each time step $t = 1,2,\ldots$, and an adversary $A$ picks an outcome $\theta_t\in \{0,1\}$. Both $p_t$ and $\theta_t$ are chosen based on the history $h_{t-1} = (p_1,\theta_1,p_2,\theta_2,\ldots,p_{t-1},\theta_{t-1})$. The goal of the predictor is to minimize $\robustcal (h_T)$.

We can identify a predictor $F$ by its strategy at each time step. That is, we can write $F = (F_1,\ldots,F_T)$, where $F_t$ is a function mapping from $h_{t - 1}$ to $p_t$. Similarly, we can identify the adversary $A$ by its strategy $A_t$ at each time step $t$, writing $A$ as $(A_1,\ldots,A_T)$. Given the strategies $F$ and $A$ of both players, we use $h_{F,A}$ to denote the history $h_T = (p_1,\theta_1,\ldots,p_T,\theta_T)$ generated by executing these strategies.

We will allow the predictor to randomize and play a mixed strategy which can be identified as a probability distribution $\cF$ over all predictor strategies $F$.
\begin{theorem}[Existence]
\label{thm:minimax}
There exists a mixed predictor strategy $\cF$ such that
\[
\max_{A}\E_{F\sim \cF}[\robustcal(h_{F,A})] = O\Big(\frac{\log T}{\sqrt{T}}\Big),
\]
where the maximum is over all strategies $A$ of the adversary.
\end{theorem}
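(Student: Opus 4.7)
The plan is to apply the minimax theorem to exchange the order of the predictor and the adversary, reducing the existence claim to finding a good \emph{pure} predictor against any fixed mixed adversary, and then exploit full knowledge of the adversary by predicting the true conditional mean of each state.

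\textbf{Minimax reduction.} First, I would restrict the predictor to the discretized action space $Q = \{q_1, \ldots, q_m\}$ with $q_i = i/m$ and $m = \Theta(\sqrt T/\log T)$, exactly as in \Cref{alg:min MSR}. With this restriction, both players' pure strategies are functions from finite history spaces to finite action sets, so the sequential game unrolls into a finite two-player zero-sum normal-form game. Von Neumann's minimax theorem then gives
\[
\min_{\cF} \max_A \E_{F \sim \cF}[\robustcal(h_{F,A})] = \max_{\cA} \min_F \E_{A \sim \cA}[\robustcal(h_{F,A})],
\]
so it suffices to exhibit, for every mixed adversary strategy $\cA$, a pure predictor $F^\cA$ with $\E_{A \sim \cA}[\robustcal(h_{F^\cA, A})] = O(\log T/\sqrt T)$.

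\textbf{Truthful rounded predictor.} Fix any $\cA$. The conditional mean $\mu_t := \E_{A \sim \cA}[\theta_t \mid h_{t-1}]$ is a deterministic function of the observed history and so is known to the predictor. Let $F^\cA$ output $p_t = q_{i(t)}$ where $i(t)$ is the unique index with $\mu_t \in I_{i(t)}$ (using the intervals from \eqref{eq:intervals}); this guarantees $|p_t - \mu_t| \le 1/m$ deterministically. Under this strategy, $\{\theta_t - \mu_t\}_{t=1}^T$ is a martingale difference sequence with increments in $[-1,1]$ (w.r.t.\ the history filtration), and each bucket bias satisfies
\[
G_i = n_i|q_i - \hat q_i| \le \left|\sum_{t : p_t = q_i}(\theta_t - \mu_t)\right| + \frac{n_i}{m}.
\]

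\textbf{Concentration and wrap-up.} To bound the martingale term uniformly across buckets I would use a dyadic peeling argument on $n_i$: for each $k = 0, 1, \ldots, \lceil \log_2 T \rceil$, apply Azuma--Hoeffding to the martingale stopped when the bucket count first reaches $2^{k+1}$, then union-bound over $k$ and over $i \in [m]$. Choosing $\alpha = C\sqrt{\log(mT)}$ for a sufficiently large absolute constant $C$, this yields
\[
\Pr\!\left[\exists\, i \in [m] : G_i > \alpha \sqrt{n_i} + \tfrac{n_i}{m}\right] \le \frac{1}{T^2}.
\]
With $\beta = 1/m$ and $\LL(\vpred,\vstate) = \max_i[G_i - \alpha \sqrt{n_i} - \beta n_i]_+$ as in \eqref{eq:LL}, the trivial deterministic bound $\LL \le T$ gives $\E[\LL] \le 1/T$. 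Plugging into \Cref{lm:attribute} and taking expectations then yields
\[
\E[\robustcal(h_{F^\cA, A})] \le \frac{4m}{T}\E[\LL] + \frac{4\alpha}{\sqrt T} + 4\beta + O\!\left(\frac{\alpha^2 m \log m}{T}\right) = O\!\left(\frac{\log T}{\sqrt T}\right)
\]
for the stated parameters.

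\textbf{Main obstacle.} The delicate step is controlling $\sum_{t : p_t = q_i}(\theta_t - \mu_t)$ by $\alpha \sqrt{n_i}$ rather than $\alpha \sqrt T$. Because $n_i$ is random and not a stopping time in any direct sense, a naive Azuma--Hoeffding bound replaces $\sqrt{n_i}$ by $\sqrt T$, losing a factor of $\sqrt{T/n_i}$ that the attribute lemma cannot absorb. The dyadic peeling (or equivalently Freedman's inequality applied to a self-normalized martingale) recovers the correct $\sqrt{n_i}$ scaling at the cost of only a $\sqrt{\log T}$ factor, which matches what $\alpha$ can tolerate; carrying this concentration step out carefully while keeping the log-dependence tight is the crux of the argument.
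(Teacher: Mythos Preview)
Your proposal is correct and follows essentially the same route as the paper: discretize, apply the minimax theorem, use the truthful rounded predictor against a known mixed adversary, bound the per-bucket martingale deviations, and invoke \Cref{lm:attribute}. The paper handles the concentration step you flag as the main obstacle even more simply than dyadic peeling, by zero-padding the per-bucket increments $X_j^{(i)}$ once bucket $i$ is exhausted and union-bounding Azuma over \emph{all} $n\in[T]$ (not just dyadic $n$); the extra factor of $T$ in the union bound is absorbed into $\alpha=\Theta(\sqrt{\log(mT)})$, so your perceived obstacle is milder than you suggest.
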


In fact, we will prove a stronger version of \Cref{thm:minimax}, where we restrict the predictor to make predictions $p_t$ in a discretized space $Q=\{q_i=\frac{i}{m}\}_{i\in[m]}\subseteq [0,1]$. 
We say a predictor strategy $F = (F_1,\ldots,F_T)$ is \emph{discretized} if each $F_t$ is a function mapping from history $h_{t-1}$ to $p_t\in Q$. This discretization makes the (pure) strategy space of the predictor to be finite, allowing us to apply the minimax theorem.

We prove the following statement which implies \Cref{thm:minimax} immediately: there exists a positive integer $m$ such that
\begin{equation*}
\min_{\cF}\max_A\E_{F\sim \cF}[\robustcal(h_{F,A})] = O\Big(\frac{\log T}{\sqrt{T}}\Big),
\end{equation*}
where the minimum is over a distribution $\cF$ over discretized predictor strategies $F$, and the maximum is over all strategies $A$ of the adversary.

By the minimax theorem,
\[
\min_{\cF}\max_A\E_{F\sim \cF}[\robustcal(h_{F,A})] = \max_{\cA}\min_{F}\E_{A\sim \cA}[\robustcal(h_{F,A})].
\]
Thus,  it suffices to reverse the order of play and  prove \Cref{lem:minimax}.

\begin{lemma}
\label{lem:minimax}
There exists a positive integer $m$ such that
    \begin{equation*}
        \max_{\cA}\min_{F}\E_{A\sim \cA}[\robustcal(h_{F,A})] \le  O\Big(\frac{\log T}{\sqrt{T}}\Big).
    \end{equation*}
    where the minimum is  over a deterministic discretized predictor strategy $F$, and the maximum is over a distribution over strategies $A$ of the adversary.
\end{lemma}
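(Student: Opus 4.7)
The plan is, for any fixed adversary mixed strategy $\cA$, to construct a deterministic discretized predictor $F$ that reports at each round the (discretized) Bayesian posterior mean of the next state. Choose $m = \lceil \sqrt{T}/\log T \rceil$, so that $Q = \{q_1,\ldots,q_m\}$ with $q_i = i/m$. Define $F$ recursively: given history $h_{t-1}$, compute
\[
\mu_t(h_{t-1}) := \E_{A \sim \cA}[\theta_t \mid h_{t-1}],
\]
which is a deterministic function of $h_{t-1}$ once $F_1,\ldots,F_{t-1}$ and $\cA$ are fixed, and set $F_t(h_{t-1}) := q_{i_t}$ where $q_{i_t}$ is the grid point in $Q$ closest to $\mu_t$, breaking ties arbitrarily. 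This produces a valid deterministic discretized predictor with the deterministic guarantee $|p_t - \mu_t| \le 1/(2m)$.

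The next step is to invoke \Cref{lm:attribute} with $\alpha = C\sqrt{\log T}$ for a sufficiently large absolute constant $C$ and with $\beta = 1/(2m)$. By that lemma, it suffices to prove $\E_{A \sim \cA}[\LL(\vpred,\vstate)] = O(\log T)$, where $\LL = \max_{i \in [m]} [G_i - \alpha \sqrt{n_i} - \beta n_i]_+$. For each bucket $i$, the triangle inequality yields
\[
G_i = \Big|\sum_{t=1}^T \ind{p_t = q_i}(q_i - \theta_t)\Big| \le \Big|\sum_{t=1}^T \ind{p_t = q_i}(q_i - \mu_t)\Big| + |S_i|, \qquad S_i := \sum_{t=1}^T \ind{p_t = q_i}(\mu_t - \theta_t).
\]
The rounding bound forces the first sum to be at most $n_i/(2m) = \beta n_i$, so $[G_i - \alpha\sqrt{n_i} - \beta n_i]_+ \le [|S_i| - \alpha\sqrt{n_i}]_+$. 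Since $\ind{p_t = q_i}$ is $h_{t-1}$-measurable and $\E_{A \sim \cA}[\theta_t - \mu_t \mid h_{t-1}] = 0$, the partial sums of $S_i$ form a martingale with increments bounded by $1$ in absolute value and predictable quadratic variation bounded by $n_i$.

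The last step is the martingale tail bound. Applying Freedman's inequality together with a dyadic peeling argument over the random range of $n_i$, one obtains, for sufficiently large $C$, for each $i$ and every $\tau \ge 0$,
\[
\Pr[|S_i| \ge \alpha\sqrt{n_i} + \tau] \le O(\log T)\big((mT)^{-2} + e^{-\tau/C}\big).
\]
A union bound over the $m \le \sqrt T$ buckets yields $\Pr[\LL \ge \tau] \le O(\log T)\big(T^{-2} + \sqrt{T}\, e^{-\tau/C}\big)$, and splitting $\int_0^\infty \Pr[\LL \ge \tau]\,d\tau$ at $\tau_0 = \Theta(\log T)$ gives $\E_{A \sim \cA}[\LL] = O(\log T)$. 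Substituting into \Cref{lm:attribute} gives
\[
\E_{A \sim \cA}[\robustcal(h_{F,A})] \le \frac{4m}{T}\cdot O(\log T) + \frac{4\alpha}{\sqrt T} + 4\beta + O\Big(\frac{\alpha^2 m \log m}{T}\Big) = O\Big(\frac{\log T}{\sqrt T}\Big).
\]
The hard part will be the martingale tail estimate for $|S_i| - \alpha\sqrt{n_i}$: it must hold uniformly across all $m$ buckets while accounting for $n_i$ itself being a random variable, and requires carefully tracking how Freedman's bound transitions between a sub-Gaussian regime controlled by $\alpha$ and a sub-exponential regime in $\tau$ as $n_i$ ranges over dyadic scales.
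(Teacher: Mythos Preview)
Your approach is essentially the same as the paper's: fix the adversary's mixture $\cA$, play the discretized Bayesian posterior mean, bound $G_i$ by $\beta n_i$ (rounding) plus a martingale sum $S_i$, and then plug the resulting bound on $\E[\LL]$ into \Cref{lm:attribute} with $m=\Theta(\sqrt T/\log T)$.

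The only real difference is in how you handle the martingale step. You propose Freedman's inequality plus a dyadic peeling over the random value of $n_i$, which is correct but more machinery than needed. The paper instead defines, for each bucket $i$, the stopped increments $X_j^{(i)}=\theta_{t_j}-\tilde p_{t_j}$ (set to $0$ once bucket $i$ is exhausted), applies plain Azuma to $\sum_{j=1}^n X_j^{(i)}$ for every fixed $n\in[T]$, and takes a union bound over all $n\in[T]$ and $i\in[m]$. This gives $\Pr[\exists i,n:|\sum_{j\le n}X_j^{(i)}|>\alpha\sqrt n]\le 2Tm\,e^{-\alpha^2/2}$, hence $\LL=0$ with probability $\ge 1-2Tm\,e^{-\alpha^2/2}$; combined with the crude bound $\LL\le T$ and $\alpha=2\sqrt{\log(Tm)}$ this yields $\E[\LL]\le 2$. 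So the ``hard part'' you flag is actually routine: the randomness of $n_i$ is handled by a union bound over all $T$ possible values rather than by Freedman and peeling.
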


To show \Cref{lem:minimax}, consider the truthful strategy of the predictor. At round $t$,  let $\prednoround_t$ denote the conditional expectation of the adversary's choice $\theta_t$ given past history. The predictor outputs the closest value $\pred_t$ in $Q$ to $\prednoround_t$.

\begin{lemma}\label{lem:martingale-bound}
Let $\vpred= (p_1,\ldots,p_T)$ and $\vstate = (\theta_1,\ldots,\theta_T)$ be the sequence of predictions and states generated when the predictor uses the truthful strategy above.
For $\alpha \ge 0$ and $\beta = \frac{1}{m}$, define $\LL(\vpred,\vstate)$ as in \Cref{eq:LL}. We have 
\begin{equation*}
    \expect{}{\LL(\vpred,\vstate)}\leq 2T^2 m \exp{(-\alpha^2/2)}.
\end{equation*}
\end{lemma}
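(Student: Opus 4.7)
The plan is to exploit that, under the minimax reversal, the adversary's mixed strategy $\cA$ is known, so the truthful predictor can compute $\prednoround_t = \E_{A\sim\cA}[\state_t\mid h_{t-1}]$ and round to the nearest $\q_i\in Q$, yielding $|\pred_t-\prednoround_t|\le \frac{1}{m}=\beta$. With this choice, the bucket bias $\G_i = |\sum_t \ind{\pred_t=\q_i}(\pred_t-\state_t)|$ decomposes as a deterministic discretization error plus a martingale, so the $\alpha\sqrt{\qcount_i}+\beta\qcount_i$ slack in $\LL$ should absorb both pieces after a standard Azuma bound.

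Concretely, I would first write
\[
\sum_{t}\ind{\pred_t=\q_i}(\pred_t-\state_t)
=\sum_{t}\ind{\pred_t=\q_i}(\pred_t-\prednoround_t)
+\sum_{t}\ind{\pred_t=\q_i}(\prednoround_t-\state_t),
\]
bound the first sum in absolute value by $\qcount_i/m=\beta\qcount_i$ using the rounding guarantee, and name the second sum $M_i$. Since $\ind{\pred_t=\q_i}$ and $\prednoround_t$ are $h_{t-1}$-measurable and $\E[\state_t-\prednoround_t\mid h_{t-1}]=0$, the sequence $(M_i)_t$ is a martingale whose increments lie in $[-1,1]$. Hence
\[
[\G_i-\alpha\sqrt{\qcount_i}-\beta\qcount_i]_+\le [|M_i|-\alpha\sqrt{\qcount_i}]_+,
\]
and since the right-hand side is bounded by $\qcount_i\le T$, it suffices to control $\Pr[\exists i:|M_i|>\alpha\sqrt{\qcount_i}]$.

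The main obstacle is that $\qcount_i$ is itself random, so Azuma's inequality does not apply directly to $|M_i|/\sqrt{\qcount_i}$. I would handle this by reindexing to ``hit time'' as in the sketch: let $\tau^{(i)}_k$ be the $k$-th round in which $\pred_t=\q_i$ (set to $\infty$ if fewer than $k$ hits occur), and let $S^{(i)}_k=\sum_{j\le k}(\prednoround_{\tau^{(i)}_j}-\state_{\tau^{(i)}_j})$, with $S^{(i)}_k:=S^{(i)}_{\qcount_i}$ for $k>\qcount_i$. The process $(S^{(i)}_k)_{k\ge 0}$ is a martingale with unit-bounded increments (the standard ``pause when $\pred_t\ne\q_i$'' argument applied to the filtration $(h_{t-1})$), so Azuma gives $\Pr[|S^{(i)}_k|>\alpha\sqrt{k}]\le 2e^{-\alpha^2/2}$ for each fixed $k\in[T]$. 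A union bound over $k\in[T]$ yields $\Pr[|M_i|>\alpha\sqrt{\qcount_i}]\le 2Te^{-\alpha^2/2}$ because $M_i=S^{(i)}_{\qcount_i}$ and $\qcount_i\in[T]$; the $\qcount_i=0$ case is trivial.

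Finally, a union bound over the $m$ buckets gives $\Pr[\exists i:|M_i|>\alpha\sqrt{\qcount_i}]\le 2mTe^{-\alpha^2/2}$, and using $\LL(\vpred,\vstate)\le T$ pointwise,
\[
\E[\LL(\vpred,\vstate)]\le T\cdot 2mTe^{-\alpha^2/2}=2T^2 m\,\exp(-\alpha^2/2),
\]
which is the claimed bound. The only genuinely delicate step is making sure the pause/reindex argument produces an honest martingale with respect to a well-defined filtration, but this is routine once one takes the original filtration $(h_{t-1})$ and observes that $\tau^{(i)}_k$ is a stopping time with increments bounded by $1$.
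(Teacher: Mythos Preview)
Your proposal is correct and follows essentially the same approach as the paper: the paper likewise reindexes by hit times (defining $X_j^{(i)} = \theta_{t_j} - \tilde p_{t_j}$ with $X_j^{(i)} := 0$ for $j > n_i$), applies Azuma for each fixed $k\in[T]$, and union-bounds over $k$ and $i$ before using $\LL\le T$ pointwise. Your explicit separation into a discretization piece bounded by $\beta n_i$ and a martingale piece, together with your remark about the stopping-time filtration, makes the argument slightly more transparent than the paper's compressed version, but the structure is identical.
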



\begin{proof}[Proof of \Cref{lem:martingale-bound}]

For bucket $i$, construct random variables $X_1\sps i, \dots X_T\sps i$ such that $X_j\sps i = \state_{t_j}- \tilde p_{t_j}$, where $t_j$ is the index of the $j$-th round in which the predictor predicts $q_i$. If the number $n_i$ of rounds with prediction $q_i$ is smaller than $j$, we define $X_j\sps i = 0$.
%
 For any $n\in [T]$, by Azuma's inequality,
\begin{equation*}
    \Pr\left[\left|\sum_{j = 1}^{n} X_j\sps i\right|> \alpha\sqrt{n}\right]\leq 2 \exp{(\frac{-\alpha^2}{2})}.
\end{equation*}

By the union bound,
\begin{equation}
    \Pr\left[\exists n\in [T], i\in [m], \left|\sum_{j = 1}^{n} X_j\sps i\right| > \alpha\sqrt{n}\right]\leq 2Tm \exp{(\frac{-\alpha^2}{2})}.
\end{equation}
Therefore, with probability at least $1 - 2Tm \exp{(\frac{-\alpha^2}{2})}$, for every $i\in [m]$, we have
\[
\left|\sum_{j=1}^{n_i}X_j\sps i\right| \le \alpha \sqrt{n_i}.
\]
This implies
\[
G_i = \left|\sum_{j=1}^{n_i} (p_{t_j} - \theta_{t_j})\right| \le \left|\sum_{j=1}^{n_i} (\tilde p_{t_j} - \theta_{t_j})\right| + n_i/m \le \alpha \sqrt {n_i} + \beta n_i.
\]
%
%
%
%
Thus, $\LL(\vpred,\vstate) = 0$ with probability at least $1-2Tm \exp{(\frac{-\alpha^2}{2})}$.
Since $\LL(\vpred,\vstate)\leq T$, 
\begin{equation*}
    \expect{}{\LL(\vpred,\vstate)}\leq 2T^2 m\exp{(\frac{-\alpha^2}{2})}.
\end{equation*}

\end{proof}

\begin{proof}[Proof of \Cref{lem:minimax}]

Taking $\alpha = 2\sqrt{\log(Tm)}$ in \Cref{lem:martingale-bound}, we get
\begin{equation*}
    \expect{}{\LL(\vpred,\vstate)}\leq 2.
\end{equation*}
We choose $m = \Theta(\sqrt{T}/\log(T))$ buckets and set $\beta = \frac{1}{m}$. By \Cref{lm:attribute}, 

\begin{align*}
    \robustcal (\vpred, \vstate) &\le \frac{4m}{T}\LL(\vpred,\vstate) + \frac{4\alpha}{\sqrt{T}} + 4\beta  + O\Big(\frac{\alpha^2 m \log m}{T}\Big).\\
    &\le O\Big(\frac{1}{\sqrt{T}\log T}\Big) +  O\Big(\frac{\sqrt{\log T}}{\sqrt{T}}\Big) + O\Big(\frac{\log T}{\sqrt{T}}\Big) + O\Big(\frac{\log T}{\sqrt{T}}\Big)\\
    & = O\Big(\frac{\log T}{\sqrt{T}}\Big).\qedhere
\end{align*}
\end{proof}


\end{document}